\def\eqref#1{equation~\ref{#1}}
\def\1{\bm{1}}
\def\vx{{\bm{x}}}
\def\vy{{\bm{y}}}
\def\vz{{\bm{z}}}
\DeclareMathAlphabet{\mathsfit}{\encodingdefault}{\sfdefault}{m}{sl}
\SetMathAlphabet{\mathsfit}{bold}{\encodingdefault}{\sfdefault}{bx}{n}
\def\gA{{\mathcal{A}}}
\def\gF{{\mathcal{F}}}
\def\gM{{\mathcal{M}}}
\def\gN{{\mathcal{N}}}
\def\gS{{\mathcal{S}}}
\def\gX{{\mathcal{X}}}
\newcommand{\qwhittle}{\texttt{Q-Whittle}\xspace}
\newcommand{\qwhittleLFA}{\texttt{Q-Whittle-LFA}\xspace}
\newcommand{\qwhittleNFA}{\texttt{Neural-Q-Whittle}\xspace}
\newcommand{\cX}{\mathcal{X}}
\newtheorem{assumption}{Assumption}
\newtheorem{lemma}{Lemma}
\newtheorem{remark}{Remark}
\newtheorem{theorem}{Theorem}
\newtheorem{defn}{Definition}
\title{Finite-Time Analysis of Whittle Index based Q-Learning for Restless Multi-Armed Bandits
with Neural Network Function Approximation}
\author{%
  Guojun~Xiong, ~Jian~Li \\
  Stony Brook University\\
  \texttt{\{guojun.xiong,jian.li.3\}@stonybrook.edu} \\
  % examples of more authors
  % \And
  % Coauthor \\
  % Affiliation \\
  % Address \\
  % \texttt{email} \\
  % \AND
  % Coauthor \\
  % Affiliation \\
  % Address \\
  % \texttt{email} \\
  % \And
  % Coauthor \\
  % Affiliation \\
  % Address \\
  % \texttt{email} \\
  % \And
  % Coauthor \\
  % Affiliation \\
  % Address \\
  % \texttt{email} \\
}
\begin{document}

\maketitle

\begin{abstract}

Whittle index policy is a heuristic to the intractable restless multi-armed bandits (RMAB) problem. Although it is provably asymptotically optimal, finding Whittle indices remains difficult.  In this paper, we present \qwhittleNFA, a Whittle index based Q-learning algorithm for RMAB with neural network function approximation, which is an example of  nonlinear two-timescale stochastic approximation with Q-function values updated on a faster timescale and Whittle indices on a slower timescale.  %where Q-function values and Whittle indices are coupled with the former operating on a faster timescale and the later on a slower timescale.  
Despite the empirical success of deep Q-learning, the non-asymptotic convergence rate of \qwhittleNFA, which couples neural networks with two-timescale Q-learning largely remains unclear.  This paper provides a finite-time analysis of \qwhittleNFA, where data are generated from a Markov chain, and Q-function is approximated by a ReLU neural network. Our analysis leverages a Lyapunov drift approach to capture the evolution of two coupled parameters, and the nonlinearity in value function approximation further requires us to characterize the approximation error.  Combing these provide \qwhittleNFA  with $\mathcal{O}(1/k^{2/3})$ convergence rate, where $k$ is the number of iterations.

%largely remains unknown due to the nonlinearity in value function approximation and the intrinsic two-timescaled  

%heuristic with guaranteed asymptotic optimality performance for restless  bandits problem.  However, finding Whittle indices is typically intractable.  In this paper, we study the Whittle index based Q-learning for restless bandits with  neural network function approximation, shorted as \qwhittleNFA.   We treat it as a nonlinear two-time-scale stochastic approximation (2TSA), where Q-function values are updated at a faster time-scale and Whittle indices are updated at a slower time-scale.  Our main contribution is to provide the first finite-time convergence of this 2TSA with Markovian observations, which differs from prior work that either assumes i.i.d. observations or requires an additional projection step for parameter updates in the algorithm.  We take the Lyapunov drift argument used in ODE for our analysis and sophisticately design the Lyapunov functions  to tackle the convoluted neural network functions used to approximate Q-functions. 
\end{abstract}

\section{Introduction}\label{sec:intro}

We consider the restless multi-armed bandits (RMAB) problem \cite{whittle1988restless}, where the decision maker (DM) repeatedly activates $K$ out of $N$ arms at each decision epoch. Each arm is described by a Markov decision process (MDP) \cite{puterman1994markov}, and evolves stochastically according to two different transition kernels, depending on whether the arm is activated or not. Rewards are generated with each transition. Although RMAB has been widely used to study constrained sequential decision making problems \cite{bertsimas2000restless,meshram2016optimal,borkar2017index,yu2018deadline,killian2021beyond,mate2021risk,mate2020collapsing,jiang2023online}, it is notoriously intractable due to the explosion of state space \cite{papadimitriou1994complexity}. A celebrated heuristic is the Whittle index policy \cite{whittle1988restless}, which computes the Whittle index for each arm given its current state as the cost to pull the arm.  Whittle index policy then activates the $K$ highest indexed arms at each decision epoch, and is provably asymptotically optimal \cite{weber1990index}.

%Unfortunately, finding the Whittle index is typically intractable, and Whittle indices of many practical problems remain unknown.  %Exacerbating the challenge is the fact that
However, the computation of Whittle index requires full knowledge of the underlying MDP associated with each arm, which is often unavailable in practice. To this end, many recent efforts have focused on learning Whittle indices for making decisions in an online manner.  First, model-free reinforcement learning (RL) solutions have been proposed  \cite{borkar2018reinforcement,fu2019towards,wang2020restless,biswas2021learn,killian2021q,xiong2022reinforcement,xiong2022Nips,xiong2022reinforcementcache,xiong2022indexwireless,avrachenkov2022whittle}, among which \cite{avrachenkov2022whittle} developed a Whittle index based Q-learning algorithm, which we call \qwhittle for ease of exposition, and provided the first-ever rigorous asymptotic analysis.  However, \qwhittle suffers from slow convergence since it only updates the Whittle index of a specific state when that state is visited. In addition, \qwhittle needs to store the Q-function values for all state-action pairs, which limits its applicability only to problems with small state space.  Second, deep RL methods have been leveraged to predict Whittle indices via training neural networks \cite{nakhleh2021neurwin,nakhleh2022deeptop}. Though these methods are capable of dealing with large state space, there is no asymptotic or finite-time performance guarantee.  Furthermore, training neural networks requires to tuning hyper-parameters. This introduces an additional layer of complexity to predict Whittle indices.  Third, to address aforementioned deficiencies, \cite{xiong2023whittle} proposed \qwhittleLFA by coupling \qwhittle %a Whittle index based Q-learning algorithm 
with linear function approximation and provided a finite-time convergence analysis.  One key limitation of \qwhittleLFA is the unrealistic assumption that all data used in \qwhittleLFA are sampled i.i.d. from a fixed stationary distribution.  %However, \cite{xiong2023whittle} assumed that all data used in \qwhittleLFA are sampled  i.i.d. from a fixed stationary distribution, which is unrealistic and fails to capture the Markovian property of Q-learning.  

%In this paper, we aim to tackle the aforementioned limitations.  Specifically, inspired by the empirical success of deep Q-learning in numerous applications, 
To tackle the aforementioned limitations and inspired by the empirical success of deep Q-learning in numerous applications, we develop \qwhittleNFA, a Whittle index based Q-learning algorithm with \textit{neural network function approximation under Markovian  observations}.  Like \cite{avrachenkov2022whittle,xiong2023whittle}, the updates of Q-function values and Whittle indices form a two-timescale stochastic approximation (2TSA) with the former operating on a faster timescale and the later on a slower timescale.  Unlike \cite{avrachenkov2022whittle,xiong2023whittle}, our \qwhittleNFA uses a deep neural network with the ReLU activation function to approximate the Q-function.  {However, Q-learning with neural network function approximation can in general diverge \cite{achiam2019towards},} and the theoretical convergence of Q-learning with neural network function approximation has been limited to special cases such as fitted Q-iteration with i.i.d. observations \cite{fan2020theoretical}, which fails to capture the practical setting of Q-learning with neural network function approximation.  % The theoretical non-asymptotic convergence rate of Q-learning with neural network function approximation under Markovian observations remains largely an open problem.  

In this paper, we study the non-asymptotic convergence of \qwhittleNFA with data generated from a Markov decision process. Compared with recent theoretical works for Q-learning with neural network function approximation \cite{cai2023neural,fan2020theoretical,xu2020finite}, our \qwhittleNFA involves a two-timescale update between two coupled parameters, i.e., Q-function values and Whittle indices.  This renders existing finite-time analysis in \cite{cai2023neural,fan2020theoretical,xu2020finite} not applicable to our \qwhittleNFA due to the fact that \cite{cai2023neural,fan2020theoretical,xu2020finite} only contains a single-timescale update on Q-function values.  Furthermore, \cite{cai2023neural,fan2020theoretical,xu2020finite} required an additional projection step for the update of parameters of neural network function so as to guarantee the boundedness between the unknown parameter at any time step with the initialization.  %This projection step is used to stabilize the learning process under Markovian observations, which in some cases is impractical.
This in some cases is impractical.
Hence, a natural question that arises is 
\vspace{-0.1in}
\begin{tcolorbox}[colback=white!5!white,colframe=white!75!white]
\textit{Is it possible to provide a non-asymptotic convergence rate analysis of \qwhittleNFA with two coupled parameters updated in two timescales under Markovian observations without the extra projection step?}
\end{tcolorbox}
\vspace{-0.05in}
%We answer ``yes'' to this question by proving that \qwhittleNFA achieves a convergence in expectation at a rate $\mathcal{O}(1/k^{2/3})$, where $k$ is the number of iterations. Our contributions are: 

The theoretical convergence guarantee of two-timescale Q-learning with neural network function approximation under Markovian observations remains largely an open problem, and in this paper, we provide an affirmative answer to this question.  Our main contributions are summarized as follows: 

$\bullet$ We propose \qwhittleNFA, a novel Whittle index based Q-learning algorithm with neural network function approximation for RMAB.  Inspired by recent work on TD learning \cite{srikant2019finite}  and Q-learning \cite{chen2019performance} with linear function approximation, our \qwhittleNFA removes the additional impractical projection step in the neural network function parameter update.

$\bullet$ We establish the first finite-time analysis of \qwhittleNFA under Markovian observations. %, i.e., the data are generated from a MDP.  
Due to the two-timescale nature for the updates of two coupled parameters (i.e., Q-function values and Whittle indices) in \qwhittleNFA, we focus on the convergence rate of these parameters rather than the convergence rate of approximated Q-functions as in \cite{cai2023neural,fan2020theoretical,xu2020finite}.  Our key technique is to view \qwhittleNFA as a 2TSA for finding the solution of suitable nonlinear equations.  %Different from recent works on the finite-time analysis on a general 2TSA \cite{doan2021finite} and a 2TSA with linear function approximation under i.i.d. observations \cite{xiong2023whittle},  
Different from recent works on finite-time analysis of a general 2TSA \cite{doan2021finite} or with linear function approximation \cite{xiong2023whittle}, 
the nonlinear parameterization of Q-function in \qwhittleNFA under Markovian observations imposes significant difficulty in finding the global optimum of the corresponding nonlinear equations.  To mitigate this, we first approximate the original neural network function with a collection of local linearization and focus on finding a surrogate Q-function in the neural network function class that well approximates the optimum.  Our finite-time analysis then requires us to consider two Lyapunov functions that carefully characterize the coupling between iterates of Q-function values and Whittle indices, with one Lyapunov function defined with respect to the true neural network function, and the other defined with respect to the locally linearized neural network function.  We then characterize the errors between these two Lyapunov functions.  Putting them together, we prove that \qwhittleNFA  achieves a convergence in expectation at a rate $\mathcal{O}(1/k^{2/3})$, where $k$ is the number of iterations. 

$\bullet$ Finally, we conduct experiments to validate the convergence performance of \qwhittleNFA, and verify the sufficiency of our proposed condition for the stability of  \qwhittleNFA.

% There exist two open challenges when characterizing the proposed \qwhittleNFA. The first challenge is how to remove the project step for parameter update inside the procedure of Q-learning  with neural network function approximation, which is needed to guarantee the boundness between the unknown parameter at any time step with the initialized parameter in all existing work \cite{cai2019neural,xu2020finite}. Second, the proposed \qwhittleNFA involves the update of an additional unknown parameter, i.e., Whittle index. Characterizing the convergence of \qwhittleNFA is fundamentally different with conventional work of Q-learning with function approximations \cite{chen2019performance,cai2019neural,xu2020finite}, which makes the existing analysis not applicable to our considered problem. There is an additional challenge caused by Q-learning with neural network function approximation compared with linear function approximation, i.e., it is difficult to claim the optimal solutions due to the highly complexity of the neural network functions. This requires to work on the surrogate approximated neural functions with local linearization first and then add the approximation error back to the approximated solution.  

\section{Preliminaries}

%\subsection{Restless Multi-Armed Bandits and Whittle Index Policy}
\textbf{RMAB.} We consider an infinite-horizon average-reward RMAB with each arm $n\in\gN$ described by a unichain MDP \cite{puterman1994markov} $\gM_n:=(\gS, \gA, P_n, r_n)$, where $\gS$ is the state space with cardinality $S<\infty$, $\gA$ is the action space with cardinality $A$, $P_n(s^\prime|s, a)$ is the transition probability of reaching state $s^\prime$ by taking action $a$ in state $s$, and $r_n(s,a)$ is the reward associated with state-action pair $(s,a)$.  At each time slot $t$, the DM activates $K$ out of $N$ arms. % Using the standard RMAB terminology, 
Arm $n$ is ``active'' at time $t$ when it is activated, i.e., $A_n(t)=1$; otherwise, arm $n$ is ``passive'', i.e., $A_n(t)=0$. Let $\Pi$ be the set of all possible policies for RMAB, and $\pi\in\Pi$ is a feasible policy, satisfying $\pi: \gF_t \mapsto \gA^{N}$, where $\gF_t$ is the sigma-algebra generated by random variables $ \{S_n(h), A_n(h): \forall n\in\gN, h \leq t \}$.  The objective of the DM is to maximize the expected long-term average reward subject to an instantaneous constraint that only $K$ arms can be activated at each time slot, i.e., 
\begin{align}\label{eq:obj-cons}
\text{RMAB:}\quad\max_{\pi\in\Pi} & ~~\liminf_{T\rightarrow \infty} \frac{1}{T} \mathbb{E}_\pi \left(\sum_{t=0}^{T}\sum_{n=1}^{N}  r_n(t)\right), \quad 
\mbox{s.t.}~\sum_{n=1}^{N} A_n(t)=K,\quad \forall t.
\end{align}
\textbf{Whittle Index Policy.} It is well known that RMAB~(\ref{eq:obj-cons}) suffers from the curse of dimensionality \cite{papadimitriou1994complexity}.  To address this challenge, Whittle \cite{whittle1988restless} proposed an index policy through decomposition.  Specifically, Whittle relaxed the constraint in~(\ref{eq:obj-cons}) to be satisfied on average and obtained a unconstrained problem: $\max_{\pi\in\Pi} \liminf_{T\rightarrow\infty}\frac{1}{T}\mathbb{E}_\pi\sum_{t=1}^T\sum_{n=1}^{N} \{r_n(t)+\lambda(1- A_n(t))\}$, {where $\lambda$ is the Lagrangian multiplier associated with the constraint.}  The key observation of Whittle is that this problem can be decomposed and its solution is obtained by combining solutions of $N$ independent problems via solving the associated dynamic programming (DP): $V_n(s)=\max_{a\in\{0,1\}}Q_n(s,a), \forall n\in\mathcal{N},$ where 
\begin{align}\label{eq:Q_value}
Q_n(s,a)\!+\!{\beta}\!=\!a\Big(\!r_n(s,\!a)\!+\!\!\sum_{s^\prime}p_n(s^\prime |s,\!1)V_n(s^\prime)\!\Big)\!\!+\!(1\!-\!a)\Big(\!r_n(s,\!a)\!\!+\!\lambda\!\!+\!\!\sum_{s^\prime}p_n(s^\prime |s,\!0)V_n(s^\prime)\!\Big), 
\end{align} 
where $\beta$ is unique and equals to the maximal long-term average reward of the unichain MDP, and $V_n(s)$ is unique up to an additive constant, both of which depend on the Lagrangian multiplier $\lambda.$ The optimal decision $a^*$ in state $s$ then is the one which maximizes the right hand side of the above DP. The Whittle index associated with state $s$ is defined as the value $\lambda_n^*(s)\in\mathbb{R}$ such that actions  $0$ and $1$ are equally favorable in state $s$ for arm $n$ \cite{avrachenkov2022whittle,fu2019towards}, satisfying
\begin{align}\label{eq:Whittle_index}
\lambda_n^*(s) := r_n(s,1)+\sum_{s^\prime}p_n(s^\prime |s,1)V_n(s^\prime) -r_n(s,0)-\sum_{s^\prime}p_n(s^\prime |s,0)V_n(s^\prime).
\end{align}
Whittle index policy then activates $K$ arms with the largest Whittle indices at each time slot. Additional discussions are provided in Section~\ref{sec:whittle-app} in supplementary materials.

%\subsection{Q-Learning for Whittle Index}
\textbf{Q-Learning for Whittle Index.}
Since the underlying MDPs are often unknown, \cite{avrachenkov2022whittle} proposed \qwhittle, a tabular Whittle index based Q-learning algorithm, where the updates of Q-function values and Whittle indices form a 2TSA, with the former operating on a faster timescale for a given $\lambda_n$ and the later on a slower timescale.  Specifically, the Q-function values for $\forall n\in\gN$ are updated as
\begin{align}\label{eq: Convention_Q_update}
     Q_{n,k+1}(s,a)&:=Q_{n,k}(s,a)+\alpha_{n,k} \mathds{1}_{\{S_{n,k}=s,A_{n,k}=a\}}\Big(r_n(s,a)+(1-a)\lambda_{n,k}(s)\nonumber\\
     &\qquad\qquad\qquad\qquad\qquad+\max_{a}Q_{n,k}(S_{n,k+1},a)-I_n(Q_k)-Q_{n,k}(s,a)\Big),
\end{align}
{where $I_n(Q_k)=\frac{1}{2S}\sum_{s\in\gS}(Q_{n,k}(s,0)+Q_{n,k}(s,1))$} is standard in the relative Q-learning for long-term average MDP setting \cite{abounadi2001learning}, which differs significantly from the discounted reward setting %\footnote{For interested readers on difference of Q-learning  for the long term average MDPs and  the discounted reward setting, please refer to \cite{abounadi2001learning} for details.}
\cite{puterman1994markov,abounadi2001learning}.  $\{\alpha_{n,k}\}$ is a step-size sequence satisfying $\sum_k\alpha_{n,k}=\infty$ and $\sum_k\alpha_{n,k}^2<\infty$.

Accordingly, the Whittle index is updated as
\begin{align}\label{eq:convention_lambda_update}
    \lambda_{n,k+1}(s)=\lambda_{n,k}(s)+\eta_{n,k}(Q_{n,k}(s,1)-Q_{n,k}(s,0)),
\end{align}
with the step-size sequence $\{\eta_{n,k}\}$ satisfying $\sum_k\eta_{n,k}=\infty$, $\sum_k\eta_{n,k}^2<\infty$ and $\eta_{n,k}=o(\alpha_{n,k})$.  The coupled iterates~(\ref{eq: Convention_Q_update}) and~(\ref{eq:convention_lambda_update}) form a 2TSA, and \cite{avrachenkov2022whittle} provided an asymptotic convergence analysis.

%%%%%%%%%%%%%%%%%%%%%%%%%%%%%%%%%%%%%%%%%%%%%%%%%%%%%%%%%%%%%%%%
%Move this remark to Section~\ref{sec:whittle-app} in supplementary materials after you discussed Whittle index policy, and then Q-Whittle. %%%%%%%%%%%%%%%%%%%%%%%%%%%%%%%%%%%%%%%%%%%%%%%%%%%%%%%

%\begin{remark} 
%\cite{fu2019towards} proposed a Q-learning algorithm for Whittle index under the discounted setting, which lacks of convergence guarantees.  \cite{biswas2021learn} approximated Whittle index using the difference of $Q(s,1)-Q(s,0)$ for any state $s$, which is not guaranteed to converge to the true Whittle index in general scenarios.  To our best knowledge, the \qwhittle in (\ref{eq: Convention_Q_update})-(\ref{eq:convention_lambda_update}) proposed by  \cite{avrachenkov2022whittle} is the first algorithm with a rigorous asymptotic analysis. Latter on, \cite{xiong2023whittle} show the finite time convergence performance under the \qwhittle setting of \cite{avrachenkov2022whittle} with linear function approximation. However, the underlying assumption in \cite{avrachenkov2022whittle,xiong2023whittle}  is the samples are drawn i.i.d per iteration, which is not the case in reality since the samples of Q-learning are drawn according to the underlying Markov decision process. Till now,  the  finite-time convergence rate of \qwhittle under the more challenging Markovian setting remains to be an open problem. 
%\end{remark}

\begin{algorithm}[t]
\caption{\qwhittleNFA: Neural Q-Learning for Whittle Index}
\label{Algorithm1}
\begin{algorithmic}[1]
\STATE \textbf{Input}: $\pmb{\phi}(s,a)$ for $\forall s\in\mathcal{S}, a\in\gA$, and learning rates $\{\alpha_k\}_{k=1,\ldots, T}$, $\{\eta_k\}_{k=1,\ldots,T}$
\STATE \textbf{Initialization}: $b_r\sim\text{Unif}(\{-1,1\}), \mathbf{w}_{r,0}\sim\mathcal{N}(\pmb{0}, \mathbf{I}_d/d),\forall r\in[1,m]$ and
$\lambda(s)=0$, $\forall s\in\mathcal{S}$
\FOR{$s\in\mathcal{S}$}
\FOR{ $k=1, \ldots, T$}
\STATE Sample $(S_k, A_k, S_{k+1})$ according to the $\epsilon$-greedy policy;
\STATE $\Delta_k\!=\!r(S_k,A_k)+(1-A_k)\lambda_k(s)+\max_a f(\pmb{\theta}_k;\pmb{\phi}(S_{k+1},a))-I(\pmb{\theta}_k)\!-\!f(\pmb{\theta}_k;\pmb{\phi}(S_k,A_k))$;
\STATE $\pmb{\theta}_{k+1}=\pmb{\theta}_{k}+\alpha_k\Delta_k\nabla_{\pmb{\theta}}f(\pmb{\theta}_k;\pmb{\phi}(S_k,A_k))$;
\STATE  $\lambda_{k+1}(s)=\lambda_{k}(s)+\eta_k(f( \pmb{\theta}_k;\pmb{\phi}(s,1))-f(\pmb{\theta}_k;\pmb{\phi}(s,0) )$;
\ENDFOR
\ENDFOR
\STATE \textbf{Return:} $\lambda(s), \forall s\in\mathcal{S}$.
\end{algorithmic}
\end{algorithm}

\section{Neural Q-Learning for Whittle Index}
A closer look at~(\ref{eq:convention_lambda_update}) reveals that \qwhittle only updates the Whittle index of a specific state when that state is visited. This makes \qwhittle suffers from slow convergence. In addition, \qwhittle needs to store the Q-function values for all state-action pairs, which limits its applicability only to problems with small state space.  To address this challenge and inspired by the empirical success of deep Q-learning, we develop \qwhittleNFA through coupling \qwhittle with neural network function approximation by using low-dimensional feature mapping and leveraging the strong representation power of neural networks. For ease of presentation, we drop the subscript $n$ in~(\ref{eq: Convention_Q_update}) and~(\ref{eq:convention_lambda_update}), and discussions in the rest of the paper apply to any arm $n\in\gN$.

Specifically, given a set of basis functions $\phi_\ell: \gS\times \gA\mapsto \mathbb{R}, \forall \ell=1,\cdots, d$ with $d\ll SA$, the approximation of Q-function ${Q}_{\pmb{\theta}}(s,a)$ parameterized by a unknown weight vector $\pmb{\theta}\in\mathbb{R}^{md}$, is given by ${Q}_{\pmb{\theta}}(s,a)=f(\pmb{\theta};\pmb{\phi}(s,a)), ~\forall s\in\gS, a\in\gA,$ where $f$ is a nonlinear neural network function %\footnote{Notice that the function $f$ may take different forms, which represent different function approximations.  That linear function approximation is just a special case for this function approximations, where $f(\pmb{\theta};\pmb{\phi}(s,a))$ is linear w.r.t. $\pmb{\theta}$, i.e., $f(\pmb{\theta};\pmb{\phi}(s,a))=\pmb{\phi}(s,a)^{\intercal} \pmb{\theta}, ~\forall s\in\gS, a\in\gA$. It then reduces  to the Q-learning whittle index with linear function approximation in \cite{xiong2023whittle}. }
parameterized by $\pmb{\theta}$ and $\pmb{\phi}(s,a)$, with  $\pmb{\phi}(s,a)=(\phi_1(s,a), \cdots,\phi_d(s,a))^\intercal$. The feature vectors are assumed to be linearly independent and are normalized so that $\|\pmb{\phi}(s,a)\|\leq 1, \forall s\in\gS, a\in\gA$ .
In particular, we parameterize the Q-function by using a two-layer neural network \cite{cai2023neural,xu2020finite}
\begin{align}\label{eq:neural_Q}
    f(\pmb{\theta}; \pmb{\phi}(s,a)):=\frac{1}{\sqrt{m}}\sum_{r=1}^m b_r\sigma(\mathbf{w}_r^\intercal\pmb{\phi}(s,a)),
\end{align}
where $\pmb{\theta}=(b_1,\ldots,b_m,\mathbf{w}_1^\intercal, \ldots, \mathbf{w}_m^\intercal)^\intercal$ with $b_r\in\mathbb{R}$ and $\mathbf{w}_r\in\mathbb{R}^{d\times 1}, \forall r\in[1,m]$. $b_r, \forall r$ are uniformly initialized in $\{-1, 1\}$ and $w_r, \forall r$ are initialized as a zero mean Gaussian distribution according to $\mathcal{N}(\pmb{0}, \mathbf{I}_d/d)$. During training process, only $\mathbf{w}_r, \forall r$ are updated while $b_r, \forall r$ are fixed as the random initialization. Hence, we use $\pmb{\theta}$ and $\mathbf{w}_r, \forall r$ interchangeably throughout this paper.
$\sigma(x)=\max(0,x)$ is the rectified linear unit (ReLU) activation function\footnote{The finite-time analysis of Deep Q-Networks (DQN) \cite{cai2023neural,fan2020theoretical,xu2020finite} and references therein focuses on the ReLU activation function, as it has certain properties that make the analysis tractable. ReLU is piecewise linear and non-saturating, which can simplify the mathematical analysis.
Applying the same analysis to other activation functions like the hyperbolic tangent (tanh) could be more complex, which is out of the scope of this work.}.

Given (\ref{eq:neural_Q}), we can rewrite the Q-function value updates in (\ref{eq: Convention_Q_update}) as
\begin{align} \label{eq:theta_update1}
    \pmb{\theta}_{k+1}&=\pmb{\theta}_k+\alpha_k \Delta_k\nabla_{\pmb{\theta}}f(\pmb{\theta}_k;\pmb{\phi}(S_k, A_k)),
\end{align}
with  $\Delta_k$ being the temporal difference (TD) error {defined as
$\Delta_k:=r(S_k,A_k)+(1-A_k)\lambda_k(s)-I( \pmb{\theta}_k)+\max_{a}f(\pmb{\theta}_k; \pmb{\phi}(S_{k+1},a))-f(\pmb{\theta}_k; \pmb{\phi}(S_{k},A_k))$,} where $I(\pmb{\theta}_k)=\frac{1}{2S}\sum_{s\in\gS}[f(\pmb{\theta}_k; \pmb{\phi}(s,0))+f(\pmb{\theta}_k; \pmb{\phi}(s,1))]$. %and $\{\alpha_{k}\}$ is a step-size sequence satisfying $\sum_k\alpha_{k}=\infty$ and $\sum_k\alpha_{k}^2<\infty$.
Similarly, the Whittle index update~(\ref{eq:convention_lambda_update}) can be rewritten as
\begin{align}    \label{eq:lambda_update1}
    \lambda_{k+1}(s)&=\lambda_{k}(s)+\eta_k(f(\pmb{\theta}_k; \pmb{\phi}(s,1))-f(\pmb{\theta}_k; \pmb{\phi}(s,0))).
\end{align}
%with the stepsize sequence $\{\eta_{k}\}$ satisfying $\sum_k\eta_{k}=\infty$, $\sum_k\eta_{k}^2<\infty$ and $\eta_{k}=o(\alpha_{k})$.
The coupled iterates in~(\ref{eq:theta_update1}) and~(\ref{eq:lambda_update1}) form \qwhittleNFA as summarized in Algorithm \ref{Algorithm1}, which aims to learn the coupled parameters $(\pmb{\theta}^*, \lambda^*(s))$ such that $f(\pmb{\theta}^*, \pmb{\phi}(s,1)) =f(\pmb{\theta}^*, \pmb{\phi}(s,0)), \forall s\in\gS.$

\begin{table*}[t]
\caption{Comparison of settings in related works.}
\vspace{-0.1in}
\centering
\scalebox{0.9}{
% \begin{tabular}{m{4.7cm}
% m{2.0cm}  m{2.3cm}  m{2.3cm}| m{2.0cm}}
\begin{tabular}{|c|c|c|c|c|}
 \hline
  \textbf{Algorithm} & \textbf{Noise } & \textbf{Approximation} & \textbf{Timescale} & \textbf{Whittle index}\\
 \hline\hline
\qwhittle \cite{avrachenkov2022whittle} & \textit{i.i.d.} & \ding{55} & \textbf{\textit{two-timescale}} & \ding{51}\\
\qwhittleLFA
\cite{xiong2023whittle} & \textit{i.i.d.} & linear & \textbf{\textit{two-timescale}} & \ding{51}\\
Q-Learning-LFA \cite{bhandari2018finite,melo2008analysis,zou2019finite} & \textbf{\textit{Markovian}} & linear & \textit{single-timescale} & \ding{55}\\
Q-Learning-NFA
\cite{cai2023neural,chen2019performance,fan2020theoretical,xu2020finite}
& \textbf{\textit{Markovian}} & \textbf{\textit{neural network}} & \textit{single-timescale}&\ding{55}\\
TD-Learning-LFA \cite{srikant2019finite} & \textbf{\textit{Markovian}} & linear & \textit{single-timescale} & \ding{55}\\
2TSA-IID
\cite{doan2020nonlinear,doan2019linear} & \textit{i.i.d.} & \ding{55} & \textbf{\textit{two-timescale}} & \ding{55}\\
2TSA-Markovian
\cite{doan2021finite} & \textbf{\textit{Markovian}} & \ding{55} & \textbf{\textit{two-timescale}} & \ding{55}\\
 \hline\hline
\qwhittleNFA ~{\textbf{(this work)}} & \textbf{\textit{Markovian}} & \textbf{\textit{neural network}} & \textbf{\textit{two-timescale}} & \ding{51}\\
\hline
\end{tabular}}
\label{tal:sotas}
\end{table*}

\begin{remark}
Unlike recent works for Q-learning with linear \cite{bhandari2018finite,melo2008analysis,zou2019finite} or neural network function approximations \cite{cai2023neural,fan2020theoretical,xu2020finite}, we do not assume an additional projection step of the updates of unknown parameters  $\pmb{\theta}_k$ in~(\ref{eq:theta_update1}) to confine $\pmb{\theta}_k, \forall k$ into a bounded set.  This projection step is often used to stabilize the iterates related to the unknown stationary distribution of the underlying Markov chain, which in some cases is impractical.  More recently, \cite{srikant2019finite} removed the extra projection step and established the finite-time convergence of TD learning, which is treated as a linear stochastic approximation algorithm. \cite{chen2019performance} extended it to the Q-learning with linear function approximation.  However, these state-of-the-art works only contained a single-timescale update on Q-function values, i.e., with the only unknown parameter $\pmb{\theta}$, while our \qwhittleNFA involves a two-timescale update between two coupled unknown parameters $\pmb{\theta}$ and $\lambda$ as in~(\ref{eq:theta_update1}) and~(\ref{eq:lambda_update1}). Our goal in this paper is to expand the frontier by providing a finite-time bound for \qwhittleNFA under Markovian noise without requiring an additional projection step.  {We summarize the differences between our work and existing literature in Table \ref{tal:sotas}.}
\end{remark}

\section{Finite-Time Analysis of \qwhittleNFA}\label{sec:finite-time-analysis}

In this section, we present the finite-time analysis of \qwhittleNFA for learning Whittle index $\lambda(s)$ of any state $s\in\gS$ when data are generated from a MDP.  To simplify notation, we abbreviate $\lambda(s)$ as $\lambda$ in the rest of the paper.  We start by first rewriting the updates of \qwhittleNFA in~(\ref{eq:theta_update1}) and~(\ref{eq:lambda_update1}) as a nonlinear two-timescale stochastic approximation (2TSA) in Section~\ref{sec:Q-Whittle-2TSA}.  %In Section~\ref{sec:Q-Whittle-2TSA}, we first rewrite the updates of \qwhittleNFA in~(\ref{eq:theta_update1}) and~(\ref{eq:lambda_update1}) as a nonlinear two-timescale stochastic approximation (2TSA).  To couple the fast and slow iterates, we define a proper weighted Lyapunov function that combines the updates of $\pmb{\theta}$ and $\lambda$ with respect to the neural network function $f(\pmb{\theta};\pmb{\phi}(s,a))$.  However, it is challenging to directly finding the global optimum of the corresponding nonlinear equations due to the nonlinear parameterization of Q-function in \qwhittleNFA.  To mitigate this, we instead approximate the neural function with a collection of local linearization, and correspondingly define a modified Lyapunov function combining updates of $\pmb{\theta}$ and $\lambda$ with respect to such local linearization.  We then study the convergence rate of 2TSA using the modified Lyapunov function under general conditions, which give rise to the finite-time performance of \qwhittleNFA by adding the approximation errors.

\subsection{A Nonlinear 2TSA Formulation with Neural Network Function}%\label{sec:finite-time-analysis}
\label{sec:Q-Whittle-2TSA}

We first show that \qwhittleNFA  can be rewritten as a variant of the nonlinear 2TSA.  For any fixed policy $\pi$, since the state of each arm $\{S_k\}$ evolves according to a Markov chain, we can construct a new variable $X_k=(S_k, A_k, S_{k+1})$, which also forms a Markov chain with { state space
%\begin{align*}
    $\gX:=\{(s,a,s^\prime)|s\in\gS, \pi(a|s)\geq 0, p(s^\prime|s,a)>0\}.$}
%\end{align*}
Therefore, the coupled updates~(\ref{eq:theta_update1}) and~(\ref{eq:lambda_update1}) of \qwhittleNFA can be rewritten in the form of a nonlinear 2TSA \cite{doan2021finite}:
\begin{align}\label{eq:ge_lambda_2TSA}
    \pmb{\theta}_{k+1}=\pmb{\theta}_{k}+\alpha_kh(X_k,\pmb{\theta}_{k}, {\lambda}_k),
    \qquad {\lambda}_{k+1}={\lambda}_k+\eta_kg(X_k, \pmb{\theta}_k, \lambda_k),
\end{align}
where $\pmb{\theta}_0$ and ${\lambda}_0$ being arbitrarily initialized in $\mathbb{R}^{md}$ and $\mathbb{R}$, respectively; and $h(\cdot)$ and $g(\cdot)$ satisfy % In particular,   we  have $ h(X_k,\pmb{\theta}_k, \lambda_k)$ with $\pmb{\theta}_k\in\mathbb{R}^{md}$ and $\lambda_k\in\mathbb{R}$ being expressed as
\begin{align}
    h(X_k,\pmb{\theta}_k, \lambda_k)&:=\nabla_{\pmb{\theta}}f(\pmb{\theta}_k;\pmb{\phi}(S_k, A_k))\Delta_k, \qquad\pmb{\theta}_k\in\mathbb{R}^{md}, \lambda_k\in\mathbb{R}, \label{eq:h_Q} \\
%\end{align}
%and $g(\cdot)$\footnote{Throughout the paper, we write $g(\pmb{\theta}_k)$ and $g(X_k,\pmb{\theta}_k, \lambda_k)$ interchangeably.} is a function only w.r.t $\pmb{\theta}$ such that $g(\pmb{\theta}_k)$ being
%\begin{align}\label{eq:g-Q}
    g(X_k, \pmb{\theta}_k, \lambda_k)&:=f(\pmb{\theta}_k; \pmb{\phi}(s,1))-f(\pmb{\theta}_k; \pmb{\phi}(s,0)), \qquad\pmb{\theta}_k\in\mathbb{R}^{md}. \label{eq:g-Q}
\end{align}
Since $\eta_k\ll \alpha_k$, the dynamics of $\pmb{\theta}$ evolves much faster than those of ${\lambda}$.
We aim to establish the finite-time performance of the nonlinear 2TSA in (\ref{eq:ge_lambda_2TSA}), where $f(\cdot)$ is the neural network function defined in~(\ref{eq:neural_Q}).  This is equivalent to find the root\footnote{{The root   $(\pmb{\theta}^*, {\lambda}^*)$  of  the  nonlinear 2TSA (\ref{eq:ge_lambda_2TSA}) can be established by using the ODE method following the solution of suitably defined differential equations \cite{borkar2009stochastic,suttle2021reinforcement, avrachenkov2022whittle, doan2019linear,doan2020nonlinear, doan2021finite}, i.e.,} $\dot{\pmb{\theta}}=H(\pmb{\theta},{\lambda}),   \dot{{\lambda}}=\frac{\eta}{\alpha}G(\pmb{\theta}, {\lambda}),$
where a fixed stepsize is assumed for ease of expression at this moment.      } $(\pmb{\theta}^*, {\lambda}^*)$ of a system with \textit{two coupled} nonlinear equations $h:\gX\times\mathbb{R}^{md}\times\mathbb{R}\rightarrow\mathbb{R}^{md}$ and $g:\gX\times\mathbb{R}^{md}\times\mathbb{R}\rightarrow\mathbb{R}$ such that
\begin{align}\label{eq: 2TSA_EqPoint}
   H(\pmb{\theta}, {\lambda}):= \mathbb{E}_\mu[h(X, \pmb{\theta}, {\lambda})]=0, \qquad G(\pmb{\theta}, {\lambda}):= \mathbb{E}_\mu[g(X, \pmb{\theta}, {\lambda})]=0,
\end{align}
where $X$ is a random variable in finite state space $\gX$ with unknown distribution $\mu$.
For a fixed $\pmb{\theta}$, to study the stability of ${\lambda}$, we assume the condition on the existence of  a mapping such that ${\lambda}=y(\pmb{\theta})$ is the unique solution of $G(\pmb{\theta},{\lambda})=0.$ In particular, $y(\pmb{\theta})$ is given as
\begin{align}\label{eq:y_function}
    y(\pmb{\theta})= r(s,\!1)\!+\!\sum_{s^\prime}p(s^\prime |s,\!1)\max_a f(\pmb{\theta};\pmb{\phi}(s^\prime,a))\! -\!r(s,\!0)\!-\!\!\sum_{s^\prime}p(s^\prime |s,\!0)\max_a f(\pmb{\theta};\pmb{\phi}(s^\prime,a)).
\end{align}

\subsection{Main Results}\label{sec:results}

As inspired by \cite{doan2021finite}, the finite-time analysis of such a nonlinear 2TSA boils down to the choice of two step sizes $\{\alpha_k, \eta_k, \forall k\}$ and a Lyapunov function that couples the two iterates in~(\ref{eq:ge_lambda_2TSA}).  To this end, we first define the following two error terms:
\begin{align}\label{eq:residual}
    \tilde{\pmb{\theta}}_k&=\pmb{\theta}_k-{\pmb{\theta}}^*,\qquad
    \tilde{{\lambda}}_k={\lambda}_k-y(\pmb{\theta}_k),
\end{align}
which characterize the coupling between $\pmb{\theta}_k$ and ${\lambda}_k$.
If $\tilde{\pmb{\theta}}_k$ and $\tilde{\lambda}_k$ go to zero simultaneously, the convergence of $(\pmb{\theta}_k,{\lambda}_k)$ to $({\pmb{\theta}}^*,{\lambda}^*)$ can be established. Thus, to prove the convergence of $(\pmb{\theta}_k,{\lambda}_k)$ of the nonlinear 2TSA in (\ref{eq:ge_lambda_2TSA}) to its true value $({\pmb{\theta}}^*,{\lambda}^*)$, we can equivalently study the convergence of $(\tilde{\pmb{\theta}}_k,\tilde{{\lambda}}_k)$ by providing the finite-time analysis for the  mean squared error generated by (\ref{eq:ge_lambda_2TSA}). To couple the fast and slow iterates, we define the following weighted Lyapunov function
\begin{align}\label{eq:lyapunov-function}
    {M}(\pmb{\theta}_k, {\lambda}_k)&:=\frac{\eta_k}{\alpha_k}\|\tilde{\pmb{\theta}}_k\|^2+\|\tilde{{\lambda}}_k\|^2=\frac{\eta_k}{\alpha_k}\|\pmb{\theta}_k-{\pmb{\theta}}^*\|^2+\|{\lambda}_k-y(\pmb{\theta}_k)\|^2,
\end{align}
where $\|\cdot\|$ stands for the the Euclidean norm for vectors throughout the paper.  It is clear that the Lyapunov function ${M}(\pmb{\theta}_k, {\lambda}_k)$ combines the updates of $\pmb{\theta}$ and $\lambda$ with respect to the true neural network function $f(\pmb{\theta};\pmb{\phi}(s,a))$ in (\ref{eq:neural_Q}).

To this end, our goal turns to characterize finite-time convergence of $\mathbb{E}[{M}(\pmb{\theta}_k, {\lambda}_k)]$. However, it is challenging to directly finding the global optimum of the corresponding nonlinear equations due to the nonlinear parameterization of Q-function in \qwhittleNFA.  In addition, the  operators $h(\cdot), g(\cdot)$ and $y(\cdot)$ in (\ref{eq:h_Q}), (\ref{eq:g-Q}) and (\ref{eq:y_function}) directly relate with the convoluted neural network function $f(\pmb{\theta}; \pmb{\phi}(s,a))$ in (\ref{eq:neural_Q}), which hinders us to characterize the smoothness properties of theses operators. Such properties are often required for the analysis of stochastic approximation \cite{chen2019performance,doan2020nonlinear,doan2019linear}.

To mitigate this, \textbf{(Step 1)} we instead approximate the true neural network function $f(\pmb{\theta}, \pmb{\phi}(s,a))$ with a collection of local linearization $f_0(\pmb{\theta}; \pmb{\phi}(s,a))$ at the initial point $\pmb{\theta}_0$.  Based on the surrogate stationary point $\pmb{\theta}_0^*$ of $f_0(\pmb{\theta}; \pmb{\phi}(s,a))$, we correspondingly define a modified Lyapunov function $\hat{M}(\pmb{\theta}_k, {\lambda}_k)$ combining updates of $\pmb{\theta}$ and $\lambda$ with respect to such local linearization. Specifically, we have
\begin{align}\label{eq:lyapunov-function2}
    \hat{M}(\pmb{\theta}_k, {\lambda}_k)&:=\frac{\eta_k}{\alpha_k}\|\pmb{\theta}_k-\pmb{\theta}_0^*\|^2+\|{\lambda}_k-y_0(\pmb{\theta}_k)\|^2,
\end{align}
where $y_0(\cdot)$ is in the same expression as $y(\cdot)$ in~(\ref{eq:y_function}) by replacing $f(\cdot)$ with $f_0(\cdot)$, and we will describe this in details below.  \textbf{(Step 2)} We then study the convergence rate of the nonlinear 2TSA using this modified Lyapunov function under general conditions.  \textbf{(Step 3)} Finally, since the two coupled parameters $\pmb{\theta}$ and $\lambda$ in (\ref{eq:ge_lambda_2TSA}) are updated with respect to the true neural network function $f(\pmb{\theta};\pmb{\phi}(s,a))$ in (\ref{eq:neural_Q}) in \qwhittleNFA, while we characterize their convergence using the approximated neural network function in Step 2. Hence, this further requires us to characterize the approximation errors.  We visualize the above three steps in Figure~\ref{fig:Flowchart} and provide a proof sketch in Section~\ref{sec:proof-sketch}. Combing them together gives rise to our main theoretical results on the finite-time performance of \qwhittleNFA, which is formally stated in the following theorem. %Theorem~\ref{thm2:QW_convergence}.

\begin{figure}
    \centering
    \includegraphics[width=0.98\textwidth]{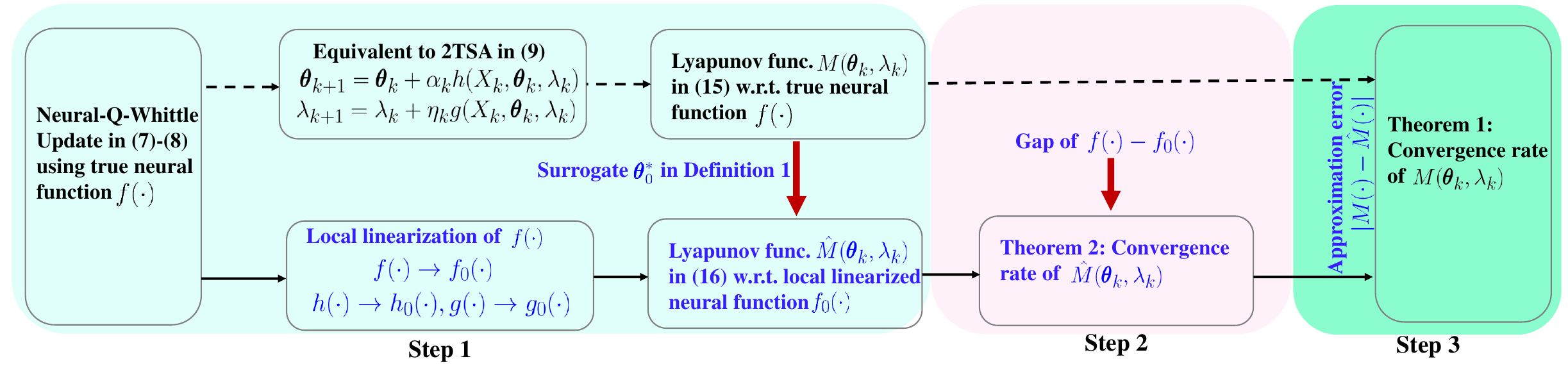}
    \caption{\qwhittleNFA operates w.r.t. true neural function $f(\cdot)$ with its finite-time performance given in Theorem~\ref{thm2:QW_convergence} (indicated in dashed lines). Our proofs operate in three steps: (i) Step 1: Obtain local linearization $f_0(\cdot)$ and define Lyapunov function $\hat{M}(\cdot)$ w.r.t. $f_0(\cdot)$. (ii) Step 2: Characterize the finite-time performance w.r.t. $\hat{M}(\cdot)$ using Lyapunov drift method. Since  \qwhittleNFA is updated w.r.t. $f(\cdot)$, we need to characterize the gap between $f(\cdot)$ and $f_0(\cdot)$. (iii) Step 3: Similarly, we characterize the approximation errors between $M(\cdot)$ and $\hat{M}(\cdot)$.}
%     %Procedures of the proposed techniques for tackling the challenge caused by neural network function approximation in \qwhittleNFA. }
    \label{fig:Flowchart}
%     \vspace{-0.1in}
\end{figure}

\begin{theorem}\label{thm2:QW_convergence}
Consider iterates $\{\pmb{\theta}_k\}$ and $\{{\lambda}_k\}$  generated by \qwhittleNFA  in~(\ref{eq:theta_update1}) and~(\ref{eq:lambda_update1}).  Given
   $\alpha_k=\frac{\alpha_0}{(k+1)},\eta_k=\frac{\eta_0}{(k+1)^{4/3}}$,
%\end{align}
we have for $\forall k\geq \tau$
\begin{align}\label{eq:convergence_error}\nonumber
 & \mathbb{E}[{M}(\pmb{\theta}_{k+1},\lambda_{k+1})|\mathcal{F}_{k-\tau}]
   \leq \frac{2\tau^2\mathbb{E}[\hat{M}(\pmb{\theta}_\tau,\lambda_\tau)]}{(k+1)^2}+\frac{1200\alpha_0^3}{\eta_0}\frac{(C_1+\|\hat{\pmb{\theta}}_0\|)^2+(2C_1+\|\hat{\lambda}_0\|)^2}{(k+1)^{2/3}}\nonumber\allowdisplaybreaks\\
    +&\frac{2\eta_0c_0^2}{\alpha_0(1-\kappa)^2}\|span(\Pi_\mathcal{F}{f}({\pmb{\theta}^*})\!-\!{f}({\pmb{\theta}^*}))\|^2+\left(\frac{2}{(k+1)^{2/3}}\!+\!2\right)\mathcal{O}\Big(\frac{c_1^3(\|\pmb{\theta}_0\|\!+\!|\lambda_0|\!+\!1)^3} {m^{1/2}}\Big),
\end{align}
where $C_1:=c_1(\|\pmb{\theta}_0\|+\|\lambda_0\|+1)$ with $c_1$ being a proper chosen constant, $c_0$ is a constant defined in Assumption \ref{asump:span}, $\tau$ is the mixing time defined in~(\ref{def:mixing_time}), {$span$ denotes for the span semi-norm \cite{sharma2020approximate}}, and $\Pi_\gF$ represents the projection to the set of $\gF$ contianing all possible $f_0(\pmb{\theta};\pmb{\phi}(s,a))$ in (\ref{eq:neural_Q_linear}).
%\end{align*}
\end{theorem}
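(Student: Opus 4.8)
The plan is to carry out the three-step program sketched around Figure~\ref{fig:Flowchart}, which separates the two genuine difficulties of \qwhittleNFA: the nonlinearity of the ReLU parameterization and the coupled two-timescale Markovian stochastic approximation. Throughout, the object of interest is the recursion for $\mathbb{E}[\hat M(\pmb\theta_k,\lambda_k)]$, which is then transferred back to $\mathbb{E}[M(\pmb\theta_k,\lambda_k)]$ and unrolled with the prescribed step sizes to yield \eqref{eq:convergence_error}.

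\textbf{Step 1: local linearization.} First I would replace $f(\pmb\theta;\pmb\phi(s,a))$ by its first-order expansion $f_0(\pmb\theta;\pmb\phi(s,a)):=f(\pmb\theta_0;\pmb\phi(s,a))+\langle\nabla_{\pmb\theta}f(\pmb\theta_0;\pmb\phi(s,a)),\pmb\theta-\pmb\theta_0\rangle$ about the random initialization $\pmb\theta_0$. Since $f_0$ is affine in $\pmb\theta$, the induced operators $h_0,g_0$ and the slow-variable map $y_0$ (i.e.\ \eqref{eq:y_function} with $f$ replaced by $f_0$) are globally Lipschitz, and the relative-Q-learning centering $I(\pmb\theta)$ makes the associated fast-timescale operator a contraction in the span semi-norm with modulus $\kappa$ in the spirit of \cite{chen2019performance}. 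Letting $\pmb\theta_0^*$ be the resulting surrogate root, the core analysis is carried out for the modified Lyapunov function $\hat M$ in \eqref{eq:lyapunov-function2}.

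\textbf{Step 2: drift of $\hat M$ under Markovian noise.} Expanding the squares in $\hat M(\pmb\theta_{k+1},\lambda_{k+1})$ along \eqref{eq:ge_lambda_2TSA}, I would split each cross term into its conditional-mean part ($H_0,G_0$) and a noise part ($h_0(X_k,\cdot)-H_0,\ g_0(X_k,\cdot)-G_0$). To control the noise I would condition on the $\sigma$-algebra $\mathcal{F}_{k-\tau}$, with $\tau$ the mixing time of $\{X_k\}$, so that $\mathbb{E}[h_0(X_k,\pmb\theta_{k-\tau},\lambda_{k-\tau})\mid\mathcal{F}_{k-\tau}]$ is within $\mathcal O\big(\sum_{j=k-\tau}^{k-1}(\alpha_j+\eta_j)\big)$ of $H_0$, and the Lipschitz-in-$(\pmb\theta,\lambda)$ gap between steps $k-\tau$ and $k$ is handled by telescoping the updates via $\|\pmb\theta_{j+1}-\pmb\theta_j\|\le\alpha_j\|h_0(X_j,\cdot)\|$. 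The span-contraction of the fast operator yields a negative drift $-c_2\alpha_k\hat M_k$; the $\lambda$-block uses Lipschitzness of $y_0$ (so $\|y_0(\pmb\theta_{k+1})-y_0(\pmb\theta_k)\|\le L\alpha_k\|h_0\|$), and the weight $\eta_k/\alpha_k$ in $\hat M$ is precisely what absorbs the resulting cross-timescale term. Together these give a scalar recursion $\mathbb{E}[\hat M_{k+1}\mid\mathcal{F}_{k-\tau}]\le(1-c_2\alpha_k)\hat M_k+c_3(\alpha_k^2+\alpha_k\eta_k)+(\text{approx.\ terms})$. In parallel, and \emph{in place of} a projection step, I would run a bootstrapping argument: feed a crude a priori bound on $\hat M_k$ back into the update to show $\|\pmb\theta_k-\pmb\theta_0\|=\mathcal O(C_1)$ for all $k$, which certifies that the linearization $f_0$ remains valid.

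\textbf{Step 3: transfer to $M$ and assembly.} Next I would bound $|M(\pmb\theta_k,\lambda_k)-\hat M(\pmb\theta_k,\lambda_k)|$ by $\mathcal O\big(\|\pmb\theta^*-\pmb\theta_0^*\|+\sup_{s,a}|f-f_0|\big)$ times the relevant norms; for a two-layer ReLU net restricted to a ball of radius $\mathcal O(\|\pmb\theta_0\|+|\lambda_0|+1)$ the uniform linearization error satisfies $\sup_{s,a}|f(\pmb\theta;\pmb\phi(s,a))-f_0(\pmb\theta;\pmb\phi(s,a))|=\mathcal O\big(c_1^3(\|\pmb\theta_0\|+|\lambda_0|+1)^3/\sqrt m\big)$, producing the last line of \eqref{eq:convergence_error}; the irreducible term $\tfrac{2\eta_0c_0^2}{\alpha_0(1-\kappa)^2}\|\mathrm{span}(\Pi_{\mathcal{F}}f(\pmb\theta^*)-f(\pmb\theta^*))\|^2$ is the projection/function-class error (the fixed point need not lie in $\gF$), bounded using Assumption~\ref{asump:span}. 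Finally I would substitute $\alpha_k=\alpha_0/(k+1)$, $\eta_k=\eta_0/(k+1)^{4/3}$ (so $\eta_k/\alpha_k\propto(k+1)^{-1/3}$ and $\alpha_k^2+\alpha_k\eta_k\sim(k+1)^{-7/3}$), unroll the recursion from $k=\tau$, and balance the $(1-c_2\alpha_k)$ contraction against the noise to obtain the leading $\mathcal O((k+1)^{-2/3})$ rate together with the $\tau^2(k+1)^{-2}$ transient and the $m$-dependent and function-class error terms, with the explicit constants emerging from the accounting.

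\textbf{Main obstacle.} I expect the hardest part to be Step 2 carried out \emph{without} projection: one must simultaneously keep $\pmb\theta_k$ inside the linearization-validity ball via an induction that feeds the drift bound back into itself, and manage the Markovian bias terms, whose magnitude depends on $\alpha_j,\eta_j$ over the previous $\tau$ steps, while the Lyapunov weight $\eta_k/\alpha_k$ is itself $k$-dependent. The joint accounting of the time-varying weighting, the two step-size scales, and the mixing-time lag is what pins down the exponents $(1,4/3)$ and the constants appearing in \eqref{eq:convergence_error}.
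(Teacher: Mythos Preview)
Your three-step outline matches the paper's architecture almost exactly; Steps~1 and~3 are essentially what the paper does. Two details in Step~2, however, depart from the paper in ways that matter for getting the stated bound.

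\textbf{Iterate boundedness without projection.} The paper does \emph{not} bootstrap $\|\pmb\theta_k-\pmb\theta_0\|=\mathcal O(C_1)$ from the Lyapunov drift. It proves this a priori and independently of the drift analysis (Lemma~\ref{lem:bounded_parameter}): since $\|h(X,\pmb\theta,\lambda)\|,\|g(X,\pmb\theta,\lambda)\|\le L(\|\pmb\theta\|+|\lambda|+1)$, one telescopes $\|\pmb\theta_{k+1}-\pmb\theta_k\|+|\lambda_{k+1}-\lambda_k|$ and uses a step-size smallness condition of the form $(L_h'\alpha_0+L_g'\eta_0)\tau\le 1/4$. Your bootstrap is delicate precisely because the drift inequality already relies on the linearization error $\|h-h_0\|^2=\mathcal O(m^{-1/2})$, which in turn presupposes $\|\pmb\theta_k-\pmb\theta_0\|\le C_1$; the paper sidesteps this circularity entirely, and the resulting uniform bound $C_1$ is then fed both into the Markovian-noise terms and into Lemma~\ref{lemma:approximation-gap-of-h}.

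\textbf{Source of the drift and of the $(k+1)^{-2/3}$ rate.} In the paper the negative drift in Step~2 comes from the monotonicity-type inequalities of Lemma~\ref{lem:gh_monotone} ($\mathbb E[\hat{\pmb\theta}^\intercal h_0]\le -\mu_1\|\hat{\pmb\theta}\|^2$, $\mathbb E[\hat\lambda\, g_0]\le -\mu_2\hat\lambda^2$), not from the span-contraction; the span-contraction~\eqref{eq:span_contraction} is used only in Step~3 (Lemma~\ref{lemma:M-and-M_hat}) to turn $\|\pmb\theta_0^*-\pmb\theta^*\|$ into the projection error $\|\mathrm{span}(\Pi_{\mathcal F}f(\pmb\theta^*)-f(\pmb\theta^*))\|$. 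More importantly, the dominant additive term in the $\hat M$-recursion is not $\alpha_k^2+\alpha_k\eta_k$ but $\frac{\alpha_k^3}{\eta_k}\big(\|\hat{\pmb\theta}_k\|^2+\|\hat\lambda_k\|^2\big)$: it arises from applying $2xy\le \tfrac{\eta_k}{\alpha_k}x^2+\tfrac{\alpha_k}{\eta_k}y^2$ to the cross term $\big(\hat\lambda_k+\eta_k g\big)\big(y_0(\pmb\theta_k)-y_0(\pmb\theta_{k+1})\big)$, with $\|y_0(\pmb\theta_{k+1})-y_0(\pmb\theta_k)\|^2\lesssim \alpha_k^2$. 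After multiplying the recursion by $(k+1)^2$ (so the contraction exactly telescopes), this becomes $\tfrac{\alpha_0^3}{\eta_0}(k+1)^{1/3}$ per step, sums to $(k+1)^{4/3}$, and upon dividing back by $(k+1)^2$ produces the $(k+1)^{-2/3}$ rate and the constant $1200\alpha_0^3/\eta_0$ in~\eqref{eq:convergence_error}. Your stated noise $\alpha_k^2+\alpha_k\eta_k$ would give a faster rate than the theorem claims and would not recover that constant.
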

The first term on the right hand side~(\ref{eq:convergence_error}) corresponds to the bias compared to the Lyapunov function at the mixing time $\tau$, which goes to zero at a rate of $\mathcal{O}(1/k^2)$. The second term corresponds to the accumulated estimation error of the nonlinear 2TSA due to Markovian noise, which vanishes at the rate $\mathcal{O}(1/k^{2/3})$. Hence it dominates the overall convergence rate in~(\ref{eq:convergence_error}).  {The third term captures the distance between the optimal solution $(\pmb{\theta}^*, \lambda^*)$ to the true neural network function $f(\pmb{\theta}_k; \pmb{\phi}(s,a))$ in (\ref{eq:neural_Q}) and the optimal one $(\pmb{\theta}_0^*, y_0(\pmb{\theta}_0^*))$ with local linearization $f_0(\pmb{\theta}_k; \pmb{\phi}(s,a))$ in (\ref{eq:neural_Q_linear}), which quantifies the error when $f(\pmb{\theta}^*)$ does not fall into the function class $\mathcal{F}$.  The last term characterizes the distance between $f(\pmb{\theta}_k; \pmb{\phi}(s,a))$ and $f_0(\pmb{\theta}_k; \pmb{\phi}(s,a))$ with any $\pmb{\theta}_k$.}  Both terms diminish as $m\rightarrow\infty$. Theorem \ref{thm2:QW_convergence} implies the convergence to the  optimal value  $(\pmb{\theta}^*, \lambda^*)$ is bounded by the approximation error, which will diminish to zero as representation power of $f_0(\pmb{\theta}_k; \pmb{\phi}(s,a))$ increases when $m\rightarrow \infty.$ { Finally, we note that the right hand side~(\ref{eq:convergence_error}) ends up in $\mathcal{O}(1/k^2)+\mathcal{O}(1/k^{2/3})+c$, where $c$ is a constant and
 %and its value depend on $m$ and it
 its value goes to $0$ as $m\rightarrow\infty$. This indicates the error bounds of linearization with the original neural network functions are controlled by the overparameterization value of $m$.}
Need to mention that a constant step size will result in a non-vanishing accumulated error as in \cite{chen2019performance}.

% {\color{red} An $\mathcal{O}(1/k^{2/3})$ convergence rate is also achieved for the analysis of nonlinear 2TSA in \cite{doan2020nonlinear} under i.i.d. noise models while we consider the Markovian noise.  }

%A similar work has been considered in \cite{doan2020nonlinear}, which also achieves $\mathcal{O}(k^{2/3})$ convergence. However, a significant difference between this work and \cite{doan2020nonlinear} is that we considered a Markovian sampling setting while \cite{doan2020nonlinear} considers the i.i.d samples.  In addition, to keep the 2TSA stable, we requires a milder condition in Assumption \ref{assump:stable} than the assumption that $h$ and $g$ are strongly monotone as in \cite{doan2020nonlinear}.

\begin{remark}
A finite-time analysis of nonlinear 2TSA was presented in \cite{mokkadem2006convergence}.  However, \cite{mokkadem2006convergence} required a stability condition that $\lim_{k\rightarrow\infty} (\pmb{\theta}_k,{\lambda}_k)=(\pmb{\theta}^*, {\lambda}^*)$, and both $h$ and $g$ are locally approximated as linear functions. \cite{doan2020nonlinear,xiong2023whittle} relaxed these conditions and provided a finite-time analysis under i.i.d. noise. These results were later extended to Markovian noise \cite{doan2021finite} under the assumption that $H$ function is strongly monotone in $\pmb{\theta}$ and $G$ function is strongly monotone in ${\lambda}$. Since \cite{doan2021finite} leveraged the techniques in \cite{doan2020nonlinear}, it needed to explicitly characterize the covariance between the error caused by Markovian noise and the parameters' residual error in (\ref{eq:residual}), leading to the convergence analysis much more intrinsic.    \cite{chen2019performance} exploited the mixing time to avoid the covariance between the error caused by Markovian noise and the parameters' residual error, however, it only considered the single timescale Q-learning with linear function approximation.
 Though our \qwhittleNFA can be rewritten as a nonlinear 2TSA, the nonlinear parameterization of Q-function caused by the neural network function approximation makes the aforementioned analysis not directly applicable to ours and requires additional characterization as highlighted in Figure~\ref{fig:Flowchart}. The explicit characterization of approximation errors further distinguish our work.
\end{remark}

\subsection{Proof Sketch}\label{sec:proof-sketch}
In this section, we sketch the proofs of the three steps shown in Figure~\ref{fig:Flowchart} as required for Theorem~\ref{thm2:QW_convergence}.

\subsubsection{Step 1: Approximated Solution of \qwhittleNFA}
We first approximate the optimal solution by projecting the Q-function in (\ref{eq:neural_Q}) to some function classes parameterized by $\pmb{\theta}$. The common choice of the projected function classes is the local linearization of $f(\pmb{\theta}; \pmb{\phi}(s,a))$ at the initial point $\pmb{\theta}_0$ \cite{cai2023neural,xu2020finite}, i.e., $\gF:=\{f_0(\pmb{\theta}; \pmb{\phi}(s,a)), \forall \pmb{\theta}\in{\Theta}\}$, where % with $f_0(\pmb{\theta}; \pmb{\phi}(s,a))$ being defined as
\begin{align}\label{eq:neural_Q_linear}
    f_0(\pmb{\theta}; \pmb{\phi}(s,a))=\frac{1}{\sqrt{m}}\sum_{r=1}^m b_r\mathds{1}\{\mathbf{w}_{r,0}^\intercal\pmb{\phi}(s,a) >0\}\mathbf{w}_r^\intercal\pmb{\phi}(s,a).
\end{align}
Then, we define the approximate stationary point $\pmb{\theta}_0^{*}$ with respect to $f_0(\pmb{\theta}; \pmb{\phi}(s,a))$ as follows.
\begin{defn}\label{def:stationary_point}[\cite{cai2023neural,xu2020finite}]
 A point   $\pmb{\theta}_0^*\in{\Theta}$ is said to be the approximate stationary point of Algorithm \ref{Algorithm1} if for all feasible $\pmb{\theta}\in\Theta$ it holds that
 %\begin{align*}
 $\mathbb{E}_{\mu,\pi,\mathcal{P}}[({\Delta}_{0}\cdot\nabla_{\pmb{\theta}}f_0(\pmb{\theta}; \pmb{\phi}(s,a)))^\intercal(\pmb{\theta}-\pmb{\theta}_0^*)]\geq 0, \forall \pmb{\theta}\in\Theta,$
% \end{align*}
{ with ${\Delta}_{0}:=[r(s,a)+(1-a)\lambda^*-I_0( \pmb{\theta})+\max_{a^\prime}f_0(\pmb{\theta}; \pmb{\phi}(s^\prime,a))-f_0(\pmb{\theta}; \pmb{\phi}(s,a))]$, where $I_0(\pmb{\theta})=\frac{1}{2S}\sum_{s\in\gS}[f_0(\pmb{\theta}; \pmb{\phi}(s,0))+f_0(\pmb{\theta}; \pmb{\phi}(s,1))]$.}
\end{defn}
Though there is a gap between the true neural function (\ref{eq:neural_Q}) and the approximated local linearized function (\ref{eq:neural_Q_linear}), the gap diminishes as the width of neural network i.e., $m$, becomes large \cite{cai2023neural, xu2020finite}.

With the approximated stationary point $\pmb{\theta}_0^*$,  we can redefine the two error terms in~(\ref{eq:residual}) as
\begin{align}\label{eq:residual-new}
    \hat{\pmb{\theta}}_k&=\pmb{\theta}_k-\pmb{\theta}_0^*,\qquad
    \hat{{\lambda}}_k={\lambda}_k-y_0(\pmb{\theta}_k),
\end{align}
using which we correspondingly define a modified Lyapunov function $\hat{M}(\pmb{\theta}_k, {\lambda}_k)$ in~(\ref{eq:lyapunov-function2}), where %$y_0(\pmb{\theta})\!=\!r(S,\!1)\!+\!\sum_{s^\prime}\!  p(s^\prime |S,1)\max_a \!f_0(\pmb{\theta};\pmb{\phi}(s^\prime,a))\!-\!r(S,\!0)\!-\!\!\!\sum_{s^\prime}p(s^\prime |S,0)\max_a \!f_0(\pmb{\theta};\pmb{\phi}(s^\prime,a)).$
%  and the corresponding Lyapunov function as
%\begin{align}\label{eq:lyapunov-function2}
%    \hat{M}(\pmb{\theta}_k, {\lambda}_k)&:=\frac{\eta_k}{\alpha_k}\|\hat{\pmb{\theta}}_k\|^2+\|\hat{{\lambda}}_k\|^2=\frac{\eta_k}{\alpha_k}\|\pmb{\theta}_k-\pmb{\theta}_0^*\|^2+\|{\lambda}_k-y_0(\pmb{\theta}_k)\|^2,
%\end{align}
%with $y_0(\cdot)$ being
\begin{align}\label{eq:y0_function}
    {\hspace{-0.2cm}y_0(\pmb{\theta})\!=\!r(s,\!1)\!+\!\sum_{s^\prime}\!  p(s^\prime |s,1)\max_a \!f_0(\pmb{\theta};\pmb{\phi}(s^\prime,a))\!-\!r(s,\!0)\!-\!\!\!\sum_{s^\prime}p(s^\prime |s,0)}\max_a \!f_0(\pmb{\theta};\pmb{\phi}(s^\prime,a)).
\end{align}

\subsubsection{Step 2: Convergence Rate of $\hat{M}(\pmb{\theta}_k, \lambda_k)$ in (\ref{eq:lyapunov-function2}) }
Since we approximate the true neural network function $f(\pmb{\theta}; \pmb{\phi}(s,a))$ in (\ref{eq:neural_Q}) with the local linearized function $f_0(\pmb{\theta}; \pmb{\phi}(s,a))$ in (\ref{eq:neural_Q_linear}),
 the operators $h(\cdot)$ and $g(\cdot)$ in (\ref{eq:h_Q})-(\ref{eq:g-Q}) turn correspondingly to be
\begin{align} \label{eq:h0_Q}
    h_0(X_k,\pmb{\theta}_k, \lambda_k)&=\nabla_{\pmb{\theta}}f_0(\pmb{\theta}_k;\pmb{\phi}(S_k, A_k))\Delta_{k,0}, ~ g_0(\pmb{\theta}_k):=f_0(\pmb{\theta}_k; \pmb{\phi}(s,1))-f_0(\pmb{\theta}_k; \pmb{\phi}(s,0)),
\end{align}
{ with $\Delta_{k,0}:=r(S_k,A_k)+(1-A_k)\lambda_k-I_0( \pmb{\theta}_k)+\max_{a}f_0(\pmb{\theta}_k; \pmb{\phi}(S_{k+1},a))-f_0(\pmb{\theta}_k; \pmb{\phi}(S_{k},A_k))$.}
%\begin{align}\label{eq:g0_Q}
%    g_0(\pmb{\theta}_k):=f_0(\pmb{\theta}_k; \pmb{\phi}(s,1))-f_0(\pmb{\theta}_k; \pmb{\phi}(s,0)).
%\end{align}

Before we present the finite-time error bound of the nonlinear 2TSA~(\ref{eq:ge_lambda_2TSA}) under Markovian noise, we first discuss the mixing time of the Markov chain $\{X_k\}$ and our assumptions. %as in \cite{chen2019performance}.
\begin{defn} [Mixing time \cite{chen2019performance}]
%Define $\tau_\delta$ $\forall k\geq t, \pmb{\theta}, \pmb{\lambda}, X\in\gX$ as
For any $\delta>0$, define $\tau_\delta$ as
\begin{align}\label{def:mixing_time}
    \tau_\delta=&\min\{k\geq 1: \|\mathbb{E}[h_0(X_k, \pmb{\theta}, {\lambda})|X_0=x]-H_0( \pmb{\theta}, {\lambda}) \|\leq \delta (\|\pmb{\theta}-\pmb{\theta}_0^*\|+\|{\lambda}-y_0(\pmb{\theta}_0^*)\|)\}.
\end{align}
\end{defn}
%\begin{remark}
%   Mixing time is  one important concept in MDP, which refers to the number of steps required for the system to reach a state of equilibrium, where the probability distribution over states becomes stationary. Similar definition can be found in \cite{chen2019performance}, which only contains one unknown parameter $\pmb{\theta}$. We extend it to including another couple unknown parameter ${\lambda}$.
%\end{remark}

\begin{assumption}\label{assumption:markovian}
The Markov chain $\{X_k\}$ is irreducible and aperiodic.
Hence, there exists a unique stationary distribution $\mu$ \cite{levin2017markov},  and  constants $C>0$ and $\rho\in(0,1)$ such that
%\begin{align}\label{eq:TV}
    $d_{TV}(P(X_k|X_0=x), \mu)\leq C\rho^k, \forall k\geq 0, x\in\gX,$
%\end{align}
where $d_{TV}(\cdot,\cdot)$ is the total-variation (TV) distance \cite{levin2017markov}. %defined as $ d_{TV}(P, Q) =\sup_{f:\|f\|_{\infty}\leq \frac{1}{2}} \left|\int f dP-\int f dQ\right|. $

\end{assumption}
\begin{remark}
Assumption \ref{assumption:markovian} is often assumed to study the asymptotic convergence of stochastic approximation under Markovian noise \cite{bertsekas1996neuro,borkar2009stochastic,chen2019performance}.
\end{remark}

\begin{lemma}\label{lem:h_lipschitz}
The function $h_0(X,\pmb{\theta}, \lambda)$ {defined in~(\ref{eq:h0_Q})} is globally Lipschitz continuous w.r.t $\pmb{\theta}$ and $\lambda$ uniformly in $X$, i.e., $\|h_0(X,\pmb{\theta}_1,\lambda_1)\!-\!h_0(X,\pmb{\theta}_2,\lambda_2)\|\leq L_{h,1}\|\pmb{\theta}_1-\pmb{\theta}_2\|+L_{h,2}\|\lambda_1-\lambda_2\|, \forall X\in\gX$, and $L_{h,1}=3, h_{h,2}=1$ are valid Lipschitz constants.
\end{lemma}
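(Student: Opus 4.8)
The plan is to establish Lemma~\ref{lem:h_lipschitz} by unpacking the definition of $h_0$ in~(\ref{eq:h0_Q}) and bounding each factor separately, exploiting the piecewise-linearity of the linearized network $f_0$. Recall that $h_0(X,\pmb{\theta},\lambda)=\nabla_{\pmb{\theta}}f_0(\pmb{\theta};\pmb{\phi}(S,A))\,\Delta_{k,0}$ where $\Delta_{k,0}$ is the (linearized) TD error. First I would record the two structural facts about $f_0$ that drive everything: (i) its gradient $\nabla_{\pmb{\theta}}f_0(\pmb{\theta};\pmb{\phi}(s,a))$ has entries $\frac{1}{\sqrt m}b_r\mathds{1}\{\mathbf{w}_{r,0}^\intercal\pmb{\phi}(s,a)>0\}\pmb{\phi}(s,a)$, which is \emph{independent of $\pmb{\theta}$} (the activation pattern is frozen at initialization), is bounded in norm by $1$ using $|b_r|=1$, $\|\pmb{\phi}(s,a)\|\le 1$, and the averaging over $m$ neurons; and (ii) $f_0(\pmb{\theta};\pmb{\phi}(s,a))$ is a \emph{linear} function of $\pmb{\theta}$ with $|f_0(\pmb{\theta}_1;\pmb{\phi})-f_0(\pmb{\theta}_2;\pmb{\phi})|\le \|\pmb{\theta}_1-\pmb{\theta}_2\|$, again by Cauchy--Schwarz and the same normalization.

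Next I would write $h_0(X,\pmb{\theta}_1,\lambda_1)-h_0(X,\pmb{\theta}_2,\lambda_2)=\nabla_{\pmb{\theta}}f_0(\pmb{\theta};\pmb{\phi}(S,A))\big(\Delta_{k,0}(\pmb{\theta}_1,\lambda_1)-\Delta_{k,0}(\pmb{\theta}_2,\lambda_2)\big)$ — note the common gradient factor, which is the crucial simplification afforded by the frozen activations. Taking norms and using fact (i), it remains to bound $|\Delta_{k,0}(\pmb{\theta}_1,\lambda_1)-\Delta_{k,0}(\pmb{\theta}_2,\lambda_2)|$. I would then expand the difference of TD errors term by term: the $(1-A_k)\lambda_k$ term contributes $|\lambda_1-\lambda_2|$ (coefficient at most $1$); the term $-I_0(\pmb{\theta}_k)=-\frac{1}{2S}\sum_{s}[f_0(\pmb{\theta}_k;\pmb{\phi}(s,0))+f_0(\pmb{\theta}_k;\pmb{\phi}(s,1))]$ is an average of $2S$ functions each $1$-Lipschitz in $\pmb{\theta}$, hence itself $1$-Lipschitz, contributing $\|\pmb{\theta}_1-\pmb{\theta}_2\|$; the term $\max_a f_0(\pmb{\theta}_k;\pmb{\phi}(S_{k+1},a))$ is a max of $1$-Lipschitz functions, hence $1$-Lipschitz (using $|\max_a g_1(a)-\max_a g_2(a)|\le\max_a|g_1(a)-g_2(a)|$), contributing $\|\pmb{\theta}_1-\pmb{\theta}_2\|$; and $-f_0(\pmb{\theta}_k;\pmb{\phi}(S_k,A_k))$ contributes another $\|\pmb{\theta}_1-\pmb{\theta}_2\|$. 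Summing, $|\Delta_{k,0}(\pmb{\theta}_1,\lambda_1)-\Delta_{k,0}(\pmb{\theta}_2,\lambda_2)|\le 3\|\pmb{\theta}_1-\pmb{\theta}_2\|+|\lambda_1-\lambda_2|$, and combining with the unit bound on the gradient gives $L_{h,1}=3$, $L_{h,2}=1$, uniformly in $X\in\gX$ since none of these bounds depends on the particular triple $(S_k,A_k,S_{k+1})$.

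The bulk of the argument is routine once the two structural facts are in place, so the main obstacle is really conceptual/bookkeeping rather than technical: one must be careful that in $h_0$ the gradient $\nabla_{\pmb{\theta}}f_0$ is evaluated at an argument whose activation indicators $\mathds{1}\{\mathbf{w}_{r,0}^\intercal\pmb{\phi}>0\}$ use the \emph{initialization} $\mathbf{w}_{r,0}$, not the current $\pmb{\theta}_k$ — this is precisely why $f_0$ is globally (not just locally) Lipschitz and why its gradient is constant in $\pmb{\theta}$, which is in turn what makes the product $\nabla f_0\cdot\Delta_0$ globally Lipschitz rather than only locally so. (For the true network $f$ this step would fail, which is exactly why Step~1 of the proof replaces $f$ by $f_0$.) A secondary subtlety is that the reward $r(S_k,A_k)$ and the action indicator $(1-A_k)$ are bounded constants depending only on $X$, so they drop out of the difference entirely; I would state this explicitly to make clear the bound is uniform in $X$. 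No additional assumptions beyond the normalization $\|\pmb{\phi}(s,a)\|\le 1$ and $|b_r|=1$ are needed.
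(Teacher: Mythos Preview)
Your proposal is correct and follows essentially the same approach as the paper: factor out the $\pmb{\theta}$-independent gradient $\nabla_{\pmb{\theta}}f_0$ (possible because the activations are frozen at initialization), bound it by $1$, and then bound the difference of TD errors term by term using the $1$-Lipschitzness of $f_0$, of $I_0$ (as an average), and of the $\max$ operator, arriving at $3\|\pmb{\theta}_1-\pmb{\theta}_2\|+|\lambda_1-\lambda_2|$. Your write-up is, if anything, slightly more explicit about why the gradient is constant in $\pmb{\theta}$ and why this is the step that would fail for the true network $f$.
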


\begin{lemma}\label{lem:g_lipschitz}
 The function $g_0(\pmb{\theta})$ {defined in~(\ref{eq:h0_Q})} is linear and thus Lipschitz continuous in $\pmb{\theta}$, i.e., $\|g_0(\pmb{\theta}_1)-g_0(\pmb{\theta}_2)\|\leq L_g\|\pmb{\theta}_1-\pmb{\theta}_2\|$, and $L_g=2$ is a valid Lipschitz constant.
\end{lemma}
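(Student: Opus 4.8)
\textbf{Proof proposal for Lemma \ref{lem:g_lipschitz}.}

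The plan is to exploit the explicit closed form of $g_0$ given in~(\ref{eq:h0_Q}) together with the local linearization~(\ref{eq:neural_Q_linear}) of the neural network, and to show that $g_0(\pmb{\theta})$ is in fact an \emph{affine} (indeed linear) function of $\pmb{\theta}$, from which Lipschitz continuity with an explicit constant is immediate. Recall from~(\ref{eq:neural_Q_linear}) that $f_0(\pmb{\theta};\pmb{\phi}(s,a))=\frac{1}{\sqrt m}\sum_{r=1}^m b_r\mathds{1}\{\mathbf{w}_{r,0}^\intercal\pmb{\phi}(s,a)>0\}\mathbf{w}_r^\intercal\pmb{\phi}(s,a)$. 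Since the indicator $\mathds{1}\{\mathbf{w}_{r,0}^\intercal\pmb{\phi}(s,a)>0\}$ depends only on the fixed initialization $\mathbf{w}_{r,0}$ (not on the current $\pmb{\theta}$), and since $b_r$ are frozen at initialization, the map $\pmb{\theta}\mapsto f_0(\pmb{\theta};\pmb{\phi}(s,a))$ is linear in $\pmb{\theta}=(\mathbf{w}_1^\intercal,\dots,\mathbf{w}_m^\intercal)^\intercal$. Consequently $g_0(\pmb{\theta})=f_0(\pmb{\theta};\pmb{\phi}(s,1))-f_0(\pmb{\theta};\pmb{\phi}(s,0))$ is a difference of two linear functionals of $\pmb{\theta}$ and hence itself linear.

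The key steps, in order, are: (i) write $g_0(\pmb{\theta})=\langle \pmb{\psi}(s),\pmb{\theta}\rangle$ for the explicit gradient vector $\pmb{\psi}(s):=\nabla_{\pmb{\theta}}f_0(\pmb{\theta};\pmb{\phi}(s,1))-\nabla_{\pmb{\theta}}f_0(\pmb{\theta};\pmb{\phi}(s,0))$, whose $r$-th block is $\frac{1}{\sqrt m}b_r\big(\mathds{1}\{\mathbf{w}_{r,0}^\intercal\pmb{\phi}(s,1)>0\}\pmb{\phi}(s,1)-\mathds{1}\{\mathbf{w}_{r,0}^\intercal\pmb{\phi}(s,0)>0\}\pmb{\phi}(s,0)\big)$; (ii) bound $\|\pmb{\psi}(s)\|$ via the triangle inequality, using $|b_r|=1$, the feature normalization $\|\pmb{\phi}(s,a)\|\leq 1$, and $|\mathds{1}\{\cdot\}|\leq 1$, which gives $\|\pmb{\psi}(s)\|^2=\sum_{r=1}^m\frac{1}{m}\|b_r(\cdots)\|^2\leq \sum_{r=1}^m\frac{1}{m}(\|\pmb{\phi}(s,1)\|+\|\pmb{\phi}(s,0)\|)^2\leq 4$, so $\|\pmb{\psi}(s)\|\leq 2$; (iii) conclude by Cauchy--Schwarz that $\|g_0(\pmb{\theta}_1)-g_0(\pmb{\theta}_2)\|=|\langle\pmb{\psi}(s),\pmb{\theta}_1-\pmb{\theta}_2\rangle|\leq \|\pmb{\psi}(s)\|\,\|\pmb{\theta}_1-\pmb{\theta}_2\|\leq 2\|\pmb{\theta}_1-\pmb{\theta}_2\|$, establishing $L_g=2$.

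There is essentially no serious obstacle here: the statement is a direct computation once one observes that freezing the ReLU activation patterns at initialization makes $f_0$ linear in $\pmb{\theta}$. The only point requiring mild care is the bookkeeping of the $1/\sqrt m$ normalization across the $m$ blocks so that the sum of $m$ terms each of size $O(1/m)$ yields an $O(1)$, rather than $O(m)$, bound on $\|\pmb{\psi}(s)\|$; with the normalization as written this works out cleanly to the claimed constant $L_g=2$ (and one could tighten it, since each summand is really bounded using $\|\pmb{\phi}\|\le 1$, but $2$ suffices and matches the statement).
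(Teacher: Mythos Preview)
Your proposal is correct and follows essentially the same approach as the paper: both exploit that the frozen ReLU patterns make $f_0$ linear in $\pmb{\theta}$, write the difference $g_0(\pmb{\theta}_1)-g_0(\pmb{\theta}_2)$ in terms of the (constant) gradient $\nabla_{\pmb{\theta}}f_0$, and use $\|\pmb{\phi}(s,a)\|\le 1$ together with $|b_r|=1$ to obtain the constant $2$. The only cosmetic difference is that the paper applies the triangle inequality first (splitting into the $a=1$ and $a=0$ terms and bounding each gradient norm by $1$), whereas you combine the two gradients into a single vector $\pmb{\psi}(s)$ and bound its norm directly; the arithmetic is the same.
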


\begin{lemma}\label{lem:y_lipschitz}
 The function $y_0(\pmb{\theta})$ {defined in~(\ref{eq:y0_function})} is linear and thus Lipschitz continuous in ${\pmb{\theta}}$, i.e., $\|y_0(\pmb{\theta}_1)-y_0(\pmb{\theta}_2)\|\leq L_y\|\pmb{\theta}_1-\pmb{\theta}_2\|$, and $L_y=2$ is a valid Lipschitz constant.
\end{lemma}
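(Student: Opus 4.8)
The plan is to reduce everything to a single Lipschitz estimate on the local linearization $f_0$ and then propagate it through the $\max_a$ and the convex combinations over next states. The starting observation is that $f_0(\pmb{\theta};\pmb{\phi}(s,a))$ in~(\ref{eq:neural_Q_linear}) is \emph{linear} in $\pmb{\theta}$: the gating terms $\mathds{1}\{\mathbf{w}_{r,0}^\intercal\pmb{\phi}(s,a)>0\}$ and the signs $b_r$ are frozen at the random initialization, so $f_0(\pmb{\theta};\pmb{\phi}(s,a))=\langle\nabla_{\pmb{\theta}}f_0(\pmb{\phi}(s,a)),\pmb{\theta}\rangle$ with a $\pmb{\theta}$-independent gradient. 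This is exactly the structure already exploited for $g_0$ in Lemma~\ref{lem:g_lipschitz}.

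First I would establish the pointwise bound $|f_0(\pmb{\theta}_1;\pmb{\phi}(s,a))-f_0(\pmb{\theta}_2;\pmb{\phi}(s,a))|\le\|\pmb{\theta}_1-\pmb{\theta}_2\|$ for every $(s,a)$. Writing the difference as $\tfrac{1}{\sqrt m}\sum_{r=1}^m b_r\mathds{1}\{\mathbf{w}_{r,0}^\intercal\pmb{\phi}(s,a)>0\}(\mathbf{w}_{r,1}-\mathbf{w}_{r,2})^\intercal\pmb{\phi}(s,a)$, I bound each summand by $\tfrac{1}{\sqrt m}\|\mathbf{w}_{r,1}-\mathbf{w}_{r,2}\|$ (using $|b_r|=1$, the indicator is at most $1$, Cauchy--Schwarz, and the feature normalization $\|\pmb{\phi}(s,a)\|\le1$), and then apply Cauchy--Schwarz over $r$, $\sum_{r}a_r\le\sqrt m\,(\sum_r a_r^2)^{1/2}$, to obtain $\tfrac{1}{\sqrt m}\cdot\sqrt m\,(\sum_r\|\mathbf{w}_{r,1}-\mathbf{w}_{r,2}\|^2)^{1/2}=\|\pmb{\theta}_1-\pmb{\theta}_2\|$.

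Next I would lift this to the greedy term via $|\max_a u(a)-\max_a v(a)|\le\max_a|u(a)-v(a)|$, giving $|\max_a f_0(\pmb{\theta}_1;\pmb{\phi}(s^\prime,a))-\max_a f_0(\pmb{\theta}_2;\pmb{\phi}(s^\prime,a))|\le\|\pmb{\theta}_1-\pmb{\theta}_2\|$ for every $s^\prime$. Plugging this into the definition~(\ref{eq:y0_function}) of $y_0$, the reward terms cancel, and the triangle inequality together with $\sum_{s^\prime}p(s^\prime|s,1)=\sum_{s^\prime}p(s^\prime|s,0)=1$ yields $|y_0(\pmb{\theta}_1)-y_0(\pmb{\theta}_2)|\le\|\pmb{\theta}_1-\pmb{\theta}_2\|+\|\pmb{\theta}_1-\pmb{\theta}_2\|=2\|\pmb{\theta}_1-\pmb{\theta}_2\|$, i.e.\ $L_y=2$.

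There is essentially no hard step here; the computation is parallel to that for $g_0$ in Lemma~\ref{lem:g_lipschitz}, the only extra ingredient being the $1$-Lipschitzness of $\max_a$. The one point I would state carefully is the meaning of ``linear'': $f_0$ itself is genuinely affine in $\pmb{\theta}$, whereas $y_0$ is a convex combination of two $\max_a$ of affine functions and hence only piecewise linear in $\pmb{\theta}$ — but still globally Lipschitz with constant $2$, uniformly in the fixed state $s$, independent of the width $m$ and of the initialization, using only $\|\pmb{\phi}\|\le1$.
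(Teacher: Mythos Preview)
Your proof is correct and follows essentially the same route as the paper: cancel the reward terms, split $y_0(\pmb{\theta}_1)-y_0(\pmb{\theta}_2)$ into the $P(\cdot|s,1)$ and $P(\cdot|s,0)$ pieces via the triangle inequality, bound each by the pointwise $1$-Lipschitzness of $f_0$ in $\pmb{\theta}$ (which the paper phrases as $\|\nabla_{\pmb{\theta}}f_0\|\le1$ combined with~(\ref{eq:f_difference})) propagated through $\max_a$ via~(\ref{eq:max_f_difference}), and sum the two contributions to get $L_y=2$. Your remark that $y_0$ is only \emph{piecewise} linear due to the $\max_a$ is in fact more careful than the paper's own statement.
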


\begin{remark}
{The Lipschitz continuity of $h_0$ guarantees the existence of a solution $\pmb{\theta}$ to the ODE  $\dot{\pmb{\theta}}$ for a fixed ${\lambda}$, while the  Lipschitz continuity of $g_0$ and $y_0$ ensures the existence of a solution ${\lambda}$ to the ODE $\dot{{\lambda}}$ when $\pmb{\theta}$ is fixed.   These lemmas often serve as assumptions when proving the convergence rate for both linear and nonlinear 2TSA \cite{konda2004convergence,mokkadem2006convergence,dalal2018finite,gupta2019finite,doan2020nonlinear,dalal2020tale,kaledin2020finite}. } %The proofs for above lemmas can be found in supplementary materials xxx.
\end{remark}

% Similar as \cite{chen2019performance}, we can assume that $L_h\geq\max_{X\in\cX}\|h_0(X,0,0)\|, L_g\geq\max_{X\in\cX}\|g_0(X,0,0)\|$ without loss of generality. Then based on triangular inequality we have
% \begin{align*}
%     \|h_0(X,\pmb{\theta},\pmb{\lambda})\|\leq L_h(\|\pmb{\theta}\|+\|\pmb{\lambda}\|+1), ~\|g_0(X,\pmb{\theta},\pmb{\lambda})\|\leq L_g(\|\pmb{\theta}\|+\|\pmb{\lambda}\|+1), \forall \pmb{\theta}, \pmb{\lambda}, X\in\cX.
% \end{align*}

\begin{lemma}\label{lem:gh_monotone}
For a fixed ${\lambda}$, there exists a constant $\mu_1>0$ such that $h_0(X,\pmb{\theta}, \lambda)$ {defined in~(\ref{eq:h_Q})} satisfies
\begin{align}\nonumber
\mathbb{E}[\hat{\pmb{\theta}}^\intercal h_0(X,\pmb{\theta}, {\lambda})]\leq -\mu_1\|\hat{\pmb{\theta}}\|^2.
\end{align}
For fixed $\pmb{\theta}$, there exists a constant $\mu_2>0$ such that  $g_0(X,\pmb{\theta},\lambda)$ {defined in~(\ref{eq:g-Q})} satisfies
\begin{align}\nonumber
   \mathbb{E}[ \hat{{\lambda}} g_0(X,\pmb{\theta}, {\lambda})]\leq -\mu_2\|\hat{{\lambda}}\|^2.
\end{align}
\end{lemma}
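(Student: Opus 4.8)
The plan is to establish the two strong-monotonicity (negative-definiteness) inequalities separately, exploiting the fact that under the local linearization $f_0(\pmb{\theta};\pmb{\phi}(s,a))$ the relevant operators are \emph{affine} in $\pmb{\theta}$ and $\lambda$. First I would treat the $g_0$ bound, which is the easier of the two. Recall $g_0(\pmb{\theta})=f_0(\pmb{\theta};\pmb{\phi}(s,1))-f_0(\pmb{\theta};\pmb{\phi}(s,0))$ is linear in $\pmb{\theta}$ by Lemma~\ref{lem:g_lipschitz}, and $y_0(\pmb{\theta})$ is the unique root of $G_0(\pmb{\theta},\cdot)=0$, so $\mathbb{E}_\mu[g_0(\pmb{\theta}, y_0(\pmb{\theta}))]=0$. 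Writing the $\lambda$-component of $g_0$ explicitly, the map $\lambda\mapsto \mathbb{E}_\mu[g_0(X,\pmb{\theta},\lambda)]$ is affine with slope exactly $-1$ (the coefficient of $(1-A_k)\lambda_k$ inside $\Delta_{k,0}$ enters with a negative sign after the relative-value centering), hence $\mathbb{E}_\mu[g_0(X,\pmb{\theta},\lambda)] = -(\lambda - y_0(\pmb{\theta})) = -\hat{\lambda}$, and multiplying by $\hat{\lambda}$ gives $\mathbb{E}[\hat{\lambda}\, g_0(X,\pmb{\theta},\lambda)] = -\|\hat{\lambda}\|^2$, so $\mu_2 = 1$ works. (If the precise slope differs from $1$ due to the discount-free averaging convention, it is still a fixed positive constant, which is all that is needed.)

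For the $h_0$ bound I would proceed as follows. With $\lambda$ fixed, $H_0(\pmb{\theta},\lambda) := \mathbb{E}_\mu[h_0(X,\pmb{\theta},\lambda)] = \mathbb{E}_\mu[\nabla_{\pmb{\theta}}f_0(\pmb{\theta};\pmb{\phi}(S,A))\,\Delta_{k,0}]$. Because $f_0$ is linear in $\pmb{\theta}$, its gradient $\nabla_{\pmb{\theta}}f_0(\pmb{\theta};\pmb{\phi}(s,a))$ is a \emph{constant} vector $\psi(s,a)$ independent of $\pmb{\theta}$ (namely $\psi(s,a)_{r}=\frac{1}{\sqrt m}b_r\mathds{1}\{\mathbf{w}_{r,0}^\intercal\pmb{\phi}(s,a)>0\}\pmb{\phi}(s,a)$), so $H_0$ is affine in $\pmb{\theta}$. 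Then $H_0(\pmb{\theta},\lambda)-H_0(\pmb{\theta}_0^*,\lambda) = \mathbb{E}_\mu[\psi(S,A)(\psi(S',\cdot)\text{-term} - \psi(S,A))^\intercal]\hat{\pmb{\theta}}$, i.e. $H_0(\pmb{\theta},\lambda) = -\Phi(\lambda)\hat{\pmb{\theta}} + (\text{const in }\pmb{\theta})$ for a matrix $\Phi(\lambda)$. Since $H_0(\pmb{\theta}_0^*,\lambda^*)=0$ by definition of the approximate stationary point, I would absorb the affine constant and show that the negative-definiteness reduces to showing that the matrix $\Phi := \mathbb{E}_\mu[\psi(S,A)(\psi(S,A)-\gamma\text{-style shifted term})^\intercal]$ — coming from the relative Bellman operator in the average-reward setting — is positive definite on the relevant subspace. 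This is the standard argument that the projected Bellman operator for relative Q-learning is a contraction in span semi-norm (Assumption~\ref{asump:span} with contraction factor $\kappa<1$), transferred via the feature covariance $\mathbb{E}_\mu[\psi\psi^\intercal]$, which is positive definite because the features are linearly independent. Concretely: $\hat{\pmb{\theta}}^\intercal H_0(\pmb{\theta},\lambda) = -\hat{\pmb{\theta}}^\intercal\Phi\hat{\pmb{\theta}} \le -\lambda_{\min}(\Phi_{\mathrm{sym}})\|\hat{\pmb{\theta}}\|^2$, and one sets $\mu_1 := \lambda_{\min}\big((\Phi+\Phi^\intercal)/2\big) > 0$; positivity follows from combining the $(1-\kappa)$ gap of the Bellman contraction with the smallest eigenvalue of the feature second-moment matrix.

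The main obstacle I anticipate is verifying strict positive definiteness of $\Phi$ rather than mere positive semi-definiteness, in the average-reward (relative value) setting where the usual $\gamma$-discount contraction is unavailable. Here the relative Q-learning operator contracts only in the span semi-norm (quotient by constants), so $\mathbb{E}_\mu[\psi(S,A)(\mathcal{T}_0 - \mathrm{Id})\psi^\intercal]$ is only negative semi-definite on all of $\mathbb{R}^{md}$, with the constant functions in its kernel. To get a strict bound one must either (i) invoke the centering operator $I_0(\pmb{\theta})$ in $\Delta_{k,0}$, which precisely projects out the problematic constant direction so that the operator becomes a genuine contraction on the orthogonal complement, or (ii) restrict attention to the effective parameter subspace and argue that $\hat{\pmb{\theta}}$ stays in it. I would formalize route (i): show the centering makes $\mathcal{T}_0$ a $\|\cdot\|_\infty$-contraction on the subspace $\{v : \sum_s v(s)=0\}$ with modulus $\kappa$, then combine with $\lambda_{\min}(\mathbb{E}_\mu[\psi\psi^\intercal])>0$ (linear independence of features plus the $\mathds{1}\{\mathbf{w}_{r,0}^\intercal\pmb{\phi}>0\}$ activation pattern being nondegenerate with high probability over the initialization) to extract $\mu_1$. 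Handling the interplay between the ReLU activation indicators and the feature covariance nondegeneracy — ensuring $\mathbb{E}_\mu[\psi\psi^\intercal]$ does not collapse — is the delicate technical point, and I expect the proof to lean on a high-probability-over-initialization statement as in \cite{cai2023neural,xu2020finite}.
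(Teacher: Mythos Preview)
Your treatment of the $g_0$ bound rests on a misreading of the definition. In the paper, $g_0(X,\pmb{\theta},\lambda)=f_0(\pmb{\theta};\pmb{\phi}(s,1))-f_0(\pmb{\theta};\pmb{\phi}(s,0))$ is a pure difference of the two linearized Q--values; it does \emph{not} contain the TD error $\Delta_{k,0}$ and has no explicit $\lambda$--dependence whatsoever. Consequently there is no ``slope $-1$ in $\lambda$'' to extract, and the computation $\mathbb{E}[g_0]=-(\lambda-y_0(\pmb{\theta}))$ is simply false under the paper's definitions. The paper's own argument for this part is completely different in spirit: it is a \emph{sign} argument coming from indexability. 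Namely, $y_0(\pmb{\theta})$ is by construction the subsidy level at which active and passive are indifferent; if $\lambda>y_0(\pmb{\theta})$ the passive action is favored so $f_0(\pmb{\theta};\pmb{\phi}(s,0))>f_0(\pmb{\theta};\pmb{\phi}(s,1))$, hence $g_0<0$, and symmetrically for $\lambda<y_0(\pmb{\theta})$. Thus $\hat\lambda$ and $g_0$ always carry opposite signs, giving $\hat\lambda\,g_0\le 0$. Your linear--slope picture would need $g_0$ to be an affine function of $\lambda$, which it is not.

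For the $h_0$ bound your route is genuinely different from the paper's. You aim for a matrix--eigenvalue statement: write $H_0(\pmb{\theta},\lambda)=-\Phi\hat{\pmb{\theta}}+\text{const}$ and pull out $\mu_1=\lambda_{\min}\big((\Phi+\Phi^\intercal)/2\big)$ via Bellman span--contraction plus nondegeneracy of the random--feature second moment $\mathbb{E}_\mu[\psi\psi^\intercal]$. The paper instead expands $\hat{\pmb{\theta}}^\intercal\big(h_0(X,\pmb{\theta},\lambda)-h_0(X,\pmb{\theta}_0^*,y_0(\pmb{\theta}_0^*))\big)$ term by term, using (i) that $\nabla_{\pmb{\theta}}f_0$ is constant in $\pmb{\theta}$, (ii) the elementary bound $\max_{a_1}f_0(\pmb{\theta})-\max_{a_2}f_0(\pmb{\theta}_0^*)\le\max_{a'}(f_0(\pmb{\theta})-f_0(\pmb{\theta}_0^*))$, and (iii) a Whittle--index sign argument to drop the cross term $\hat{\pmb{\theta}}^\intercal\mathbb{E}[\nabla f_0\cdot(1-a)(\lambda-y_0(\pmb{\theta}_0^*))]\le 0$. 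This direct expansion sidesteps the machinery you invoke (feature--covariance positive--definiteness over the random initialization, explicit use of the span--contraction constant $\kappa$), at the cost of a less quantitative conclusion: the paper effectively establishes $\mathbb{E}[\hat{\pmb{\theta}}^\intercal h_0]\le 0$ and asserts the existence of $\mu_1>0$ without exhibiting it as a function of problem constants. Your anticipated obstacle---that in the undiscounted average--reward setting one only has a span semi--norm contraction, so constants must be projected out to get a strict gap---is real, and the paper's route simply does not engage with it at that level of detail.
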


\begin{remark}
Lemma~\ref{lem:gh_monotone} guarantees the stability and uniqueness of the solution $\pmb{\theta}$ to the ODE  $\dot{\pmb{\theta}}$ for a fixed ${\lambda}$, and the uniqueness of the solution ${\lambda}$ to the ODE $\dot{{\lambda}}$ for a fixed $\pmb{\theta}$.  This assumption can be viewed as a relaxation of the stronger monotone property of nonlinear mappings \cite{doan2020nonlinear, chen2019performance}, since it is automatically satisfied if $h$ and $g$ are strong monotone as assumed in \cite{doan2020nonlinear}.
%This assumption is milder than the strong monotone assumption made in \cite{doan2020nonlinear, chen2019performance},
\end{remark}

%%%%%%%%%%%%%%%%%%%%%%%%%%%%%%%%%%%%

\begin{lemma}\label{lemma:mixing_time}
Under Assumption \ref{assumption:markovian} and Lemma \ref{lem:h_lipschitz}, there exist constants $C>0$, $\rho\in(0,1)$ and $L=\max(3, \max_X h_0(X, \pmb{\theta}_0^*),y_0(\pmb{\theta}_0^*))$ such that
\begin{align*}
     \tau_\delta\leq \frac{\log(1/\delta)+\log(2LCmd)}{\log(1/\rho)}.
\end{align*}
\end{lemma}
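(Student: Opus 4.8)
The plan is to unfold the mixing-time definition~(\ref{def:mixing_time}), upper bound the bias $\|\mathbb{E}[h_0(X_k,\pmb{\theta},\lambda)\mid X_0=x]-H_0(\pmb{\theta},\lambda)\|$ by the product of a total-variation distance (which decays geometrically by Assumption~\ref{assumption:markovian}) and a linear growth bound on $h_0$ (furnished by Lemma~\ref{lem:h_lipschitz}), and then invert the resulting exponential-in-$k$ inequality. I take $C,\rho$ to be the constants from Assumption~\ref{assumption:markovian} and $L$ as in the statement. Since $\{X_k\}$ lives on the finite set $\gX$ and $H_0(\pmb{\theta},\lambda)=\mathbb{E}_\mu[h_0(X,\pmb{\theta},\lambda)]$, one can write
\[
\mathbb{E}[h_0(X_k,\pmb{\theta},\lambda)\mid X_0=x]-H_0(\pmb{\theta},\lambda)=\sum_{x'\in\gX}\bigl(P(X_k=x'\mid X_0=x)-\mu(x')\bigr)h_0(x',\pmb{\theta},\lambda),
\]
so the triangle inequality and the definition of the TV distance give $\|\mathbb{E}[h_0(X_k,\pmb{\theta},\lambda)\mid X_0=x]-H_0(\pmb{\theta},\lambda)\|\le 2\,d_{TV}(P(X_k\mid X_0=x),\mu)\,\max_{x'\in\gX}\|h_0(x',\pmb{\theta},\lambda)\|$.

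Next I would bound $\|h_0\|$ using Lemma~\ref{lem:h_lipschitz}: for every $X\in\gX$,
\[
\|h_0(X,\pmb{\theta},\lambda)\|\le \|h_0(X,\pmb{\theta}_0^*,y_0(\pmb{\theta}_0^*))\|+L_{h,1}\|\pmb{\theta}-\pmb{\theta}_0^*\|+L_{h,2}\|\lambda-y_0(\pmb{\theta}_0^*)\|,
\]
where the base term is controlled by $L\,md$ — the factor $md$ converting the scalar quantities $\max_X h_0(X,\pmb{\theta}_0^*)$ and $y_0(\pmb{\theta}_0^*)$ appearing in the definition of $L$ into a bound on the Euclidean norm of the $md$-dimensional vector $h_0$ — and where $L\ge L_{h,1}=3\ge L_{h,2}=1$. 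Hence $\max_{x'}\|h_0(x',\pmb{\theta},\lambda)\|\le L\,md\,(1+\|\pmb{\theta}-\pmb{\theta}_0^*\|+\|\lambda-y_0(\pmb{\theta}_0^*)\|)$. Inserting this together with the geometric-ergodicity bound $d_{TV}(P(X_k\mid X_0=x),\mu)\le C\rho^k$ into the previous display yields
\[
\bigl\|\mathbb{E}[h_0(X_k,\pmb{\theta},\lambda)\mid X_0=x]-H_0(\pmb{\theta},\lambda)\bigr\|\le 2CL\,md\,\rho^k\bigl(1+\|\pmb{\theta}-\pmb{\theta}_0^*\|+\|\lambda-y_0(\pmb{\theta}_0^*)\|\bigr).
\]
Therefore the inequality defining $\tau_\delta$ in~(\ref{def:mixing_time}) is satisfied as soon as $2CL\,md\,\rho^k\le\delta$; since $\rho\in(0,1)$ we have $\log(1/\rho)>0$, and taking logarithms this is $k\ge\frac{\log(1/\delta)+\log(2LCmd)}{\log(1/\rho)}$, so $\tau_\delta$ is bounded by the stated quantity.

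The main obstacle is the second step: one must produce a growth bound on $\|h_0(x',\pmb{\theta},\lambda)\|$ that is affine in the residual $\|\pmb{\theta}-\pmb{\theta}_0^*\|+\|\lambda-y_0(\pmb{\theta}_0^*)\|$ with a well-controlled intercept, so that the bias can be matched against the right-hand side of~(\ref{def:mixing_time}); this is exactly where the global Lipschitz continuity of $h_0$ (Lemma~\ref{lem:h_lipschitz}) and the boundedness of $h_0$ and $y_0$ at the surrogate stationary point $\pmb{\theta}_0^*$ — both packaged into the constant $L$ — are used in tandem, and where the parameter dimension $md$ of the two-layer network enters the logarithm. Everything else is the routine inversion of a geometric decay rate, as in the linear-function-approximation analyses on which Definition~(\ref{def:mixing_time}) is modeled.
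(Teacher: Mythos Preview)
Your approach is correct and structurally the same as the paper's: bound $\|h_0\|$ affinely in the residual via Lemma~\ref{lem:h_lipschitz}, multiply by the geometrically decaying TV distance from Assumption~\ref{assumption:markovian}, and invert the resulting exponential to obtain $2LCmd\,\rho^k\le\delta$. The only cosmetic difference is where the dimension factor $md$ enters: you fold it into the intercept of the norm bound on $h_0$ (reading $L$ as a coordinate bound and inflating to the Euclidean norm of the $md$-vector), whereas the paper keeps the tighter estimate $\|h_0(X,\pmb{\theta},\lambda)\|\le L(\|\pmb{\theta}-\pmb{\theta}_0^*\|+\|\lambda-y_0(\pmb{\theta}_0^*)\|+1)$ and instead passes from $\|\cdot\|_2$ to $\|\cdot\|_1$, bounding each of the $md$ coordinates of $\mathbb{E}[h_0(X_k,\cdot)\mid X_0]-H_0(\cdot)$ by $2L(\cdots+1)C\rho^k$ via the scalar TV-distance inequality, so that the $md$ appears from summing coordinates rather than from the size of $h_0$.
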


\begin{remark}
$\tau_\delta$ is equivalent to the mixing time of the underlying Markov chain %, which relies on the choice of $\delta$. It is shown in \cite{chen2019performance} that
satisfying $\lim_{\delta\rightarrow 0}\delta\tau_{\delta}=0$ \cite{chen2019performance}. For simplicity, we remove the subscript and denote it as $\tau$.
\end{remark}

We now present the finite-time error bound for the Lyapunov function $\hat{M}(\pmb{\theta}_k, \lambda_k)$ in (\ref{eq:lyapunov-function2}).

\begin{theorem}\label{thm:QW_convergence}
{ Consider iterates $\{\pmb{\theta}_k\}$ and $\{{\lambda}_k\}$  generated by \qwhittleNFA  in ~(\ref{eq:theta_update1}) and~(\ref{eq:lambda_update1})}.  Given Lemma \ref{lem:h_lipschitz}-\ref{lem:gh_monotone},
   $\alpha_k=\frac{\alpha_0}{(k+1)},\eta_k=\frac{\eta_0}{(k+1)^{4/3}}$, $C_1:=c_1(\|\pmb{\theta}_0\|+\|\lambda_0\|+1)$ with a constant $c_1$,
%\end{align}
\begin{align}\label{eq:convergence_bound}
  \mathbb{E}[\hat{M}(\pmb{\theta}_{k+1},\lambda_{k+1})|\mathcal{F}_{k-\tau}]
   & \leq \frac{\tau^2\mathbb{E}[\hat{M}(\pmb{\theta}_\tau,\lambda_\tau)]}{(k+1)^2}+\frac{600\alpha_0^3}{\eta_0}\frac{(C_1+\|\hat{\pmb{\theta}}_0\|)^2+(2C_1+\|\hat{\lambda}_0\|)^2}{(k+1)^{2/3}}\nonumber\allowdisplaybreaks\\
    &\qquad\qquad\qquad+\frac{\mathcal{O}\Big(c_1^3(\|\pmb{\theta}_0\|\!+\!|\lambda_0|\!+\!1)^3 m^{-1/2}\Big)}{(k+1)^{2/3}}, \quad \forall k\geq \tau.
\end{align}
%where $C_1:=c_1(\|\pmb{\theta}_0\|+\|\lambda_0\|+1)$ with $c_1$ being a constant.
%\end{align*}
\end{theorem}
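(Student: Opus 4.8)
## Proof Proposal for Theorem~\ref{thm:QW_convergence}

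The plan is to treat the linearized iterates as a genuine nonlinear 2TSA and run a Lyapunov drift argument on $\hat{M}(\pmb{\theta}_k,\lambda_k)$, handling the Markovian noise via the mixing-time bound in Lemma~\ref{lemma:mixing_time}. First I would write the one-step expansion of $\hat{M}(\pmb{\theta}_{k+1},\lambda_{k+1})$ using the updates $\pmb{\theta}_{k+1}=\pmb{\theta}_k+\alpha_k h_0(X_k,\pmb{\theta}_k,\lambda_k)$ and $\lambda_{k+1}=\lambda_k+\eta_k g_0(\pmb{\theta}_k)$, being careful that the weight $\eta_k/\alpha_k$ in the Lyapunov function is itself time-varying, so there is a cross term from $\frac{\eta_{k+1}}{\alpha_{k+1}}-\frac{\eta_k}{\alpha_k}$ that must be controlled by the choice $\alpha_k=\alpha_0/(k+1)$, $\eta_k=\eta_0/(k+1)^{4/3}$ (this is exactly why the $4/3$ exponent appears). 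The error term $\hat{\lambda}_k=\lambda_k-y_0(\pmb{\theta}_k)$ also picks up a perturbation $y_0(\pmb{\theta}_{k+1})-y_0(\pmb{\theta}_k)$ on each step, which by Lemma~\ref{lem:y_lipschitz} is bounded by $L_y\alpha_k\|h_0\|$; this coupling between the two error terms is what forces the slow/fast timescale separation.

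Next I would take conditional expectation given $\mathcal{F}_{k-\tau}$ and split the drift into (i) a negative-definite term coming from Lemma~\ref{lem:gh_monotone}, i.e. $\mathbb{E}[\hat{\pmb{\theta}}_k^\intercal h_0]\leq-\mu_1\|\hat{\pmb{\theta}}_k\|^2$ and $\mathbb{E}[\hat{\lambda}_k g_0]\leq-\mu_2\|\hat{\lambda}_k\|^2$, which gives a contraction factor roughly $(1-2\mu\alpha_k)$ on $\|\hat{\pmb{\theta}}_k\|^2$ and $(1-2\mu\eta_k)$ on $\|\hat{\lambda}_k\|^2$; (ii) second-order terms $\alpha_k^2\|h_0\|^2$, $\eta_k^2\|g_0\|^2$ bounded by the Lipschitz constants in Lemmas~\ref{lem:h_lipschitz}--\ref{lem:g_lipschitz} together with a linear-growth bound $\|h_0(X,\pmb{\theta},\lambda)\|\leq L_{h,1}\|\pmb{\theta}-\pmb{\theta}_0^*\|+L_{h,2}\|\lambda-y_0(\pmb{\theta}_0^*)\|+\|h_0(X,\pmb{\theta}_0^*,y_0(\pmb{\theta}_0^*))\|$, which lets one absorb these into the Lyapunov function plus a constant proportional to $(C_1+\|\hat{\pmb{\theta}}_0\|)^2+(2C_1+\|\hat{\lambda}_0\|)^2$; and (iii) the Markovian bias term $\mathbb{E}[\hat{\pmb{\theta}}_k^\intercal(h_0(X_k,\pmb{\theta}_k,\lambda_k)-H_0(\pmb{\theta}_k,\lambda_k))\mid\mathcal{F}_{k-\tau}]$, which by the definition of $\tau_\delta$ in~(\ref{def:mixing_time}) and Lemma~\ref{lemma:mixing_time} is at most $\delta(\|\hat{\pmb{\theta}}_k\|+\|\hat{\lambda}_k\|)$ up to an $O(\alpha_{k-\tau}\tau)$ drift between times $k-\tau$ and $k$; choosing $\delta\asymp\alpha_k$ makes this term of the same order as the second-order terms.

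Then I would assemble the recursion into the canonical form $\mathbb{E}[\hat{M}_{k+1}]\leq(1-c\,\eta_k/\alpha_k\cdot\alpha_k + \text{lower order})\mathbb{E}[\hat{M}_k]+\text{(step-size}^2\text{ const)}$ — more precisely something like $\mathbb{E}[\hat{M}_{k+1}\mid\mathcal{F}_{k-\tau}]\leq(1-\tfrac{b}{k+1})\mathbb{E}[\hat{M}_k]+\tfrac{B}{(k+1)^{5/3}}+\tfrac{D_m}{(k+1)^{5/3}}$ where $D_m=\mathcal{O}(c_1^3(\|\pmb{\theta}_0\|+|\lambda_0|+1)^3 m^{-1/2})$ absorbs the $O(m^{-1/2})$ mismatch between $f$ and $f_0$ appearing through $\Delta_{k,0}$ versus $\Delta_k$ (this uses the standard overparameterization bound cited from \cite{cai2023neural,xu2020finite}). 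Unrolling this scalar recursion from $\tau$ to $k$ with the substitution $\prod_{j=\tau}^k(1-b/(j+1))\asymp(\tau/k)^b$ and summing $\sum_j (j/k)^b j^{-5/3}\asymp k^{-2/3}$ when $b$ is chosen $\geq 1$ yields the three terms in~(\ref{eq:convergence_bound}): the $\tau^2\mathbb{E}[\hat{M}_\tau]/(k+1)^2$ initial-condition term, the $\frac{600\alpha_0^3}{\eta_0}(\cdots)/(k+1)^{2/3}$ noise term, and the $\mathcal{O}(c_1^3(\cdots)^3 m^{-1/2})/(k+1)^{2/3}$ linearization term.

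The main obstacle I expect is the bookkeeping around the time-varying weight $\eta_k/\alpha_k$ interacting with the Markovian-noise drift over the window $[k-\tau,k]$: one must show that the accumulated perturbation of $\hat{M}$ over $\tau$ steps is $O(\alpha_{k-\tau}\tau)\cdot\hat{M}$ (so that conditioning on $\mathcal{F}_{k-\tau}$ rather than $\mathcal{F}_k$ costs only a constant factor), while simultaneously keeping the negative-drift coefficient from Lemma~\ref{lem:gh_monotone} large enough — after the $\eta_k/\alpha_k$ reweighting — to beat the $1/(k+1)$ rate. Getting the constant $b$ in the contraction factor to exceed $1$ (which is what makes $\sum(j/k)^b j^{-5/3}$ summable to $k^{-2/3}$ rather than to something larger) requires $\alpha_0,\eta_0$ and $\mu_1,\mu_2$ to satisfy a compatibility inequality, and verifying that this can be arranged without circularity is the delicate step; everything else is a routine but lengthy drift computation.
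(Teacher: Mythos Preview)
Your proposal follows essentially the same Lyapunov drift strategy as the paper: one-step expansion of $\hat{M}$, negative drift from Lemma~\ref{lem:gh_monotone}, Lipschitz control of the second-order terms, the $\mathcal{O}(m^{-1/2})$ overparameterization bound for the $f$ vs.\ $f_0$ mismatch, and then a telescope (the paper multiplies through by $(k+1)^2$ rather than unrolling $\prod_j(1-b/(j+1))$, but these are equivalent). Two small corrections to keep in mind: the actual algorithm updates with $h$ and $g$ built from the true network $f$, not $h_0,g_0$, so your expansion should begin from $\pmb{\theta}_{k+1}=\pmb{\theta}_k+\alpha_k h(X_k,\pmb{\theta}_k,\lambda_k)$ and immediately split $h=h_0+(h-h_0)$; and the constant $(C_1+\|\hat{\pmb{\theta}}_0\|)^2+(2C_1+\|\hat{\lambda}_0\|)^2$ in the statement comes from a separate a priori iterate-boundedness lemma showing $\|\pmb{\theta}_k-\pmb{\theta}_0\|+|\lambda_k-\lambda_0|\le C_1$ for all $k$, which the paper uses to bound $\|\hat{\pmb{\theta}}_k\|^2+\|\hat{\lambda}_k\|^2$ uniformly inside the telescoped sum rather than absorbing it into the contraction factor---you allude to this with $C_1$ but should plan to prove it as a standalone step.
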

%\begin{remark}
% The technique challenge for proving Theorem \ref{thm:QW_convergence} lies  on three aspects. The first challenge is  how to characterize the update drift among adjacent iterations of parameters $\pmb{\theta}$ and ${\lambda}$ under the Markovian noise. The second challenge is how to characterize the drift of predefined Lyapunov function $\hat{M}(\pmb{\theta}_k, {\lambda}_k)$. Third, how to appropriately define the dynamic step size $\{\alpha_k\}$ and $\{\eta_k\}$ such that the accumulated estimation error can be averaged out when $k$ is large?  The detalied proof can be found in supplementary materials.
%\end{remark}

\subsubsection{Step 3: Approximation Error between $M(\pmb{\theta}_k, \lambda_k)$ and $\hat{M}(\pmb{\theta}_k, \lambda_k$)}\label{sec:final_results}

Finally, we characterize the approximation error between Lyapunov functions $M(\pmb{\theta}_k,\lambda_k)$ and $\hat{M}(\pmb{\theta}_k, \lambda_k)$.
Since we are dealing with long-term average MDP, we assume that the total variation of the MDP is bounded \cite{sharma2020approximate}.
\begin{assumption}
    There exists $0<\kappa <1$ such that
    %\begin{align*}
      $  \sup_{(s,a), (s^\prime, a^\prime)}\|p(\cdot|s,a)-p(\cdot|s^\prime, a^\prime)\|_{TV}=2\kappa$.
    %\end{align*}
\end{assumption}
Hence, the Bellman operator is  a span-contraction operator \cite{sharma2020approximate}, i.e.,
\begin{align}\label{eq:span_contraction}
    span(\mathcal{T}f_0(\pmb{\theta}_0^*)-\mathcal{T}{f}({\pmb{\theta}^*}))\leq \kappa~ span(f_0(\pmb{\theta}_0^*)-{f}({\pmb{\theta}^*})).
\end{align}

%In addition, we assume that the gap $\|{\pmb{\theta}}_0^*-\pmb{\theta}^*\|$ is also bounded as follows.
\begin{assumption}\label{asump:span}
    %Assume the following inequality holds
    %The gap $\|{\pmb{\theta}}_0^*-\pmb{\theta}^*\|$ is also bounded as
    %\begin{align*}

      $  \|{\pmb{\theta}}_0^*-\pmb{\theta}^*\|\leq c_0\|span(f_0(\pmb{\theta}_0^*)-{f}({\pmb{\theta}^*}))\|,$
   %  \end{align*}
    with $c_0$ being a positive constant.
\end{assumption}
%This assumption   will be numerically evaluated in Section \ref{sec:exp}.
%The following lemma characterize the relation between ${M}(\pmb{\theta}_k, {\lambda}_k)$ and $\hat{M}(\pmb{\theta}_k, {\lambda}_k)$.

\begin{lemma}\label{lemma:M-and-M_hat}
For ${M}(\pmb{\theta}_k, {\lambda}_k)$ in~(\ref{eq:lyapunov-function}) and $\hat{M}(\pmb{\theta}_k, {\lambda}_k)$ in~(\ref{eq:lyapunov-function2}), with constants $c_1$ and $c_0$ (Assumption \ref{asump:span}),
\begin{align*}
   {M}(\pmb{\theta}_k, {\lambda}_k)\leq 2 \hat{M}(\pmb{\theta}_k, {\lambda}_k)\!+\!\frac{2\eta_kc_0^2}{\alpha_k(1\!-\!\kappa)} \|span(\Pi_\mathcal{F}{f}({\pmb{\theta}^*})\!-\!{f}({\pmb{\theta}^*}))\|\!+\!2\mathcal{O}\Big(\frac{c_1^3(\|\pmb{\theta}_0\|\!+\!|\lambda_0|\!+\!1)^3}{ m^{1/2}}\Big).
\end{align*}
%{\color{red} where $c_0$ is defined in Assumption \ref{asump:span}, and $c_1$ is a proper chosen constant.}
\end{lemma}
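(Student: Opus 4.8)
The plan is to bound $M(\pmb{\theta}_k,\lambda_k)$ against $\hat{M}(\pmb{\theta}_k,\lambda_k)$ block by block, using the elementary inequality $\|a+b\|^2\le 2\|a\|^2+2\|b\|^2$ to split off, in the $\pmb{\theta}$-block, the gap $\pmb{\theta}_0^*-\pmb{\theta}^*$ between the surrogate and true stationary points, and, in the $\lambda$-block, the pointwise gap $y(\pmb{\theta}_k)-y_0(\pmb{\theta}_k)$, which itself is driven by the neural-network linearization error $f-f_0$. Both cross terms are then handled by results already in hand: Assumption~\ref{asump:span} together with the span-contraction~\eqref{eq:span_contraction} for the first, and the linearization-error estimate for~\eqref{eq:neural_Q} versus~\eqref{eq:neural_Q_linear} for the second.

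For the $\pmb{\theta}$-block, write $\pmb{\theta}_k-\pmb{\theta}^*=\hat{\pmb{\theta}}_k+(\pmb{\theta}_0^*-\pmb{\theta}^*)$, so that $\tfrac{\eta_k}{\alpha_k}\|\pmb{\theta}_k-\pmb{\theta}^*\|^2\le 2\tfrac{\eta_k}{\alpha_k}\|\hat{\pmb{\theta}}_k\|^2+2\tfrac{\eta_k}{\alpha_k}\|\pmb{\theta}_0^*-\pmb{\theta}^*\|^2$. Assumption~\ref{asump:span} bounds $\|\pmb{\theta}_0^*-\pmb{\theta}^*\|$ by $c_0\|span(f_0(\pmb{\theta}_0^*)-f(\pmb{\theta}^*))\|$, and I would convert this into a bound involving $\|span(\Pi_{\mathcal{F}}f(\pmb{\theta}^*)-f(\pmb{\theta}^*))\|$ via the standard projected-fixed-point argument in the span semi-norm: writing $f_0(\pmb{\theta}_0^*)=\Pi_{\mathcal{F}}\mathcal{T}f_0(\pmb{\theta}_0^*)$ and $\Pi_{\mathcal{F}}f(\pmb{\theta}^*)=\Pi_{\mathcal{F}}\mathcal{T}f(\pmb{\theta}^*)$ (using $\mathcal{T}f(\pmb{\theta}^*)=f(\pmb{\theta}^*)$), the triangle inequality together with nonexpansiveness of $\Pi_{\mathcal{F}}$ in the span semi-norm and the span-contraction~\eqref{eq:span_contraction} gives $\|span(f_0(\pmb{\theta}_0^*)-f(\pmb{\theta}^*))\|\le\frac{1}{1-\kappa}\|span(\Pi_{\mathcal{F}}f(\pmb{\theta}^*)-f(\pmb{\theta}^*))\|$. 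This yields the term proportional to $\tfrac{\eta_k}{\alpha_k}\tfrac{c_0^2}{(1-\kappa)^2}\|span(\Pi_{\mathcal{F}}f(\pmb{\theta}^*)-f(\pmb{\theta}^*))\|^2$.

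For the $\lambda$-block, write $\lambda_k-y(\pmb{\theta}_k)=\hat{\lambda}_k+(y_0(\pmb{\theta}_k)-y(\pmb{\theta}_k))$, so $\|\lambda_k-y(\pmb{\theta}_k)\|^2\le 2\|\hat{\lambda}_k\|^2+2\|y_0(\pmb{\theta}_k)-y(\pmb{\theta}_k)\|^2$. From~\eqref{eq:y_function} and~\eqref{eq:y0_function}, $y(\pmb{\theta}_k)-y_0(\pmb{\theta}_k)=\sum_{s'}(p(s'|s,1)-p(s'|s,0))\big(\max_a f(\pmb{\theta}_k;\pmb{\phi}(s',a))-\max_a f_0(\pmb{\theta}_k;\pmb{\phi}(s',a))\big)$, and since $\max_a$ is $1$-Lipschitz and $\sum_{s'}|p(s'|s,1)-p(s'|s,0)|\le 2$, we get $\|y_0(\pmb{\theta}_k)-y(\pmb{\theta}_k)\|\le 2\max_{s',a}|f(\pmb{\theta}_k;\pmb{\phi}(s',a))-f_0(\pmb{\theta}_k;\pmb{\phi}(s',a))|$. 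Invoking the linearization-error bound between~\eqref{eq:neural_Q} and~\eqref{eq:neural_Q_linear} from~\cite{cai2023neural,xu2020finite} — which scales polynomially in $\|\pmb{\theta}_0\|+|\lambda_0|+1$ and in $m^{-1/2}$ once $\pmb{\theta}_k$ is confined to the ball of radius $\mathcal{O}(\|\pmb{\theta}_0\|+|\lambda_0|+1)$ around $\pmb{\theta}_0$ underlying the constant $C_1$ — gives the $\mathcal{O}\big(c_1^3(\|\pmb{\theta}_0\|+|\lambda_0|+1)^3 m^{-1/2}\big)$ term. Summing the two blocks and recognizing $2\tfrac{\eta_k}{\alpha_k}\|\hat{\pmb{\theta}}_k\|^2+2\|\hat{\lambda}_k\|^2=2\hat{M}(\pmb{\theta}_k,\lambda_k)$ gives the stated bound.

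The main obstacle is the span-semi-norm projected-fixed-point estimate: one must verify that $\Pi_{\mathcal{F}}$, the projection onto the linearized class $\mathcal{F}$, is (or can be taken to be) nonexpansive with respect to the span semi-norm — otherwise an additional constant has to be carried through — and that $f_0(\pmb{\theta}_0^*)$ genuinely solves the projected Bellman equation $f_0(\pmb{\theta}_0^*)=\Pi_{\mathcal{F}}\mathcal{T}f_0(\pmb{\theta}_0^*)$, so that~\eqref{eq:span_contraction} can be chained against it. A secondary technical point, forced by the absence of a projection step in \qwhittleNFA, is that the uniform control of $|f(\pmb{\theta}_k;\cdot)-f_0(\pmb{\theta}_k;\cdot)|$ over the a priori unbounded iterates requires the boundedness estimate $\|\pmb{\theta}_k-\pmb{\theta}_0\|=\mathcal{O}(\|\pmb{\theta}_0\|+|\lambda_0|+1)$, which must be established before this lemma is applied.
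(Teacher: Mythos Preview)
Your proposal is correct and follows essentially the same route as the paper: split each block via $\|a+b\|^2\le 2\|a\|^2+2\|b\|^2$, control $\|\pmb{\theta}_0^*-\pmb{\theta}^*\|$ by Assumption~\ref{asump:span} and the projected-fixed-point/span-contraction chain $f_0(\pmb{\theta}_0^*)=\Pi_{\mathcal F}\mathcal T f_0(\pmb{\theta}_0^*)$, $\Pi_{\mathcal F}f(\pmb{\theta}^*)=\Pi_{\mathcal F}\mathcal T f(\pmb{\theta}^*)$, and control $\|y_0(\pmb{\theta}_k)-y(\pmb{\theta}_k)\|$ by the linearization error $f-f_0$ (the paper invokes its Lemma~\ref{lemma:approximation-gap-of-h} together with the iterate bound in Lemma~\ref{lem:bounded_parameter}, exactly the two ingredients you flag at the end). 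The technical caveats you raise---nonexpansiveness of $\Pi_{\mathcal F}$ in the span semi-norm and the a priori boundedness of $\pmb{\theta}_k$---are treated in the paper at the same level of rigor you anticipate, so there is no gap in your plan relative to the paper's argument.
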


% \begin{lemma}\label{lemma:M-and-M_hat}
% For Lyapunov functions ${M}(\pmb{\theta}_k, {\lambda}_k)$ in~(\ref{eq:lyapunov-function}) and $\hat{M}(\pmb{\theta}_k, {\lambda}_k)$ in~(\ref{eq:lyapunov-function2}), we have
% \begin{align*}
%    {M}(\pmb{\theta}_k, {\lambda}_k)\leq 2 \hat{M}(\pmb{\theta}_k, {\lambda}_k) +\frac{2\eta_kc^2}{\alpha_k}\|span({f}_0(\pmb{\theta}_0^*)-{f}({\pmb{\theta}^*}))\|^2+2\|y_0(\pmb{\theta}_k)-y(\pmb{\theta}_k)\|^2.
% \end{align*}
% \end{lemma}
% Hence, separately characterizing $\|span({f}_0(\pmb{\theta}_0^*)-{f}({\pmb{\theta}^*}))\|^2$ and $\|y_0(\pmb{\theta}_k)-y(\pmb{\theta}_k)\|^2$  as $\|span(\hat{f}(\hat{\pmb{\theta}}^*)-{f}({\pmb{\theta}^*}))\|\leq \frac{1}{1-\kappa} \|span(\Pi_\mathcal{F}{f}({\pmb{\theta}^*})-{f}({\pmb{\theta}^*}))\|$ and $\|y_0(\pmb{\theta}_k)-y(\pmb{\theta}_k)\|^2\leq\mathcal{O}\Big(c_1^3(\|\pmb{\theta}_0\|+|\lambda_0|+1)^3\cdot m^{-1/2}\Big)$, and substituting in Lemma \ref{lemma:M-and-M_hat} leads to Theorem \ref{thm2:QW_convergence}. The detailed proof can be found in supplementary material Section \ref{sec:proof_of_theorem1}. 

%\input{FiniteTime_Analysis_new}

\section{Numerical Experiments}\label{sec:exp}
We numerically evaluate the performance of \qwhittleNFA using an example of circulant dynamics \cite{fu2019towards, avrachenkov2022whittle, biswas2021learn}.
The state space is $\gS=\{1, 2, 3, 4\}$. Rewards are $r(1,a)=-1, r(2,a)=r(3,a)=0,$ and $r(4,a)=1$ for $a\in\{0,1\}$. The dynamics of states are circulant and defined as
\begin{align*}
    P^1=\begin{bmatrix}
    0.5 & 0.5 & 0 & 0 \\
    0 & 0.5 & 0.5 & 0\\
    0 & 0 & 0.5 & 0.5\\
    0.5 & 0 & 0 & 0.5
    \end{bmatrix}~\text{and}~ P^0=\begin{bmatrix}
    0.5 & 0 & 0 & 0.5 \\
    0.5 & 0.5 & 0 & 0\\
    0 & 0.5 & 0.5 & 0\\
    0 & 0 & 0.5 & 0.5
    \end{bmatrix}.
\end{align*}
This indicates that the process either remains in its current state or increments if it is active (i.e., $a=1$), or it either remains the current state or decrements if it is passive (i.e., $a=0$).
The exact value of Whittle indices \cite{fu2019towards} are $\lambda(1)=-0.5, \lambda(2)=0.5, \lambda(3)=1,$ and $\lambda(4)= -1$.
% \begin{figure}
% 	\centering
% 	\begin{subfigure}[b]{0.45\textwidth}
% 	\includegraphics[width=0.7\textwidth]{Figure/TOYEXAMPLE.eps}
% 	\caption{ Q-Whittle with linear function approximation vs. Q-Whittle \cite{avrachenkov2022whittle}.}
% 	\label{fig:framework}
% 	\end{subfigure}
% \hfill
% 	\begin{subfigure}[b]{0.45\textwidth}
% 	\centering
% 	\includegraphics[width=0.7\textwidth]{Figure/TOYEXAMPLE2.eps}
%     \caption{WIQL \cite{biswas2021learn} vs. QWIC \cite{fu2019towards}}.
% 	\label{fig:framework}
% \end{subfigure}
% 	\caption{The convergence of the proposed Q-Whittle with linear function approximation.}
% \end{figure}

\begin{wrapfigure}{rt}{0.6\linewidth}
    \centering
    \vspace{-0.2in}
\begin{minipage}{.3\textwidth}
 \centering
 \includegraphics[width=1\columnwidth]{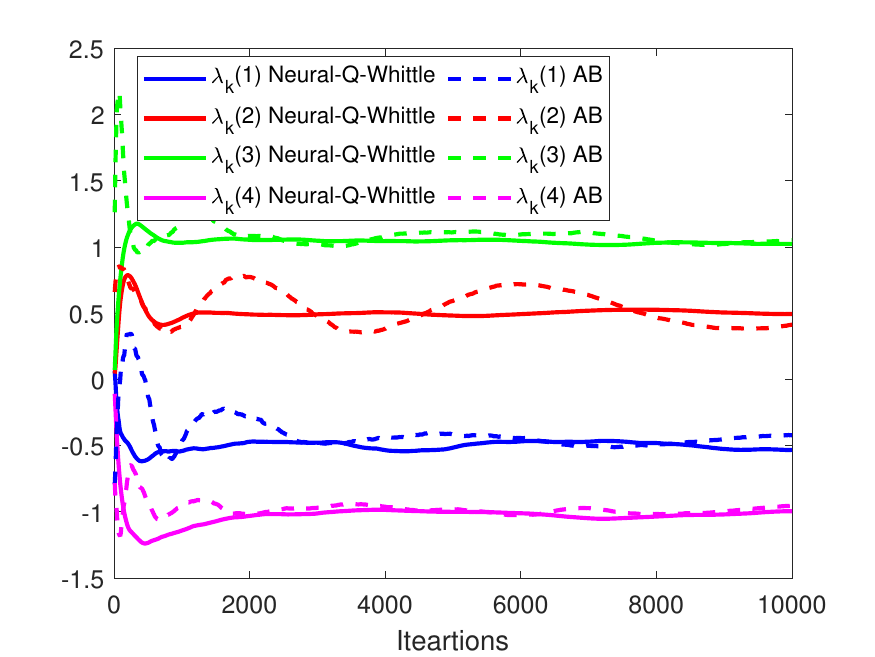}
 \vspace{-0.2in}
\subcaption{\qwhittleNFA vs.\\  \qwhittle \cite{avrachenkov2022whittle}.}
\label{fig:truewhittle}
 \end{minipage}\hfill
   \begin{minipage}{.3\textwidth}
 \centering
\includegraphics[width=1\columnwidth]{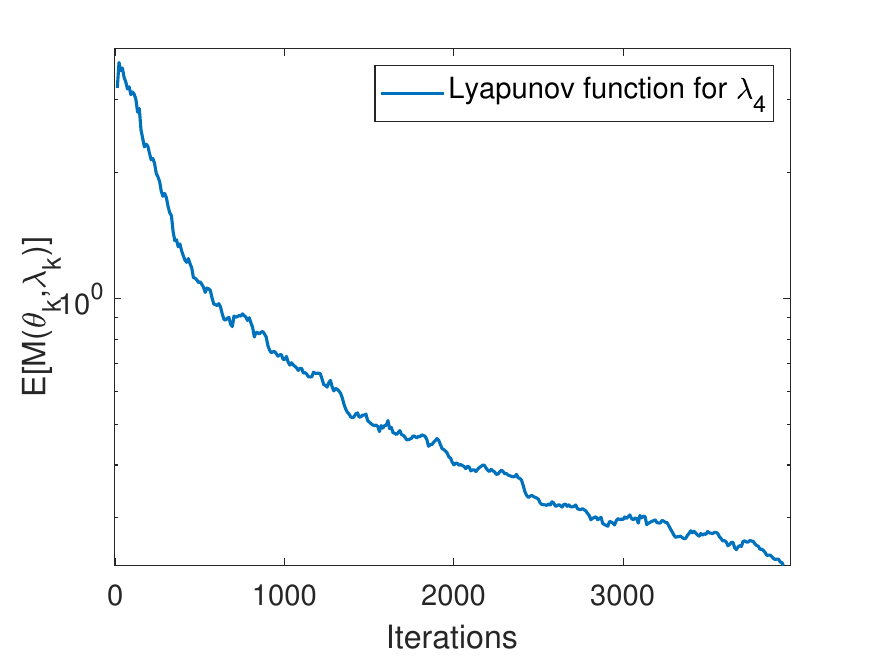}
 \vspace{-0.2in}
\subcaption{Convergence of Lyapunov function in (\ref{eq:lyapunov-function}).}
\label{fig:lyapunov}
 \end{minipage}
 \caption{Convergence of \qwhittleNFA.}
  \vspace{-0.2in}
\end{wrapfigure}

In our experiments, we set the learning rates as $\alpha_k=0.5/(k+1)$ and $\eta_k=0.1/(k+1)^{4/3}$. We use $\epsilon$-greedy for the exploration and exploitation tradeoff with $\epsilon=0.5$. We consider a two-layer neural network with the number of neurons in the hidden layer as $m=200.$ As described in Algorithm \ref{Algorithm1},  $b_r, \forall r$ are uniformly initialized in $\{-1, 1\}$ and $w_r, \forall r$ are initialized as a zero mean Gaussian distribution according to $\mathcal{N}(\pmb{0}, \mathbf{I}_d/d)$. These results are  carried out by Monte
Carlo simulations with 100 independent trials.

\textbf{Convergence to true Whittle index.}
First, we verify that \qwhittleNFA  convergences to true Whittle indices, and compare to \qwhittle, the first Whittle index based Q-learning algorithm.  As illustrated in Figure \ref{fig:truewhittle}, \qwhittleNFA guarantees the convergence to true Whittle indices and outperforms \qwhittle \cite{avrachenkov2022whittle} in the convergence speed. This is due to the fact that \qwhittleNFA updates the Whittle index of a specific state even when the current visited state is not that state.

\begin{wrapfigure}{rt}{0.6\linewidth}
    \centering
     \vspace{-0.2in}
\begin{minipage}{.3\textwidth}
 \centering
 \includegraphics[width=1\columnwidth]{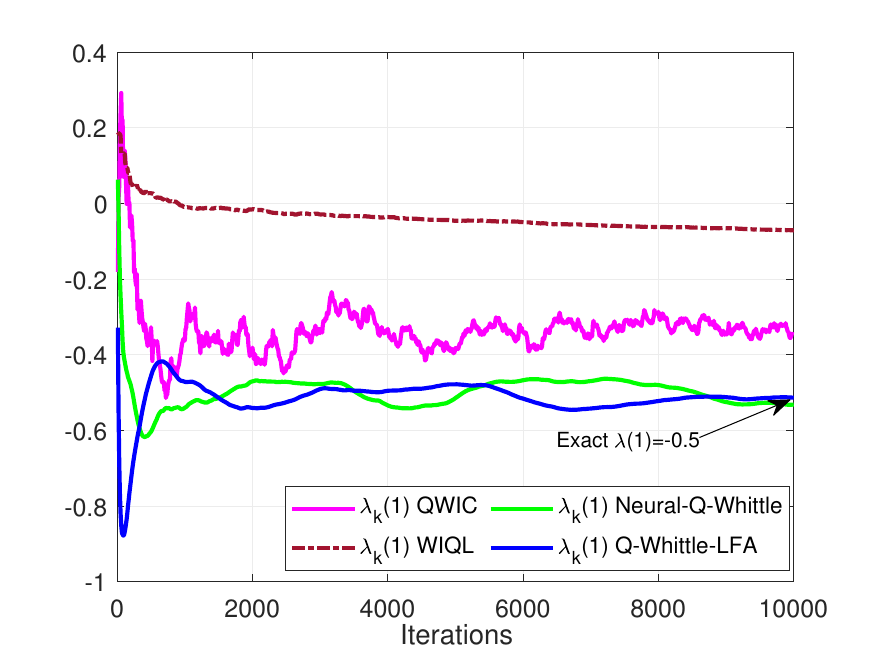}
 \vspace{-0.2in}
\subcaption{Whittle index $\lambda(1)$ for  $s=1$.}
\label{fig:s1}
 \end{minipage}\hfill
   \begin{minipage}{.3\textwidth}
 \centering
\includegraphics[width=1\columnwidth]{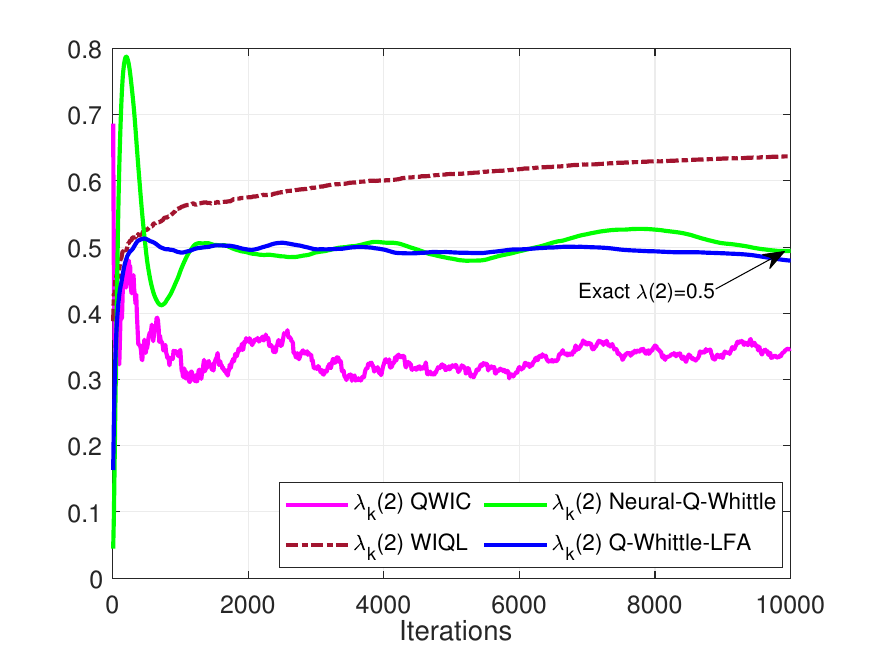}
 \vspace{-0.2in}
\subcaption{Whittle index $\lambda(2)$ for $s=2$.}
\label{fig:s2}
 \end{minipage}\hfill
 \begin{minipage}{.3\textwidth}
 \centering
 \includegraphics[width=1\columnwidth]{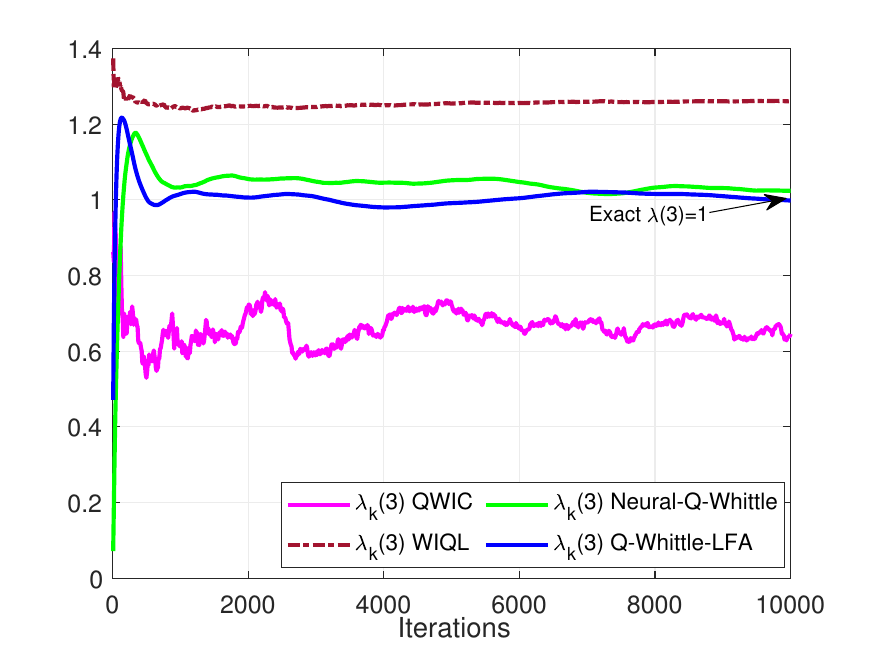}
 \vspace{-0.2in}
\subcaption{Whittle index $\lambda(3)$  $s=3$.}
\label{fig:s3}
 \end{minipage}\hfill
   \begin{minipage}{.3\textwidth}
 \centering
\includegraphics[width=1\columnwidth]{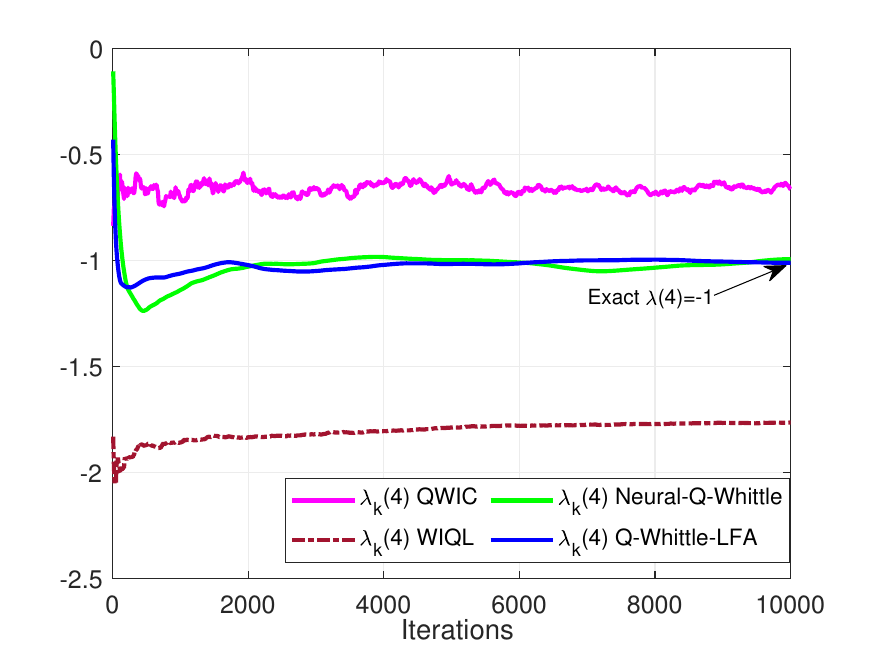}
 \vspace{-0.2in}
\subcaption{Whittle index $\lambda(4)$ for  $s=4$.}
\label{fig:s4}
 \end{minipage}

 \caption{Convergence comparison between \qwhittleNFA and benchmark algorithms.}
 \label{fig:figure4}
  \vspace{-0.1in}
\end{wrapfigure}

Second, we further compare with other other Whittle index learning algorithms, i.e., \qwhittleLFA \cite{xiong2023whittle}, WIQL \cite{biswas2021learn} and QWIC \cite{fu2019towards}in Figure \ref{fig:figure4}. As we observe from Figure \ref{fig:figure4}, only \qwhittleNFA and \qwhittleLFA in \cite{xiong2023whittle} can converge to the true Whittle indices for each state, while the other two benchmarks algorithms do not guarantee the convergence of true Whittle indices. Interestingly, the learning Whittle indices converge and maintain a correct relative order of magnitude, which is still be able to be used in real world problems \cite{xiong2023whittle}.
Moreover, we observe that \qwhittleNFA achieves similar convergence performance as \qwhittleLFA in the considered example, whereas the latter has been shown to achieve good performance in real world applications in \cite{xiong2023whittle}. Though this work focuses on the theoretical convergence analysis of Q-learning based whittle index under the neural network function approximation, it might be promising to implement it in real-world applications to fully leverage the strong representation ability of neural network functions, which serves as future investigation of this work.

%Note that many existing Whittle index based  Q-learning algorithms do not guarantee convergence to the true Whittle indices, e.g., WIQL \cite{biswas2021learn} and QWIC \cite{fu2019towards}.  See additional discussions in supplementary materials.

\begin{wrapfigure}{rt}{0.35\linewidth}
    \centering
    \vspace{-0.2in}
 \includegraphics[width=0.35\columnwidth]{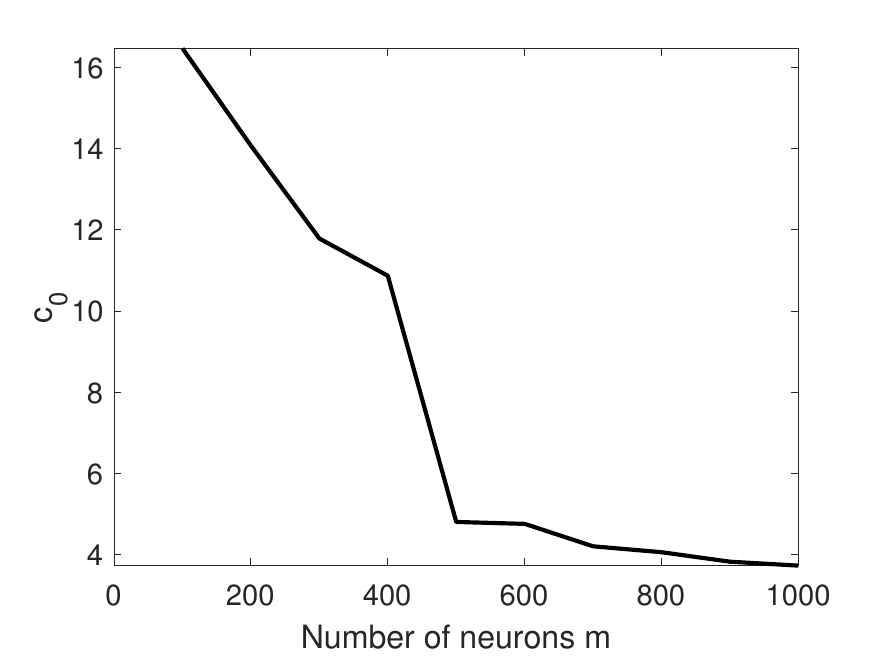}
 \vspace{-0.2in}
\caption{Verification of Assumption \ref{asump:span} w.r.t the constant $c_0$.}
 \label{fig:Assump3verify}
  \vspace{-0.15in}
\end{wrapfigure}

\textbf{Convergence of the Lyapunov function defined in (\ref{eq:lyapunov-function}).}
We also evaluate the convergence of the proposed Lyapunov function defined in (\ref{eq:lyapunov-function}), which is presented in Figure \ref{fig:lyapunov}. It depicts $\mathbb{E}[M(\pmb{\theta}_k,\lambda_k)]$ vs. the number of iterations in logarithmic scale.  For ease of presentation, we only take  state $s=4$ as an illustrative example. It is clear that $M(\pmb{\theta}_k,\lambda_k)$ converges to zero as the number of iterations increases, which is in alignment with our theoretical results in Theorem \ref{thm2:QW_convergence}.

\textbf{Verification of Assumption \ref{asump:span}.}
We now verify Assumption \ref{asump:span} that the gap between $\pmb{\theta}_0^*$ and $\pmb{\theta}^*$ can be bounded by the span of $f_0(\pmb{\theta}_0^*)$ and $f(\pmb{\theta}^*)$ with a constant $c_0$. In Figure \ref{fig:Assump3verify}, we show $c_0$ as a function of the number of neurons in the hidden layer $m$. It clearly indicates that  constant $c_0$ exists and decreases as the number of neurons grows larger.

%\textbf{Additional Experimental Results.}
%We provide additional experimental results related to the convergence comparison of the proposed \qwhittleNFA to other Whittle index learning algorithms, i.e., \qwhittleLFA \cite{xiong2023whittle}, WIQL \cite{biswas2021learn} and QWIC \cite{fu2019towards}in Figure \ref{fig:figure4}. As we observe from Figure \ref{fig:figure4}, only \qwhittleNFA and \qwhittleLFA in \cite{xiong2023whittle} can converge to the true Whittle indices for each state, while the other two benchmarks algorithms do not guarantee the convergence of true Whittle indices. Interestingly, the learning Whittle indices converge and maintain a correct relative order of magnitude, which is still be able to be used in real world problems \cite{xiong2023whittle}.Moreover, we observe that \qwhittleNFA achieves similar convergence performance as \qwhittleLFA in the considered example, whereas the latter has been shown to achieve good performance in real world applications in \cite{xiong2023whittle}. Though this work focuses on the theoretical convergence analysis of Q-learning based whittle index under the neural network function approximation, it might be promising to implement it in real-world applications to fully leverage the strong representation ability of neural network functions, which serves as future investigation of this work.

\section{Conclusion}

We presented \qwhittleNFA, a Whittle index based Q-learning algorithm for RMAB with neural network function approximation.  We proved that \qwhittleNFA achieves an $\mathcal{O}(1/k^{2/3})$ convergence rate, where $k$ is the number of iterations when data are generated from a Markov chain and   Q-function is approximated by a ReLU neural network. By viewing \qwhittleNFA as 2TSA and leveraging the Lyapunov drift method, we removed the projection step on parameter update of Q-learning with neural network function approximation.  Extending the current framework to two-timescale Q-learning (i.e., the coupled iterates between Q-function values and Whittle indices) with general deep neural network approximation is our future work. 

\newpage
\section*{Acknowledgements}
This work was supported in part by the National Science Foundation (NSF) grant RINGS-2148309, and was supported in part by funds from OUSD R\&E, NIST, and industry partners as specified in the Resilient \& Intelligent NextG Systems (RINGS) program. This work was also supported in part by the U.S. Army Research Office (ARO) grant W911NF-23-1-0072, and the U.S. Department of Energy (DOE) grant DE-EE0009341. Any opinions, findings, and conclusions or recommendations expressed in this material are those of the authors and do not necessarily reflect the views of the funding agencies. 

\bibliography{refs,refs2}
\bibliographystyle{plain}

\clearpage
\appendix
\section{Related Work}\label{sec:related}

\textbf{Online Restless Bandits.} The online RMAB setting, where the underlying MDPs are unknown, has been gaining attention, e.g., \cite{dai2011non,liu2011logarithmic,liu2012learning,tekin2012online,ortner2012regret,jung2019regret}.  However, these methods do not exploit the special structure available in the problem and contend directly with an extremely high dimensional state-action space yielding the algorithms to be too slow to be useful.  Recently,  RL based algorithms have been developed \cite{borkar2018reinforcement,fu2019towards,wang2020restless,biswas2021learn,killian2021q,xiong2022reinforcement,xiong2022Nips,xiong2022reinforcementcache,xiong2022indexwireless,avrachenkov2022whittle},
to explore the problem structure through index policies.
For instance, \cite{fu2019towards} proposed a Q-learning algorithm for Whittle index under the discounted setting, which lacks of convergence guarantees.  \cite{biswas2021learn} approximated Whittle index using the difference of $Q(s,1)-Q(s,0)$ for any state $s$, which is not guaranteed to converge to the true Whittle index in general scenarios.  To our best knowledge, the \qwhittle in (\ref{eq: Convention_Q_update})-(\ref{eq:convention_lambda_update}) proposed by  \cite{avrachenkov2022whittle} is the first algorithm with a rigorous asymptotic analysis.
Therefore, \cite{fu2019towards,avrachenkov2022whittle,biswas2021learn,killian2021q} lacked finite-time performance analysis and multi-timescale stochastic approximation algorithms usually suffer from slow convergence.

\cite{wang2020restless,xiong2022reinforcement} designed model-based low-complexity policy but is constrained to either a specific Markovian model or depends on a simulator for a finite-horizon setting which cannot be directly applied here.
Latter on, \cite{xiong2023whittle} showed the finite-time convergence performance under the \qwhittle setting of \cite{avrachenkov2022whittle} with linear function approximation. However, the underlying assumption in \cite{avrachenkov2022whittle,xiong2023whittle}  is that data samples are drawn i.i.d per iteration.  This is often  not the case in practice since data samples of Q-learning are drawn according to the underlying Markov decision process.  Till now,  the  finite-time convergence rate of \qwhittle under the more challenging Markovian setting remains to be an open problem. Though \cite{pagare2023full} proposed a novel DQN method and applied it to Whittle index learning, it lacks of theoretical convergence analysis. To our best knowledge, our work is the first to study low-complexity model-free Q-learning for RMAB with neural network function approximation and provide a finite-time performance guarantee.

% \textbf{Finite-time Analysis of Q-learning with Function Approximation.}
%   Existing Q-learning with linear function approximation \cite{melo2008analysis,bhandari2018finite,zou2019finite, chen2019performance} and neural network function approximation \cite{cai2023neural,xu2020finite}xxxxxxxxxx.

\textbf{Two-Timescale Stochastic Approximation.}
The theoretical understanding of average-reward reinforcement learning (RL) methods is limited.  Most existing results focus on asymptotic convergence \cite{tsitsiklis1999average,abounadi2001learning,wan2021learning,zhang2021average}, or finite-time performance guarantee for discounted Q-learning \cite{chen2019performance,qu2020finite,chen2020finite}. However, the analysis of average-reward RL algorithms is known to be more challenging than their discounted-reward counterparts \cite{zhang2021finite,wei2020model}.  In particular, our \qwhittleNFA  follows the 2TSA scheme \cite{borkar1997actor,konda2000actor,bhatnagar2009natural}. The standard technique for analyzing 2TSA is via the ODE method to prove asymptotic convergence  \cite{borkar2009stochastic}. Building off the importance of asymptotic results, recent years have witnessed a focus shifted to non-asymptotic, finite-time analysis of 2TSA \cite{gupta2019finite,doan2019linear,doan2020nonlinear,yang2019provably}. The closest work is \cite{doan2020nonlinear}, which characterized the convergence rate for a general non-linear 2TSA with i.i.d. noise.  We generalize this result to provide a finite-time analysis of our \qwhittleNFA with Markovian noise.
In addition, existing finite-time analysis, e.g., sample complexity \cite{zhang2021finite} and regret \cite{wei2020model} of Q-learning with average reward focus on a single-timescale SA, and hence cannot be directly applied to our \qwhittleNFA. Finally, existing Q-learning with linear function approximation \cite{melo2008analysis,bhandari2018finite,zou2019finite} and neural network function approximation \cite{cai2023neural,xu2020finite} requires an additional projection step onto a bounded set related to the unknown stationary distribution of the underlying MDPs, or focuses on a single-timescale SA \cite{chen2019performance}.

\section{Review on Whittle Index Policy}\label{sec:whittle-app}
Whittle index policy addresses the intractable issue of RMAB through decomposition. In each round $t$,
it first calculates the Whittle index for each arm $n$ independently only based on its current state $s_n(t)$, and then the Whittle index policy simply selects the $K$ arms with the highest indices to activate.
Following Whittle's approach\cite{whittle1988restless}, we can consider a system with only one arm due to the decomposition, and the Lagrangian is expressed as
\begin{align}\label{eq:lagrangian}
 L(\pi,\lambda)&=\liminf_{T\rightarrow\infty}\frac{1}{T}\mathbb{E}_\pi\sum_{t=1}^T\Big\{r(t)+\lambda\Big(1- a(t)\Big)\Big\},
\end{align}
where $\lambda$ is the Lagrangian multiplier (or the subsidy for selecting passive action). For a particular $\lambda$, the optimal activation policy can be expressed by a set of states in which it would activate this arm, which is denoted $D(\lambda)$.

\begin{defn}[Indexiability]
We denote $D(\lambda)$ as the set of states $S$ for which the optimal action for the  arm is to choose a passive action, i.e., $A=0$.  Then the arm is said to be indexable if $D(\lambda)$ increases with $\lambda$, i.e., if $\lambda>\lambda^\prime$, then $D(\lambda)\supseteq D(\lambda^{\prime})$.
\end{defn}

Following the indexability property, the Whittle index in a particular state $S$ is defined as follows.

\begin{defn} [Whittle Index]
The Whittle index in state $S$ for the indexable arm is the smallest value of the Lagrangian multiplier $\lambda$  such that the optimal policy at state $S$ is indifferent towards actions $A=0$ and $A=1$.  We denote such a Whittle index as $\lambda(S)$ satisfying $\lambda(S):=\inf_{\lambda\geq 0}\{S\in D(\lambda)\}$.
\end{defn}

\begin{defn}[Whittle index policy]
Whittle index policy is a controlled policy which activates the $K$ arms with the highest whittle index $\lambda_i(S_i(t))$ at each time slot $t$.

\end{defn}

% \section{Review on Q-Neural Approximation}\label{sec:QNeural-app}

%  For the ease of expression, we first consider a two-layer neural network
% \begin{align}
%      f(\pmb{\theta}; \pmb{\phi}(s,a))=\frac{1}{\sqrt{m}}\sum_{r=1}^m b_r\sigma(W_r^\intercal \pmb{\phi}(s,a))=\frac{1}{\sqrt{m}}\sum_{r=1}^m b_r\mathds{1}\{W_r^\intercal\pmb{\phi}(s,a)>0 \}W_r^\intercal\pmb{\phi}(s,a),
% \end{align}
% with $W_r$ be the $r$-th dimension of $\mathbf{W}$ and $b=(b_1, \ldots, b_m)\in \mathbb{R}^{1\times m}$ fixed as random initialization during the training process.

% Intuitively, when the neural network becomes dense and overparameterized with $m\rightarrow \infty$, the variation of each dimension in $\mathbf{W}$ tends to be zero, which indicates that with h.p.,
% \begin{align*}
%   \mathds{1}\{W_r(k)^\intercal\pmb{\phi}(s,a)>0 \}=\mathds{1}\{W_{r,0}^\intercal\pmb{\phi}(s,a)>0 \}, \forall k \in [T].
% \end{align*}
% As a result, the function can be approximated as
% \begin{align}
% f(\pmb{\theta}; \pmb{\phi}(s,a))=\frac{1}{\sqrt{m}}\sum_{r=1}^m b_r\mathds{1}\{W_{r,0}^\intercal\pmb{\phi}(s,a)>0 \}W_r^\intercal\pmb{\phi}(s,a),
% \end{align}
% which contains a local linearization of $f(\pmb{\theta}; \pmb{\phi}(s,a))$ at $\pmb{\theta}=\pmb{\theta}_0:=\text{vec}(\mathbf{W}_0)$.

\section{Proof of Lemmas for ``Step 2: Convergence Rate of $\hat{M}(\pmb{\theta}_k, \lambda_k)$ in (\ref{eq:lyapunov-function2})''}% in Section \ref{sec:finite-time-analysis}}

\subsection{Proof of Lemma \ref{lem:h_lipschitz}}
\begin{proof}

Recall that
$$
f_0(\pmb{\theta}; \pmb{\phi}(s,a))=\frac{1}{\sqrt{m}}\sum_{r=1}^m b_r\mathds{1}\{\mathbf{w}_{r,0}^\intercal\pmb{\phi}(s,a) >0\}\mathbf{w}_r^\intercal\pmb{\phi}(s,a).
$$
Thus we denote $\nabla_{\pmb{\theta}}f_0(\pmb{\theta};\pmb{\phi}(s, a))$ as
\begin{align}
    \label{eq:gradient}
    \nabla_{\pmb{\theta}}f_0(\pmb{\theta};\pmb{\phi}(s, a)):=\Big[&\frac{1}{\sqrt{m}} b_1\mathds{1}\{\mathbf{w}_{1,0}^\intercal\pmb{\phi}(s,a) >0\}\pmb{\phi}(s,a)^\intercal,\ldots,\nonumber\allowdisplaybreaks\\
    &\qquad\qquad\frac{1}{\sqrt{m}} b_m\mathds{1}\{\mathbf{w}_{m,0}^\intercal\pmb{\phi}(s,a) >0\}\pmb{\phi}(s,a)^\intercal\Big]^\intercal.
\end{align}
Since $\|\pmb{\phi}(s,a)\|\leq 1, \forall s\in\gS, a\in\gA $ and the fact that $b_r, \forall r\in[m]$ is uniformly initialized as $1$ and $-1$, we have $\|\nabla_{\pmb{\theta}}f_0(\pmb{\theta};\pmb{\phi}(s, a))\|\leq 1$.

Therefore,  we have the following inequality for any parameter pairs $(\pmb{\theta}_1, \lambda_1)$ and $(\pmb{\theta}_2, \lambda_2)$ with $X=(s,a,s^\prime)\in\gX$,
\begin{align*}
   &\| h_0(X,\pmb{\theta}_1, \lambda_1)- h_0(X,\pmb{\theta}_2, \lambda_2)\|\allowdisplaybreaks\\
   &=\Big\|\nabla_{\pmb{\theta}}f_0(\pmb{\theta}_1;\pmb{\phi}(s, a))\Big[r(s,a)+(1-a)\lambda_1-I_0( \pmb{\theta}_1)+\max_{a_1}f_0(\pmb{\theta}_1; \pmb{\phi}(s^\prime,a_1))-f_0(\pmb{\theta}_1; \pmb{\phi}(s,a))\Big]\allowdisplaybreaks\\
   &-\nabla_{\pmb{\theta}}f_0(\pmb{\theta}_2;\pmb{\phi}(s, a))\Big[r(s,a)+(1-a)\lambda_2-I_0( \pmb{\theta}_2)+\max_{a_2}f_0(\pmb{\theta}_2; \pmb{\phi}(s^\prime,a_2))-f_0(\pmb{\theta}_2; \pmb{\phi}(s,a))\Big]\Big\|\allowdisplaybreaks\\
    &\overset{(a_1)}{=} \Big\|\nabla_{\pmb{\theta}}f_0(\pmb{\theta}_1;\pmb{\phi}(s, a))\Big[(1-a)(\lambda_1-\lambda_2)+I_0( \pmb{\theta}_2)-I_0( \pmb{\theta}_1)+f_0(\pmb{\theta}_2; \pmb{\phi}(s,a))-f_0(\pmb{\theta}_1; \pmb{\phi}(s,a))\nonumber\allowdisplaybreaks\\
    &\qquad\qquad\qquad\qquad+\max_{a_1}\Big(f_0(\pmb{\theta}_1; \pmb{\phi}(s^\prime,a_1))-\max_{a_2}f_0(\pmb{\theta}_2; \pmb{\phi}(s^\prime,a_2))\Big]\Big\|\allowdisplaybreaks\\
   &\overset{(a_2)}{\leq} \|(1-a)(\lambda_1-\lambda_2)\|+\|f_0(\pmb{\theta}_2;\pmb{\phi}(s, a))-f_0(\pmb{\theta}_1;\pmb{\phi}(s, a))\|\allowdisplaybreaks\\
   &\qquad\qquad+\Bigg\|\frac{1}{2S}\sum_{\tilde{s}\in\gS}f_0(\pmb{\theta}_2;\pmb{\phi}(\tilde{s}, 0))-f_0(\pmb{\theta}_1;\pmb{\phi}(\tilde{s}, 0))+f_0(\pmb{\theta}_2;\pmb{\phi}(\tilde{s}, 1))-f_0(\pmb{\theta}_1;\pmb{\phi}(\tilde{s}, 1))\Bigg\|\allowdisplaybreaks\\
   &\qquad\qquad+\Big\|\max_{a_1}\Big(f_0(\pmb{\theta}_1; \pmb{\phi}(s^\prime,a_1))-\max_{a_2}f_0(\pmb{\theta}_2; \pmb{\phi}(s^\prime,a_2))\Big\|\allowdisplaybreaks\\
   &\overset{(a_3)}{\leq} \|(1-a)(\lambda_1-\lambda_2)\|+\|\nabla_{\pmb{\theta}}f_0(\pmb{\theta}_1;\pmb{\phi}(s, a))(\pmb{\theta}_2-\pmb{\theta}_1)\|\allowdisplaybreaks\\
   &\qquad\qquad\qquad\qquad+\Bigg\|\frac{1}{2S}\sum_{\tilde{s}\in\gS}\nabla_{\pmb{\theta}}f_0(\pmb{\theta}_1;\pmb{\phi}(\tilde{s}, 0))(\pmb{\theta}_2-\pmb{\theta}_1)+\nabla_{\pmb{\theta}}f_0(\pmb{\theta}_1;\pmb{\phi}(\tilde{s}, 1))(\pmb{\theta}_2-\pmb{\theta}_1)\Bigg\|\allowdisplaybreaks\\
   &\qquad\qquad\qquad\qquad+\Big\|\max_{a_1}\Big(f_0(\pmb{\theta}_1; \pmb{\phi}(s^\prime,a_1))-\max_{a_2}f_0(\pmb{\theta}_2; \pmb{\phi}(s^\prime,a_2))\Big\|\allowdisplaybreaks\\
   &\overset{(a_4)}{\leq} \|(\lambda_1-\lambda_2)\|+2\| \pmb{\theta}_1-\pmb{\theta}_2\|+\Big\|\max_{a_1}\Big(f_0(\pmb{\theta}_1; \pmb{\phi}(s^\prime,a_1))-\max_{a_2}f_0(\pmb{\theta}_2; \pmb{\phi}(s^\prime,a_2))\Big\|\allowdisplaybreaks\\
   &\overset{(a_5)}{\leq} \|(\lambda_1-\lambda_2)\|+2\| \pmb{\theta}_1-\pmb{\theta}_2\|+\Big\|\max_{a^\prime}f_0(\pmb{\theta}_1; \pmb{\phi}(s^\prime,a^\prime))-f_0(\pmb{\theta}_2; \pmb{\phi}(s^\prime,a^\prime))\Big\|\allowdisplaybreaks\\
   &\overset{(a_6)}{\leq} \|(\lambda_1-\lambda_2)\|+3\| \pmb{\theta}_1-\pmb{\theta}_2\|.
\end{align*}
Specifically, $(a_1)$ holds due to the fact that $\nabla_{\pmb{\theta}}f_0(\pmb{\theta}_1;\pmb{\phi}(s, a))=\nabla_{\pmb{\theta}}f_0(\pmb{\theta}_2;\pmb{\phi}(s, a))$ as in (\ref{eq:gradient}). Since $$I_0(\pmb{\theta}_k)=\frac{1}{2S}\sum_{\tilde{s}\in\gS}\Big[f_0(\pmb{\theta}_k; \pmb{\phi}(\tilde{s},0))+f_0(\pmb{\theta}_k; \pmb{\phi}(\tilde{s},1))\Big],$$
$(a_2)$ is due to the fact that $\|\vx+\vy\|\leq \|\vx\|+\|\vy\|, \forall \vx,\vy\in\mathbb{R}^{md}$ and $\|\vx\cdot\vy\|\leq \|\vx\|\cdot\|\vy\|, \forall \vx,\vy\in\mathbb{R}^{md}$ and $\|\pmb{\phi}(s,a)\|\leq 1, \forall s,a.$ $(a_3)$ holds since
\begin{align}\label{eq:f_difference}
  f_0(\pmb{\theta}_2;\pmb{\phi}(s, a))-f_0(\pmb{\theta}_1;\pmb{\phi}(s, a))=\nabla_{\pmb{\theta}}f_0(\pmb{\theta}_1;\pmb{\phi}(s, a))(\pmb{\theta}_2-\pmb{\theta}_1), \forall s\in\gS, a\in\gA.
\end{align}
$(a_4)$ holds for the same reason as $(a_2)$. $(a_5)$ is due to the fact that
\begin{align}
    \label{eq:max_f_difference}\nonumber
    \|\max_{a^\prime}f_0(\pmb{\theta}_1; \pmb{\phi}(s^\prime,a^\prime))-f_0(\pmb{\theta}_2; \pmb{\phi}(s^\prime,a^\prime))\|&\leq \max\Bigg(\Big\|\max_{a^\prime}f_0(\pmb{\theta}_1; \pmb{\phi}(s^\prime,a^\prime))-f_0(\pmb{\theta}_2; \pmb{\phi}(s^\prime,a^\prime))\Big\|,\\ &\Big\|\min_{a^\prime}f_0(\pmb{\theta}_1; \pmb{\phi}(s^\prime,a^\prime))-f_0(\pmb{\theta}_2; \pmb{\phi}(s^\prime,a^\prime))\Big\|\Bigg).
\end{align}
$(a_6)$ holds for the same reason as $(a_3)$ and $(a_4)$.
\end{proof}

\subsection{Proof of Lemma \ref{lem:g_lipschitz}}
\begin{proof}
Since $g_0(\cdot)$ is irrelevant with $X$ and $\lambda$, in the following, we write $g_0(X,\pmb{\theta},\lambda)$ with $g_0(\lambda)$ interchangeably.
For any $\pmb{\theta}_1\in\mathbb{R}^{md}$ and $\pmb{\theta}_2\in\mathbb{R}^{md}$, we have
\begin{align*}
   \|&g_0 (\pmb{\theta}_1)-g_0(\pmb{\theta}_2)\|\\
    &=\|f_0(\pmb{\theta}_1;\pmb{\phi}(s,1))-f_0(\pmb{\theta}_1;\pmb{\phi}(s,0))-f_0(\pmb{\theta}_2;\pmb{\phi}(s,1))+f_0(\pmb{\theta}_2;\pmb{\phi}(s,0))\|\\
    &\leq \|f_0(\pmb{\theta}_1;\pmb{\phi}(s,1))-f_0(\pmb{\theta}_2;\pmb{\phi}(s,1))\|+\|f_0(\pmb{\theta}_1;\pmb{\phi}(s,0))-f_0(\pmb{\theta}_2;\pmb{\phi}(s,0))\|\\
    &= \|\nabla_{\pmb{\theta}}f_0(\pmb{\theta}_1;\pmb{\phi}(s, 1))(\pmb{\theta}_2-\pmb{\theta}_1)\|+\|\nabla_{\pmb{\theta}}f_0(\pmb{\theta}_1;\pmb{\phi}(s, 0))(\pmb{\theta}_2-\pmb{\theta}_1)\|\\
    &\leq \|\nabla_{\pmb{\theta}}f_0(\pmb{\theta}_1;\pmb{\phi}(s, 1))\|\cdot\| \pmb{\theta}_1-\pmb{\theta}_2\|+\|\nabla_{\pmb{\theta}}f_0(\pmb{\theta}_1;\pmb{\phi}(s, 0))\|\cdot\| \pmb{\theta}_1-\pmb{\theta}_2\|\\
    &\leq 2\|\pmb{\theta}_1-\pmb{\theta}_2\|,
\end{align*}
where the first inequality is due to the fact that
$\|\vx+\vy\|\leq\|\vx\|+\|\vy\|$, $\forall \vx, \vy\in\mathbb{R}^{md}$, the second inequality holds due to
$\|\vx\cdot\vy\|\leq\|\vx\|\cdot\|\vy\|,$ $\forall \vx, \vy\in\mathbb{R}^{md}$, and the last inequality holds since $\|\nabla_{\pmb{\theta}}f_0(\pmb{\theta}_1;\pmb{\phi}(s, a))\|\leq 1, \forall s\in\gS, a\in\gA.$
\end{proof}

\subsection{Proof of Lemma \ref{lem:y_lipschitz}}

\begin{proof}
For any $\pmb{\theta}_1\in\mathbb{R}^{md}$ and $\pmb{\theta}_2\in\mathbb{R}^{md}$, we have
\begin{align*}
&\|y_0(\pmb{\theta}_1)-y_0(\pmb{\theta}_2)\|\\
&=\Big\|r(s,1)-r(s,0)+\sum_{s^\prime}P(s^\prime|s,1)\max_a f_0(\pmb{\theta}_1;\pmb{\phi}(s^\prime,a))-\sum_{s^\prime}P(s^\prime|s,0)\max_a f_0(\pmb{\theta}_1;\pmb{\phi}(s^\prime,a))\\
&-r(s,1)-r(s,0)+\sum_{s^\prime}P(s^\prime|s,1)\max_a f_0(\pmb{\theta}_2;\pmb{\phi}(s^\prime,a))-\sum_{s^\prime}P(s^\prime|s,0)\max_a f_0(\pmb{\theta}_2;\pmb{\phi}(s^\prime,a))\Big\|\\
&=\Big\|\sum_{s^\prime}P(s^\prime|s,1)\max_a f_0(\pmb{\theta}_1;\pmb{\phi}(s^\prime,a))-\sum_{s^\prime}P(s^\prime|s,1)\max_a f_0(\pmb{\theta}_2;\pmb{\phi}(s^\prime,a))\\
&\qquad\qquad-\sum_{s^\prime}P(s^\prime|s,0)\max_a f_0(\pmb{\theta}_1;\pmb{\phi}(s^\prime,a))+\sum_{s^\prime}P(s^\prime|s,0)\max_a f_0(\pmb{\theta}_2;\pmb{\phi}(s^\prime,a))\Big\|\\
&\leq \Big\|\sum_{s^\prime}P(s^\prime|s,1)\max_a f_0(\pmb{\theta}_1;\pmb{\phi}(s^\prime,a))-\sum_{s^\prime}P(s^\prime|s,1)\max_a f_0(\pmb{\theta}_2;\pmb{\phi}(s^\prime,a))\Big\|\\
&\qquad\qquad+\Big\|\sum_{s^\prime}P(s^\prime|s,0)\max_a f_0(\pmb{\theta}_1;\pmb{\phi}(s^\prime,a))+\sum_{s^\prime}P(s^\prime|s,0)\max_a f_0(\pmb{\theta}_2;\pmb{\phi}(s^\prime,a))\Big\|\\
&\leq 2\|\pmb{\theta}_1-\pmb{\theta}_2\|,
\end{align*}
with the last inequality holds due to (\ref{eq:f_difference}) and (\ref{eq:max_f_difference}).
\end{proof}

\subsection{Proof of Lemma \ref{lem:gh_monotone}}
\begin{proof}
$1)$ We first show that there exists a constant $\mu_1>0$ such that $\mathbb{E}[\hat{\pmb{\theta}}^\intercal h_0(X,\pmb{\theta}, {\lambda})]\leq -\mu_1\|\hat{\pmb{\theta}}\|^2$. According to the definition of $\pmb{\theta}_0^*$ given in Definition \ref{def:stationary_point}, $\mathbb{E}[h_0(X,\pmb{\theta}_{0}^*, y_0(\pmb{\theta}_0^*))]=0$. Hence, we have
\begin{align*}
&\mathbb{E}\left[\hat{\pmb{\theta}}^\intercal (h_0(X,\pmb{\theta}, {\lambda})-h_0(X,\pmb{\theta}_0^*, y_0(\pmb{\theta}_0^*))\right]\allowdisplaybreaks\\
   &=\hat{\pmb{\theta}}^\intercal\mathbb{E}[h_0(X,\pmb{\theta}, \lambda)- h_0(X,\pmb{\theta}_0^*, y_0(\pmb{\theta}_0^*))]\allowdisplaybreaks\\
   &=\hat{\pmb{\theta}}^\intercal\mathbb{E}\Big[\nabla_{\pmb{\theta}}f_0(\pmb{\theta};\pmb{\phi}(s, a))\Big[r(s,a)+(1-a)\lambda-I_0( \pmb{\theta})+\max_{a_1}f_0(\pmb{\theta}; \pmb{\phi}(s^\prime,a_1))-f_0(\pmb{\theta}; \pmb{\phi}(s,a))\Big]\allowdisplaybreaks\\
   &\quad-\nabla_{\pmb{\theta}}f_0(\pmb{\theta}_0^*;\pmb{\phi}(s, a))\Big[r(s,a)+(1-a)y_0(\pmb{\theta}_0^*)-I_0( \pmb{\theta}_0^*)+\max_{a_2}f_0(\pmb{\theta}_0^*; \pmb{\phi}(s^\prime,a_2))-f_0(\pmb{\theta}_0^*; \pmb{\phi}(s,a))\Big]\Big]\allowdisplaybreaks\\
    &\overset{(b_1)}{=} \hat{\pmb{\theta}}^\intercal\mathbb{E}\Big[\nabla_{\pmb{\theta}}f_0(\pmb{\theta};\pmb{\phi}(s, a))\Big[(1-a)(\lambda-y_0(\pmb{\theta}_0^*))+I( \pmb{\theta}_0^*)-I_0( \pmb{\theta})+f_0(\pmb{\theta}_0^*; \pmb{\phi}(s,a))-f_0(\pmb{\theta}; \pmb{\phi}(s,a))\nonumber\allowdisplaybreaks\\
    &\qquad\qquad\qquad\qquad+\max_{a_1}f_0(\pmb{\theta}; \pmb{\phi}(s^\prime,a_1))-\max_{a_2}f_0(\pmb{\theta}_0^*; \pmb{\phi}(s^\prime,a_2))\Big]\Big]\allowdisplaybreaks\\
     &\quad{=} ~\hat{\pmb{\theta}}^\intercal\mathbb{E}\Big[\nabla_{\pmb{\theta}}f_0(\pmb{\theta};\pmb{\phi}(s, a))\Big[\max_{a_1}f_0(\pmb{\theta}; \pmb{\phi}(s^\prime,a_1))-\max_{a_2}f_0(\pmb{\theta}_0^*; \pmb{\phi}(s^\prime,a_2))\Big]\Big]\nonumber\allowdisplaybreaks\\
    &\quad
    -\hat{\pmb{\theta}}^\intercal\mathbb{E}\Big[\nabla_{\pmb{\theta}}f_0(\pmb{\theta};\pmb{\phi}(s, a))[I_0( \pmb{\theta})-I_0( \pmb{\theta}_0^*)]\Big]-\hat{\pmb{\theta}}^\intercal\mathbb{E}\Big[\nabla_{\pmb{\theta}}f_0(\pmb{\theta};\pmb{\phi}(s, a))[f_0(\pmb{\theta}; \pmb{\phi}(s,a))-f_0(\pmb{\theta}_0^*; \pmb{\phi}(s,a))]\Big]\nonumber\allowdisplaybreaks\\
    &\qquad\qquad\qquad\qquad+\hat{\pmb{\theta}}^\intercal\mathbb{E}\Big[\nabla_{\pmb{\theta}}f_0(\pmb{\theta};\pmb{\phi}(s, a))[(1-a)(\lambda-y_0(\pmb{\theta}_0^*))]\Big]\allowdisplaybreaks\\
    &\overset{(b_2)}{\leq} \hat{\pmb{\theta}}^\intercal\mathbb{E}\Big[\nabla_{\pmb{\theta}}f_0(\pmb{\theta};\pmb{\phi}(s, a))\max_{a^\prime}\Big[f_0(\pmb{\theta}; \pmb{\phi}(s^\prime,a^\prime))-f_0(\pmb{\theta}_0^*; \pmb{\phi}(s^\prime,a^\prime))\Big]\Big]\nonumber\allowdisplaybreaks\\
    &\quad
    -\hat{\pmb{\theta}}^\intercal\mathbb{E}\Big[\nabla_{\pmb{\theta}}f_0(\pmb{\theta};\pmb{\phi}(s, a))[I_0( \pmb{\theta})-I_0( \pmb{\theta}_0^*)]\Big]-\hat{\pmb{\theta}}^\intercal\mathbb{E}\Big[\nabla_{\pmb{\theta}}f_0(\pmb{\theta};\pmb{\phi}(s, a))[f_0(\pmb{\theta}; \pmb{\phi}(s,a))-f_0(\pmb{\theta}_0^*; \pmb{\phi}(s,a))]\Big]\nonumber\allowdisplaybreaks\\
    &\qquad\qquad\qquad\qquad+\hat{\pmb{\theta}}^\intercal\mathbb{E}\Big[\nabla_{\pmb{\theta}}f_0(\pmb{\theta};\pmb{\phi}(s, a))[(1-a)(\lambda-y_0(\pmb{\theta}_0^*))]\Big]\allowdisplaybreaks\\
     &\overset{(b_3)}{\leq} \hat{\pmb{\theta}}^\intercal\mathbb{E}\Big[\nabla_{\pmb{\theta}}f_0(\pmb{\theta};\pmb{\phi}(s, a))\max_{a^\prime}\Big[f_0(\pmb{\theta}; \pmb{\phi}(s^\prime,a^\prime))-f_0(\pmb{\theta}_0^*; \pmb{\phi}(s^\prime,a^\prime))\Big]\Big]\nonumber\allowdisplaybreaks\\
    &\quad
    -\hat{\pmb{\theta}}^\intercal\mathbb{E}\Big[\nabla_{\pmb{\theta}}f_0(\pmb{\theta};\pmb{\phi}(s, a))[I_0( \pmb{\theta})-I_0( \pmb{\theta}_0^*)]\Big]-\hat{\pmb{\theta}}^\intercal\mathbb{E}\Big[\nabla_{\pmb{\theta}}f_0(\pmb{\theta};\pmb{\phi}(s, a))[f_0(\pmb{\theta}; \pmb{\phi}(s,a))-f_0(\pmb{\theta}_0^*; \pmb{\phi}(s,a))]\Big]\nonumber\allowdisplaybreaks\\
   &\overset{(b_4)}{=} \|\hat{\pmb{\theta}}\|^2\mathbb{E}\Big[\nabla_{\pmb{\theta}}f_0(\pmb{\theta};\pmb{\phi}(s, a))^\intercal \nabla_{\pmb{\theta}}f_0(\pmb{\theta}; \pmb{\phi}(s^\prime,\tilde{a}))\Big]\nonumber\allowdisplaybreaks\\
    &\qquad\qquad\qquad
    -\|\hat{\pmb{\theta}}\|^2\mathbb{E}\Big[\nabla_{\pmb{\theta}}f_0(\pmb{\theta};\pmb{\phi}(s, a))^\intercal\Big[\frac{1}{2S}\sum_{\tilde{s}\in\gS}\nabla_{\pmb{\theta}}f_0(\pmb{\theta};\pmb{\phi}(\tilde{s},0))+f_0(\pmb{\theta};\pmb{\phi}(\tilde{s},1))\Big]\Big]\nonumber\allowdisplaybreaks\\
    &\qquad\qquad\qquad-\|\hat{\pmb{\theta}}\|^2\mathbb{E}\Big[\nabla_{\pmb{\theta}}f_0(\pmb{\theta};\pmb{\phi}(s, a))^\intercal \nabla_{\pmb{\theta}}f_0(\pmb{\theta};\pmb{\phi}(s, a))\Big]\nonumber\allowdisplaybreaks\\
\end{align*}
where $(b_1)$ holds since $\nabla_{\pmb{\theta}}f_0(\pmb{\theta}; \pmb{\phi}(s,a))=\nabla_{\pmb{\theta}}f_0(\pmb{\theta}_0^*; \pmb{\phi}(s,a))$ as in (\ref{eq:gradient}), $(b_2)$ is due to the fact that  $\max_{a_1}f_0(\pmb{\theta}; \pmb{\phi}(s^\prime,a_1))-\max_{a_1}f_0(\pmb{\theta}_0^*; \pmb{\phi}(s^\prime,a_2))\leq \max_{a^\prime}\Big[f_0(\pmb{\theta}; \pmb{\phi}(s^\prime,a^\prime))-f_0(\pmb{\theta}_0^*; \pmb{\phi}(s^\prime,a^\prime))\Big]$, and $(b_3)$ holds due to the fact that  $\hat{\pmb{\theta}}^\intercal\mathbb{E}\Big[\nabla_{\pmb{\theta}}f_0(\pmb{\theta};\pmb{\phi}(s, a))[(1-a)(\lambda-y_0(\pmb{\theta}_0^*))]\Big]\leq 0$ since a larger Whittle index $\lambda$ will choose the action $a=1$.  Notice that the $\tilde{a}$ in $(b_4)$ represents the action $a^\prime$ which maximizes  $f_0(\pmb{\theta}; \pmb{\phi}(s^\prime,a^\prime))-f_0(\pmb{\theta}_0^*; \pmb{\phi}(s^\prime,a^\prime))$. Due to the definition of $\nabla_{\pmb{\theta}}f_0(\pmb{\theta}; \pmb{\phi}(s,a))$ in (\ref{eq:gradient}), we show that $\mathbb{E}[\hat{\pmb{\theta}}^\intercal h_0(X,\pmb{\theta}, {\lambda})]\leq 0$.

$2)$ Next, we show that there exists a constant $\mu_2>0$ such that $\mathbb{E}[ \hat{{\lambda}} g_0(X,\pmb{\theta}, {\lambda})]\leq -\mu_2\|\hat{{\lambda}}\|^2.$ According to the definition of $g_0(\pmb{\theta}),$ i.e., $g_0(\pmb{\theta}):=f_0(\pmb{\theta}; \pmb{\phi}(s,1))-f_0(\pmb{\theta}; \pmb{\phi}(s,0))$. Since $y_0(\pmb{\theta})$ is the solution of $\lambda$ such that $f_0(\pmb{\theta}; \pmb{\phi}(s,1))=f_0(\pmb{\theta}; \pmb{\phi}(s,0))$, the signs of $\hat{\lambda}:=\lambda-y_0(\theta)$ and $f_0(\pmb{\theta}; \pmb{\phi}(s,1))-f_0(\pmb{\theta}; \pmb{\phi}(s,0))$ are always opposite. Hence, we have $\mathbb{E}[ \hat{{\lambda}} g_0(X,\pmb{\theta}, {\lambda})]\leq 0,$ which completes the proof.

\end{proof}

\subsection{Proof of Lemma \ref{lemma:mixing_time}}
 \begin{proof}

Under Lemma \ref{lem:h_lipschitz}, we have
 \begin{align}\label{eq:lemma5-1}
     \|h_0(X, \pmb{\theta}, {\lambda})-h_0(X, \pmb{\theta}^*, {\lambda}^*)\|\leq 3\|\pmb{\theta}-\pmb{\theta}^*\|+\|\pmb{\lambda}-\pmb{\lambda}^*\|.
 \end{align}
 Let $L=\max(3, \max_X h_0(X, \pmb{\theta}^*, {\lambda}^*))$, then according to (\ref{eq:lemma5-1}), we have
\begin{align*}
     \|h_0(X, \pmb{\theta}, {\lambda})\|\leq L(\|\pmb{\theta}-\pmb{\theta}^*\|+\|\pmb{\lambda}-\pmb{\lambda}^*\|+1).
 \end{align*}

 Denote $h_0^i(X,\pmb{\theta}, {\lambda})$ as the $i$-th element of $h_0(X,\pmb{\theta}, {\lambda})$. Following \cite{chen2019performance}, we can show that $\pmb{\theta}\in\mathbb{R}^{md}$, $
{\lambda}\in\mathbb{R}^{1}$, and $x\in\gX$,
 \begin{align*}
     &\|\mathbb{E}[h_0(X_k, \pmb{\theta}, \lambda)|X_0=x]-\mathbb{E}_\mu[h_0(X, \pmb{\theta}, \lambda)]\|\\
    & \leq \sum_{i=1}^{md}|\mathbb{E}[h_i(X_k, \pmb{\theta}, \lambda)|X_0=x]-\mathbb{E}_\mu[h_0^i(X, \pmb{\theta}, \lambda)]|\\
    &\leq 2L(\|\pmb{\theta}-\pmb{\theta}^*\|+\|{\lambda}-{\lambda}^*\|+1)\sum_{i=1}^{md}\Bigg|\mathbb{E}\left[\frac{h_0^i(X_k, \pmb{\theta}, \lambda)}{2L(\|\pmb{\theta}-\pmb{\theta}^*\|+\|{\lambda}-{\lambda}^*\|+1)}\Big|X_0=x\right]\\
    &\qquad\qquad\qquad\qquad-\mathbb{E}_\mu\left[\frac{h_0^i(X, \pmb{\theta}, \lambda)}{2L(\|\pmb{\theta}-\pmb{\theta}^*\|+\|{\lambda}-{\lambda}^*\|+1)}\right]\Bigg|\\
    &\leq 2L(\|\pmb{\theta}-\pmb{\theta}^*\|+\|{\lambda}-{\lambda}^*\|+1)mdC\rho^k,
\end{align*}
where the last inequality holds due to Assumption \ref{assumption:markovian}.
To guarantee $2L(\|\pmb{\theta}-\pmb{\theta}^*\|+\|{\lambda}-{\lambda}^*\|+1)mdC\rho^k\leq \delta(\|\pmb{\theta}-\pmb{\theta}^*\|+\|{\lambda}-{\lambda}^*\|+1)$, we have
\begin{align*}
   \tau_\delta\leq \frac{\log(1/\delta)+\log(2LCmd)}{\log(1/\rho)},
\end{align*}
which completes the proof.

\end{proof}

\subsection{Proof of Lemma \ref{lemma:M-and-M_hat}}

\begin{proof}
Based on the definition of ${M}(\pmb{\theta}_k, {\lambda}_k)$ in (\ref{eq:lyapunov-function}), we have
    \begin{align}\label{eq:lemma6}
   {M}(\pmb{\theta}_k, {\lambda}_k)&:=\frac{\eta_k}{\alpha_k}\|\pmb{\theta}_k-\pmb{\theta}^*\|^2+\|{\lambda}_k-y(\pmb{\theta}_k)\|^2\nonumber\allowdisplaybreaks\\
   &=\frac{\eta_k}{\alpha_k}\|\pmb{\theta}_k-\pmb{\theta}_0^*+\pmb{\theta}_0^*-\pmb{\theta}^*\|^2+\|{\lambda}_k-{y}_0(\pmb{\theta}_k)+{y}_0(\pmb{\theta}_k)-y(\pmb{\theta}_k)\|^2\nonumber\allowdisplaybreaks\\
   &\leq \frac{2\eta_k}{\alpha_k}(\|\pmb{\theta}_k-{\pmb{\theta}}_0^*\|^2+\|{\pmb{\theta}}_0^*-\pmb{\theta}^*\|^2)+2(\|{\lambda}_k-y_0(\pmb{\theta}_k)\|^2+\|y_0(\pmb{\theta}_k)-y(\pmb{\theta}_k)\|^2)\nonumber\allowdisplaybreaks\\
   &=2\hat{M}(\pmb{\theta}_k, {\lambda}_k)+\frac{2\eta_k}{\alpha_k}\|{\pmb{\theta}}_0^*-\pmb{\theta}^*\|^2+2\|{y}_0(\pmb{\theta}_k)-y(\pmb{\theta}_k)\|^2\nonumber\allowdisplaybreaks\\
   &\leq 2\hat{M}(\pmb{\theta}_k, {\lambda}_k)+\frac{2\eta_kc_0^2}{\alpha_k}\|span({f}_0({\pmb{\theta}}_0^*)-{f}({\pmb{\theta}^*}))\|^2+2\|{y}_0(\pmb{\theta}_k)-y(\pmb{\theta}_k)\|^2,
\end{align}
where the first inequality holds based on $\|\vx+\vy\|^2\leq 2\|\vx\|^2+2\|\vy\|^2$, and the second inequality holds based on Assumption \ref{asump:span}.
Next, we bound $\|span({f}_0({\pmb{\theta}}_0^*)-{f}({\pmb{\theta}^*}))\|$ as follows
\begin{align}
  \label{eq:lemma6-1}\|span(f_0({\pmb{\theta}}_0^*)-{f}({\pmb{\theta}^*}))\|&=   \|span(f_0({\pmb{\theta}}_0^*)-\Pi_\mathcal{F}{f}({\pmb{\theta}^*})+\Pi_\mathcal{F}{f}({\pmb{\theta}^*})-{f}({\pmb{\theta}^*}))\|\nonumber\allowdisplaybreaks\\
  &\leq\|span(f_0({\pmb{\theta}}_0^*)-\Pi_\mathcal{F}{f}({\pmb{\theta}^*}))\|+\|span(\Pi_\mathcal{F}{f}({\pmb{\theta}^*})-{f}({\pmb{\theta}^*}))\|\nonumber\allowdisplaybreaks\\
  &=\|span(\Pi_{\mathcal{F}}\mathcal{T}f_0({\pmb{\theta}}_0^*)-\Pi_\mathcal{F}\mathcal{T}{f}({\pmb{\theta}^*}))\|+\|span(\Pi_\mathcal{F}{f}({\pmb{\theta}^*})-{f}({\pmb{\theta}^*}))\|\nonumber\allowdisplaybreaks\\
  &\leq \kappa \|span(f_0({\pmb{\theta}}_0^*)-{f}({\pmb{\theta}^*}))\|+\|span(\Pi_\mathcal{F}{f}({\pmb{\theta}^*})-{f}({\pmb{\theta}^*}))\|,
\end{align}
where the last inequality follows (\ref{eq:span_contraction}).
This indicates that
\begin{align}
   \label{eq:lemma6-2}\|span(f_0({\pmb{\theta}}_0^*)-{f}({\pmb{\theta}^*}))\|^2\leq \frac{1}{(1-\kappa)^2} \|span(\Pi_\mathcal{F}{f}({\pmb{\theta}^*})-{f}({\pmb{\theta}^*}))\|^2.
\end{align}

We further bound $\|{y}_0(\pmb{\theta}_k)-y(\pmb{\theta}_k)\|^2$ as follows
\begin{align}
\label{eq:lemma6-3}
   \|y_0(\pmb{\theta}_k)-y(\pmb{\theta}_k)\|^2&=\Big\|\sum_{s^\prime}p(s^\prime |s,\!1)\max_{a_1} f_0(\pmb{\theta}_k;\pmb{\phi}(s^\prime,a_1)) -\sum_{s^\prime}p(s^\prime |s,\!0)\max_{a_2} f_0(\pmb{\theta}_k;\pmb{\phi}(s^\prime,a_2))\nonumber\allowdisplaybreaks\\
   &-\sum_{s^\prime}p(s^\prime |s,\!1)\max_{a_3} {f}(\pmb{\theta}_k;\pmb{\phi}(s^\prime,a_3)) +\sum_{s^\prime}p(s^\prime |s,\!0)\max_{a_4} {f}(\pmb{\theta}_k;\pmb{\phi}(s^\prime,a_4))\Big\|^2\nonumber\allowdisplaybreaks\\
   &=\Big\|\sum_{s^\prime}p(s^\prime |s,\!1)(\max_{a_1} f_0(\pmb{\theta}_k;\pmb{\phi}(s^\prime,a_1))-\max_{a_3} {f}(\pmb{\theta}_k;\pmb{\phi}(s^\prime,a_3))) \nonumber\allowdisplaybreaks\\
   & \qquad-\sum_{s^\prime}p(s^\prime |s,\!0)(\max_{a_2}f_0(\pmb{\theta}_k;\pmb{\phi}(s^\prime,a_2))-\max_{a_4} {f}(\pmb{\theta}_k;\pmb{\phi}(s^\prime,a_4)) \Big\|^2\nonumber\allowdisplaybreaks\\
   &\leq 2\|\max_{(s,a)}f_0(\pmb{\theta}_k;\pmb{\phi}(s,a))- {f}(\pmb{\theta}_k;\pmb{\phi}(s,a))\|^2\nonumber\allowdisplaybreaks\\
   &\leq 2\mathcal{O}\Big(\frac{c_1^3(\|\pmb{\theta}_0\|\!+\!|\lambda_0|\!+\!1)^3}{ m^{1/2}}\Big),
\end{align}
where the last inequality is due to Lemma \ref{lemma:approximation-gap-of-h}.
Substituting (\ref{eq:lemma6-2}) and (\ref{eq:lemma6-3}) back to (\ref{eq:lemma6}) yields the final results.

\end{proof}

\section{Proof of the Theorem \ref{thm:QW_convergence}}
To prove Theorem \ref{thm:QW_convergence}, we need the following three key lemmas about the error terms defined in (\ref{eq:residual-new}).

\begin{lemma}\label{lemma3}
 Let $\{\pmb{\theta}_k, {\lambda}_k\}$ be generated by (\ref{eq:ge_lambda_2TSA}). Then under 
Lemmas~\ref{lem:h_lipschitz}-\ref{lem:gh_monotone}, for any $k\geq \tau$, we have 
\begin{align} 
  \mathbb{E} \left[\Big\|\tilde{\pmb{\theta}}_{k+1}\Big\|^2|\mathcal{F}_{k-\tau}\right]&\leq (1+150\alpha_k^2+\eta_k/\alpha_k-2\alpha_k\mu_1)\mathbb{E}\Big[\left\|\hat{\pmb{\theta}}_{k}\right\|^2|\mathcal{F}_{k-\tau}\Big]+6\alpha_k^2\mathbb{E}\Big[\Big\|\hat{\lambda}_k\Big\|^2|\mathcal{F}_{k-\tau}\Big]\nonumber\allowdisplaybreaks\\
    &+\frac{\alpha_k^3}{\eta_k}\mathcal{O}\Big(c_1^3(\|\pmb{\theta}_0\|+|\lambda_0|+1)^3\cdot m^{-1/2}\Big). \label{eq:theta_hat}
\end{align}
\end{lemma}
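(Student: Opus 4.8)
\textbf{Proof proposal for Lemma \ref{lemma3}.}

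The plan is to expand the squared norm of the fast-iterate residual $\tilde{\pmb{\theta}}_{k+1} = \pmb{\theta}_{k+1} - \pmb{\theta}^*$ using the update rule $\pmb{\theta}_{k+1} = \pmb{\theta}_k + \alpha_k h(X_k, \pmb{\theta}_k, \lambda_k)$ from~(\ref{eq:ge_lambda_2TSA}), and then relate everything back to the \emph{linearized} residual $\hat{\pmb{\theta}}_k = \pmb{\theta}_k - \pmb{\theta}_0^*$. First I would write $\tilde{\pmb{\theta}}_k = \hat{\pmb{\theta}}_k + (\pmb{\theta}_0^* - \pmb{\theta}^*)$, so that $\|\tilde{\pmb{\theta}}_{k+1}\|^2$ decomposes into a term in $\|\hat{\pmb{\theta}}_k\|^2$, a cross term, a term $\alpha_k^2\|h(X_k,\pmb{\theta}_k,\lambda_k)\|^2$, and the constant shift $\|\pmb{\theta}_0^*-\pmb{\theta}^*\|^2$ which by Assumption~\ref{asump:span} is $\mathcal{O}(c_1^3(\|\pmb{\theta}_0\|+|\lambda_0|+1)^3 m^{-1/2})$ (after combining with the span-contraction bound~(\ref{eq:lemma6-2}) and Lemma~\ref{lemma:approximation-gap-of-h}); this is where the $\frac{\alpha_k^3}{\eta_k}\mathcal{O}(\cdot)$ term will ultimately come from, via the weighting that appears when one assembles the Lyapunov drift. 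The cross term $2\alpha_k \tilde{\pmb{\theta}}_k^\intercal h(X_k,\pmb{\theta}_k,\lambda_k)$ is the heart of the argument.

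For the cross term I would replace the true operator $h$ by the linearized operator $h_0$ (from~(\ref{eq:h0_Q})), absorbing the difference into the $\mathcal{O}(m^{-1/2})$ approximation error via Lemma~\ref{lemma:approximation-gap-of-h}, and then split $\mathbb{E}[\hat{\pmb{\theta}}_k^\intercal h_0(X_k,\pmb{\theta}_k,\lambda_k)\mid\mathcal{F}_{k-\tau}]$ as $\mathbb{E}[\hat{\pmb{\theta}}_k^\intercal h_0(X_k,\pmb{\theta}_k,y_0(\pmb{\theta}_k))] + \mathbb{E}[\hat{\pmb{\theta}}_k^\intercal(h_0(X_k,\pmb{\theta}_k,\lambda_k) - h_0(X_k,\pmb{\theta}_k,y_0(\pmb{\theta}_k)))]$. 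The first piece is handled by the monotonicity Lemma~\ref{lem:gh_monotone}, giving $-\mu_1\|\hat{\pmb{\theta}}_k\|^2$ (this is the source of the $-2\alpha_k\mu_1$ factor). The second piece is bounded using the Lipschitz property $L_{h,2}=1$ of Lemma~\ref{lem:h_lipschitz}, yielding a term $\lesssim \alpha_k\|\hat{\pmb{\theta}}_k\|\cdot\|\hat{\lambda}_k\|$, which after Young's inequality with a weight that trades off against the $\eta_k/\alpha_k$ coefficient contributes both the $\eta_k/\alpha_k$ term multiplying $\|\hat{\pmb{\theta}}_k\|^2$ and the $6\alpha_k^2\|\hat{\lambda}_k\|^2$ term (the constant $6$ being chosen to match the Lipschitz constants). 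The quadratic term $\alpha_k^2\|h(X_k,\pmb{\theta}_k,\lambda_k)\|^2$ is bounded by first passing to $h_0$ and then using the Lipschitz bound $\|h_0(X_k,\pmb{\theta}_k,\lambda_k)\| \le \|h_0(X_k,\pmb{\theta}_0^*,y_0(\pmb{\theta}_0^*))\| + L_{h,1}\|\hat{\pmb{\theta}}_k\| + L_{h,2}\|\hat{\lambda}_k\|$ together with $\mathbb{E}[h_0(X,\pmb{\theta}_0^*,y_0(\pmb{\theta}_0^*))]=0$; squaring and using $(a+b+c)^2 \le 3(a^2+b^2+c^2)$ with $L_{h,1}=3$ produces the $150\alpha_k^2\|\hat{\pmb{\theta}}_k\|^2$ contribution (and a piece of the $6\alpha_k^2\|\hat{\lambda}_k\|^2$).

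The main obstacle is the Markovian noise: $X_k$ is not independent of $\hat{\pmb{\theta}}_k$, so the naive step of ``taking expectations and invoking Lemma~\ref{lem:gh_monotone}'' is not directly valid. To handle this I would condition on $\mathcal{F}_{k-\tau}$ (with $\tau$ the mixing time of Lemma~\ref{lemma:mixing_time}) and use the standard mixing-time argument: the gap $\mathbb{E}[h_0(X_k,\pmb{\theta},\lambda)\mid\mathcal{F}_{k-\tau}] - H_0(\pmb{\theta},\lambda)$ is at most $\delta(\|\pmb{\theta}-\pmb{\theta}_0^*\|+\|\lambda-y_0(\pmb{\theta}_0^*)\|)$ with $\delta \asymp \alpha_k$, plus one controls the drift of $\pmb{\theta}_j,\lambda_j$ over the window $j \in [k-\tau,k]$ using the Lipschitz bounds and the step sizes $\alpha_j,\eta_j$, which is $\mathcal{O}(\tau\alpha_{k-\tau})$ and hence lower-order. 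All such error terms are collected into the $150\alpha_k^2$ slack and the $\mathcal{O}(m^{-1/2})$ remainder. I expect the bookkeeping of these mixing-time perturbation terms — showing each is absorbable into the stated coefficients without blowing up the constants — to be the most delicate part, but conceptually it follows the template of~\cite{chen2019performance,srikant2019finite} adapted to the linearized operators $h_0,g_0,y_0$.
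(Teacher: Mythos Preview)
Your high-level decomposition (expand the square, split $h=h_0+(h-h_0)$, invoke Lipschitz and monotonicity, use Young) matches the paper, but you misidentify where the individual terms come from, and one of your attributions does not actually produce the stated prefactor.

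First, the $\tilde{\pmb{\theta}}_{k+1}$ on the left-hand side of the lemma is a typo for $\hat{\pmb{\theta}}_{k+1}$: the paper's proof works entirely with $\hat{\pmb{\theta}}_{k+1}=\pmb{\theta}_{k+1}-\pmb{\theta}_0^*$ and never brings in $\pmb{\theta}^*$ or Assumption~\ref{asump:span}. Your detour through $\|\pmb{\theta}_0^*-\pmb{\theta}^*\|^2$ is unnecessary here (that shift is handled only later, in Lemma~\ref{lemma:M-and-M_hat}). More importantly, that shift does not naturally carry a $\frac{\alpha_k^3}{\eta_k}$ prefactor, so your proposed source for the $\frac{\alpha_k^3}{\eta_k}\mathcal{O}(m^{-1/2})$ term would not yield the bound as written.

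The paper instead obtains \emph{both} the $\frac{\eta_k}{\alpha_k}\|\hat{\pmb{\theta}}_k\|^2$ term \emph{and} the $\frac{\alpha_k^3}{\eta_k}\mathcal{O}(m^{-1/2})$ term from a single weighted Young inequality applied to the linearization cross term:
\[
2\alpha_k\,\hat{\pmb{\theta}}_k^\intercal\bigl(h(X_k,\pmb{\theta}_k,\lambda_k)-h_0(X_k,\pmb{\theta}_k,\lambda_k)\bigr)\;\leq\;\frac{\eta_k}{\alpha_k}\|\hat{\pmb{\theta}}_k\|^2+\frac{\alpha_k^3}{\eta_k}\|h-h_0\|^2,
\]
followed by Lemma~\ref{lemma:approximation-gap-of-h} on $\|h-h_0\|^2$. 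The $6\alpha_k^2\|\hat{\lambda}_k\|^2$ and $150\alpha_k^2\|\hat{\pmb{\theta}}_k\|^2$ terms come \emph{entirely} from the quadratic term $2\alpha_k^2\|h_0\|^2$, via the telescoping decomposition $h_0(X_k,\pmb{\theta}_k,\lambda_k)-h_0(X_k,\pmb{\theta}_k,y_0(\pmb{\theta}_k))+h_0(X_k,\pmb{\theta}_k,y_0(\pmb{\theta}_k))-h_0(X_k,\pmb{\theta}_0^*,y_0(\pmb{\theta}_0^*))+\ldots$ and the Lipschitz constants of Lemmas~\ref{lem:h_lipschitz} and~\ref{lem:y_lipschitz}. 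Finally, the paper applies Lemma~\ref{lem:gh_monotone} directly to $\hat{\pmb{\theta}}_k^\intercal h_0(X_k,\pmb{\theta}_k,\lambda_k)$ (the lemma holds for any fixed $\lambda$), so your intermediate splitting into $h_0(\cdot,\lambda_k)-h_0(\cdot,y_0(\pmb{\theta}_k))$ at the cross-term stage is not needed. Your mixing-time discussion is reasonable; the paper relies on the $\mathcal{F}_{k-\tau}$ conditioning and the mixing-time definition~(\ref{def:mixing_time}) to absorb the noise residual $h_0(X_k,\pmb{\theta}_0^*,y_0(\pmb{\theta}_0^*))-H_0(\pmb{\theta}_0^*,y_0(\pmb{\theta}_0^*))$ into the $150\alpha_k^2$ coefficient, though it does so rather implicitly.
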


\begin{proof}
According to (\ref{eq:residual-new}), we have $\hat{\pmb{\theta}}_{k+1}:=\pmb{\theta}_{k+1}-{\pmb{\theta}}_0^*=\hat{\pmb{\theta}}_{k}+\alpha_kh(X_k,\pmb{\theta}_k,\lambda_k),$
 which leads to 
\begin{align}
    \left\|\hat{\pmb{\theta}}_{k+1}\right\|^2&=\Big\|\hat{\pmb{\theta}}_k\Big\|^2+2\alpha_k\hat{\pmb{\theta}}_k^\intercal h(X_k,\pmb{\theta}_k,{\lambda}_k)+\Big\|\alpha_kh(X_k,\pmb{\theta}_k,{\lambda}_k)\Big\|^2\nonumber\displaybreak[1]\\
    &=\Big\|\hat{\pmb{\theta}}_{k}\Big\|^2+2\alpha_k\hat{\pmb{\theta}}_{k}^\intercal (h(X_k,\pmb{\theta}_k,\lambda_k)-h_0(X_k,\pmb{\theta}_k,\lambda_k))+2\alpha_k\hat{\pmb{\theta}}_{k}^\intercal h_0(X_k,\pmb{\theta}_k,\lambda_k)\nonumber\allowdisplaybreaks\\
    &\qquad +\alpha_k^2\|h(X_k,\pmb{\theta}_k,\lambda_k)-h_0(X_k,\pmb{\theta}_k,\lambda_k)+h_0(X_k,\pmb{\theta}_k,\lambda_k)\|^2\nonumber\allowdisplaybreaks\\
    &\leq\Big\|\hat{\pmb{\theta}}_{k}\Big\|^2+2\alpha_k\hat{\pmb{\theta}}_{k}^\intercal (h(X_k,\pmb{\theta}_k,\lambda_k)-h_0(X_k,\pmb{\theta}_k,\lambda_k))+2\alpha_k\hat{\pmb{\theta}}_{k}^\intercal h_0(X_k,\pmb{\theta}_k,\lambda_k)\nonumber\allowdisplaybreaks\\
    &\qquad+2\alpha_k^2\|h(X_k,\pmb{\theta}_k,\lambda_k)-h_0(X_k,\pmb{\theta}_k,\lambda_k)\|^2+2\alpha_k^2\|h_0(X_k,\pmb{\theta}_k,\lambda_k)\|^2.
\end{align}
The above inequality holds due to the fact that $\|\vx+\vy\|^2\leq 2\|\vx\|^2+2\|\vy\|^2$. 
 Taking expectations of $\|\hat{\pmb{\theta}}_{k+1}\|^2$ w.r.t $\mathcal{F}_{k-\tau}$ yields
\begin{align}
    \label{eq:40}\mathbb{E}\Big[\|\hat{\pmb{\theta}}_{k+1}\|^2|\mathcal{F}_{k-\tau}\Big]
    & {\leq}\mathbb{E}\Big[\Big\|\hat{\pmb{\theta}}_k\Big\|^2|\mathcal{F}_{k-\tau}\Big]+2\alpha_k\mathbb{E}\Big[\hat{\pmb{\theta}}_k^\intercal h_0(X_k,\pmb{\theta}_k,{\lambda}_k)|\mathcal{F}_{k-\tau}\Big]\nonumber\allowdisplaybreaks\\
    &\qquad\qquad+\underset{\text{Term}_1}{\underbrace{2\alpha_k^2\mathbb{E}\Big[\Big\|h_0(X_k,\pmb{\theta}_k,{\lambda}_k)\Big\|^2|\mathcal{F}_{k-\tau}\Big]}}\nonumber\displaybreak[3]\\
    &\qquad\qquad+\underset{\text{Term}_2}{\underbrace{2\alpha_k\mathbb{E}\Big[\hat{\pmb{\theta}}_{k}^\intercal (h(X_k,\pmb{\theta}_k,\lambda_k)-h_0(X_k,\pmb{\theta}_k,\lambda_k))|\mathcal{F}_{k-\tau}\Big]}}\nonumber\allowdisplaybreaks\\
    &\qquad\qquad+\underset{\text{Term}_3}{\underbrace{2\alpha_k^2\mathbb{E}\Big[\Big\|h(X_k,\pmb{\theta}_k,\lambda_k)-h_0(X_k,\pmb{\theta}_k,\lambda_k)\Big\|^2|\mathcal{F}_{k-\tau}\Big]}}\nonumber\allowdisplaybreaks\\
    &{\leq} \mathbb{E}\Big[\Big\|\hat{\pmb{\theta}}_k\Big\|^2|\mathcal{F}_{k-\tau}\Big]\!-\!2\alpha_k\mu_1\mathbb{E}\Big[\Big\|\tilde{\pmb{\theta}}_k\Big\|^2|\mathcal{F}_{k-\tau}\Big]\!+\text{Term}_1\!+\text{Term}_2+\text{Term}_3,
\end{align}

where the last inequality is due to Lemma \ref{lem:gh_monotone}.
Next, we bound each individual term. $\text{Term}_1$ is bounded as
\begin{align}
    \text{Term}_1&=2\alpha_k^2\mathbb{E}\Big[\Big\|h_0(X_k,\pmb{\theta}_k,{\lambda}_k)\Big\|^2|\mathcal{F}_{k-\tau}\Big]\nonumber\\
    &\overset{(c_1)}{=}2\alpha_k^2\mathbb{E}\Big[\Big\|h_0(X_k,\pmb{\theta}_k,{\lambda}_k)-h_0(X_k,\pmb{\theta}_k,y_0(\pmb{\theta}_k))+h_0(X_k,\pmb{\theta}_k,y_0(\pmb{\theta}_k))\nonumber\\
    &\qquad\qquad-h_0(X_k,\pmb{\theta}_0^*,y_0(\pmb{\theta}_0^*))+h_0(X_k,\pmb{\theta}_0^*,y_0(\pmb{\theta}_0^*))-H_0(\pmb{\theta}_0^*,y_0(\pmb{\theta}_0^*))\Big\|^2|\mathcal{F}_{k-\tau}\Big]\nonumber\\
    &\overset{(c_2)}{\leq} 6\alpha_k^2\mathbb{E}\Big[\Big\|h_0(X_k,\pmb{\theta}_k,{\lambda}_k)-h_0(X_k,\pmb{\theta}_k,y_0(\pmb{\theta}_k))\Big\|^2|\mathcal{F}_{k-\tau}\Big]\nonumber\\
    &\qquad\qquad+6\alpha_k^2\mathbb{E}\Big[\Big\|h_0(X_k,\pmb{\theta}_k,y_0(\pmb{\theta}_k))-h_0(X_k,\pmb{\theta}_0^*,y_0(\pmb{\theta}_0^*))\Big\|^2|\mathcal{F}_{k-\tau}\Big]\nonumber\\
    &\qquad\qquad+6\alpha_k^2\mathbb{E}\Big[\Big\|h_0(X_k,\pmb{\theta}_0^*,y_0(\pmb{\theta}_0^*))-H_0(\pmb{\theta}_0^*,y_0(\pmb{\theta}_0^*))\Big\|^2|\mathcal{F}_{k-\tau}\Big]\nonumber\\
    &\overset{(c_3)}{\leq}6\alpha_k^2\mathbb{E}\Big[\Big\|\hat{\lambda}_k\Big\|^2|\mathcal{F}_{k-\tau}\Big]+150\alpha_k^2\mathbb{E}\Big[\Big\|\hat{\pmb{\theta}}_k\Big\|^2|\mathcal{F}_{k-\tau}\Big],
\end{align}
where $(c_1)$ holds due to $H_0(\pmb{\theta}_0,y_0(\pmb{\theta}_0^*))=0$, $(c_2)$ follows from the triangular inequality, and $(c_3)$ follows from the Lipschitz continuity of $h_0(X, \pmb{\theta}, \lambda)$ in Lemma \ref{lem:h_lipschitz}.

$\text{Term}_2$ is bounded as
\begin{align}\nonumber
   \text{Term}_2 &=2\alpha_k\mathbb{E}\Big[\hat{\pmb{\theta}}_{k}^\intercal (h(X_k,\pmb{\theta}_k,\lambda_k)-h_0(X_k,\pmb{\theta}_k,\lambda_k))|\mathcal{F}_{k-\tau}\Big]\allowdisplaybreaks\\
   &\overset{(c_4)}{\leq} \frac{\eta_k}{\alpha_k}\mathbb{E}\Big[\left\|\hat{\pmb{\theta}}_{k}\right\|^2|\mathcal{F}_{k-\tau}\Big]+\frac{\alpha_k^3}{\eta_k}\mathbb{E}\Big[\Big\|h(X_k,\pmb{\theta}_k,\lambda_k)-h_0(X_k,\pmb{\theta}_k,\lambda_k)\Big\|^2|\mathcal{F}_{k-\tau}\Big]\nonumber\allowdisplaybreaks\\
   &\overset{(c_5)}{\leq} \frac{\eta_k}{\alpha_k}\mathbb{E}\Big[\left\|\hat{\pmb{\theta}}_{k}\right\|^2|\mathcal{F}_{k-\tau}\Big]+\frac{\alpha_k^2}{\eta_k}\mathcal{O}\Big(c_1^3(\|\pmb{\theta}_0\|+|\lambda_0|+1)^3\cdot m^{-1/2}\Big),
\end{align}
where $(c_4)$ holds
due to the fact that $2\vx^\intercal \vy\leq \|\vx\|^2+\|\vy\|^2$ and  $(c_5)$ is due to Lemma \ref{lemma:approximation-gap-of-h}.

$\text{Term}_3$ is bounded as
\begin{align}
\text{Term}_3&=2\alpha_k^2\mathbb{E}\Big[\Big\|h(X_k,\pmb{\theta}_k,\lambda_k)-h_0(X_k,\pmb{\theta}_k,\lambda_k)\Big\|^2|\mathcal{F}_{k-\tau}\Big]\nonumber\allowdisplaybreaks\\
    &\overset{(c_6)}{\leq} 2\alpha_k^2\mathcal{O}\Big(c_1^3(\|\pmb{\theta}_0\|+|\lambda_0|+1)^3\cdot m^{-1/2}\Big),
\end{align}
where  $(c_6)$ comes from Lemma \ref{lemma:approximation-gap-of-h}. 
Substituting $\text{Term}_1$, $\text{Term}_2$, and $\text{Term}_3$ back into (\ref{eq:40}) leads to the desired result in (\ref{eq:theta_hat}), which is
\begin{align}
\mathbb{E}\Big[\|\hat{\pmb{\theta}}_{k+1}\|^2|\mathcal{F}_{k-\tau}\Big]
    &{\leq} \mathbb{E}\Big[\Big\|\hat{\pmb{\theta}}_k\Big\|^2|\mathcal{F}_{k-\tau}\Big]-2\alpha_k\mu_1\mathbb{E}\Big[\Big\|\tilde{\pmb{\theta}}_k\Big\|^2|\mathcal{F}_{k-\tau}\Big]\nonumber\allowdisplaybreaks\\
    &+6\alpha_k^2\mathbb{E}\Big[\Big\|\hat{\lambda}_k\Big\|^2|\mathcal{F}_{k-\tau}\Big]+150\alpha_k^2\mathbb{E}\Big[\Big\|\hat{\pmb{\theta}}_k\Big\|^2|\mathcal{F}_{k-\tau}\Big]\nonumber\allowdisplaybreaks\\
    &+\frac{\eta_k}{\alpha_k}\mathbb{E}\Big[\left\|\hat{\pmb{\theta}}_{k}\right\|^2|\mathcal{F}_{k-\tau}\Big]+\frac{\alpha_k^3}{\eta_k}\mathcal{O}\Big(c_1^3(\|\pmb{\theta}_0\|+|\lambda_0|+1)^3\cdot m^{-1/2}\Big)\nonumber\allowdisplaybreaks\\
    &+2\alpha_k^2\mathcal{O}\Big(c_1^3(\|\pmb{\theta}_0\|+|\lambda_0|+1)^3\cdot m^{-1/2}\Big)\nonumber\allowdisplaybreaks\\
    &= (1+150\alpha_k^2+\eta_k/\alpha_k-2\alpha_k\mu_1)\mathbb{E}\Big[\left\|\hat{\pmb{\theta}}_{k}\right\|^2|\mathcal{F}_{k-\tau}\Big]+6\alpha_k^2\mathbb{E}\Big[\Big\|\hat{\lambda}_k\Big\|^2|\mathcal{F}_{k-\tau}\Big]\nonumber\allowdisplaybreaks\\
    &\qquad+(\alpha_k^3/\eta_k+2\alpha_k^2)\mathcal{O}\Big(c_1^3(\|\pmb{\theta}_0\|+|\lambda_0|+1)^3\cdot m^{-1/2}\Big)\nonumber.
\end{align}
By neglecting higher order infinitesimal, we have the inequality in (\ref{eq:theta_hat}). 
This completes the proof.
\end{proof}

\begin{lemma}\label{lemma2}
 Let $\{\pmb{\theta}_k, {\lambda}_k\}$ be generated by (\ref{eq:ge_lambda_2TSA}). Then under 
Lemmas~\ref{lem:h_lipschitz}-\ref{lem:gh_monotone}, for any $k\geq \tau$, we have 
\begin{align}\nonumber
     \mathbb{E} \left[\Big\|\hat{{\lambda}}_{k+1}\Big\|^2\Big|\mathcal{F}_{k-\tau}\right]&\leq (1-2\eta_k\mu_2+\alpha_k\eta_k+24\alpha_k^2+\frac{\eta_k}{\alpha_k}-\frac{2\eta_k^2\mu_k}{\alpha_k}+\eta_k^2+\frac{24\alpha_k^3}{\eta_k})\mathbb{E}\Big[\Big\|\hat{{\lambda}}_k\Big\|^2|\mathcal{F}_{k-\tau}\Big]\nonumber\allowdisplaybreaks\\
     &+(600\alpha_k^2+8\eta_k^2+\frac{8\eta_k^3}{\alpha_k}+\frac{600\alpha_k^3}{\eta_k})\mathbb{E}\Big[\Big\|\hat{\pmb{\theta}}_k\Big\|^2|\mathcal{F}_{k-\tau}\Big]\nonumber\\
    &+\frac{\eta_k}{\alpha_k}\mathcal{O}\Big(c_1^3(\|\pmb{\theta}_0\|+|\lambda_0|+1)^3\cdot m^{-1/2}\Big)\label{eq:Q_tilde}.
\end{align}
\end{lemma}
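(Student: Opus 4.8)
The plan is to mirror the argument used for the fast iterate in Lemma~\ref{lemma3}, but now to track the \emph{slow} residual $\hat{\lambda}_k=\lambda_k-y_0(\pmb{\theta}_k)$ from~(\ref{eq:residual-new}). Substituting the updates in~(\ref{eq:ge_lambda_2TSA}) gives the one-step recursion
\[
\hat{\lambda}_{k+1}=\hat{\lambda}_k+\eta_k g(X_k,\pmb{\theta}_k,\lambda_k)+\big(y_0(\pmb{\theta}_k)-y_0(\pmb{\theta}_{k+1})\big),
\]
where the last bracket is the ``coupling drift'' that injects the fast increment $\alpha_k h(X_k,\pmb{\theta}_k,\lambda_k)$ into the slow equation. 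Squaring, I would split $g=g_0+(g-g_0)$, use linearity of $y_0$ (Lemma~\ref{lem:y_lipschitz}) to write $\|y_0(\pmb{\theta}_k)-y_0(\pmb{\theta}_{k+1})\|\le L_y\alpha_k\|h(X_k,\pmb{\theta}_k,\lambda_k)\|\le 2\alpha_k\big(\|h_0(X_k,\pmb{\theta}_k,\lambda_k)\|+\|h-h_0\|\big)$, and then bound $\|h_0(X_k,\pmb{\theta}_k,\lambda_k)\|^2\le c\|\hat{\pmb{\theta}}_k\|^2+c'\|\hat{\lambda}_k\|^2$ exactly as in the estimate of $\mathrm{Term}_1$ inside Lemma~\ref{lemma3} (decompose $h_0$ around $h_0(X_k,\pmb{\theta}_k,y_0(\pmb{\theta}_k))$ and $h_0(X_k,\pmb{\theta}_0^*,y_0(\pmb{\theta}_0^*))$, apply the Lipschitz bound of Lemma~\ref{lem:h_lipschitz} and $H_0(\pmb{\theta}_0^*,y_0(\pmb{\theta}_0^*))=0$).

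The dominant negative contribution comes from the cross term $2\eta_k\,\mathbb{E}[\hat{\lambda}_k g_0(\pmb{\theta}_k)\mid\mathcal{F}_{k-\tau}]\le -2\eta_k\mu_2\,\mathbb{E}[\|\hat{\lambda}_k\|^2\mid\mathcal{F}_{k-\tau}]$ by the monotonicity in Lemma~\ref{lem:gh_monotone}; note $g_0$ in~(\ref{eq:h0_Q}) does not depend on $X$, so this term needs no mixing-time correction, and the only place the Markov sample $X_k$ enters is through $h_0$ inside the coupling drift, which is handled by the mixing-time estimate of Lemma~\ref{lemma:mixing_time} precisely as in Lemma~\ref{lemma3}. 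The remaining cross terms $2\eta_k\hat{\lambda}_k(g-g_0)$ and $2\hat{\lambda}_k(y_0(\pmb{\theta}_k)-y_0(\pmb{\theta}_{k+1}))$ I would dispatch with weighted Young inequalities $2ab\le\frac{\eta_k}{\alpha_k}a^2+\frac{\alpha_k}{\eta_k}b^2$ (and $2ab\le a^2+b^2$ where that suffices), so that the $\|\hat{\lambda}_k\|^2$ pieces aggregate into the coefficients $\alpha_k\eta_k+\eta_k/\alpha_k-2\eta_k^2\mu/\alpha_k+\eta_k^2$ stated in the lemma, the $\|\hat{\pmb{\theta}}_k\|^2$ pieces (from $\|h_0\|^2$ in the drift) into $8\eta_k^2+8\eta_k^3/\alpha_k$, and every occurrence of $\|g-g_0\|$ or $\|h-h_0\|$ is bounded by $\mathcal{O}\!\big(c_1^3(\|\pmb{\theta}_0\|+|\lambda_0|+1)^3 m^{-1/2}\big)$ via Lemma~\ref{lemma:approximation-gap-of-h}, producing the $\frac{\eta_k}{\alpha_k}\mathcal{O}(\cdot)$ residual after the weighting.

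Finally, the pure quadratic term satisfies $\|\eta_k g+(y_0(\pmb{\theta}_k)-y_0(\pmb{\theta}_{k+1}))\|^2\le 2\eta_k^2\|g\|^2+8\alpha_k^2\|h\|^2$, where $\|g\|^2\le 2L_g^2\|\hat{\pmb{\theta}}_k\|^2+2\|g-g_0\|^2$ using $g_0(\pmb{\theta}_0^*)=0$ and Lemma~\ref{lem:g_lipschitz}, and $\|h\|^2\le 2\|h_0\|^2+2\|h-h_0\|^2\le(c\|\hat{\pmb{\theta}}_k\|^2+c'\|\hat{\lambda}_k\|^2)+2\|h-h_0\|^2$; collecting all contributions and discarding higher-order infinitesimals (as in Lemma~\ref{lemma3}) yields the constants $24\alpha_k^2+24\alpha_k^3/\eta_k$ on $\|\hat{\lambda}_k\|^2$ and $600\alpha_k^2+600\alpha_k^3/\eta_k$ on $\|\hat{\pmb{\theta}}_k\|^2$, giving~(\ref{eq:lyapunov-function2})'s second-component bound. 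The main obstacle is bookkeeping the timescales: the coupling drift $y_0(\pmb{\theta}_k)-y_0(\pmb{\theta}_{k+1})$ is $O(\alpha_k)$, not $O(\eta_k)$, so it threatens to dominate the slow recursion unless it is consistently paired against an $\eta_k/\alpha_k$ factor in the Young step; getting the weighting so that it simultaneously closes the $\|\hat{\lambda}_k\|^2$ recursion (exploiting $-2\eta_k\mu_2$ and $\eta_k\ll\alpha_k$) and keeps the $\|\hat{\pmb{\theta}}_k\|^2$ coefficient at order $\alpha_k^2$ (to be absorbable against $-2\alpha_k\mu_1\|\tilde{\pmb{\theta}}_k\|^2$ from Lemma~\ref{lemma3} when the two are combined into $\hat{M}$), together with carrying the $m^{-1/2}$ gap terms through every split, is the delicate part; the Markovian-noise handling itself is routine given Lemma~\ref{lemma:mixing_time}.
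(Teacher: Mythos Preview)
Your proposal is correct and follows essentially the same approach as the paper. The only cosmetic difference is the grouping of the square: the paper writes $\|\hat{\lambda}_{k+1}\|^2=\|\hat{\lambda}_k+\eta_k g\|^2+\|y_0(\pmb{\theta}_k)-y_0(\pmb{\theta}_{k+1})\|^2+2(\hat{\lambda}_k+\eta_k g)(y_0(\pmb{\theta}_k)-y_0(\pmb{\theta}_{k+1}))$ and then bounds the cross term by $\tfrac{\eta_k}{\alpha_k}\mathrm{Term}_1+\tfrac{\alpha_k}{\eta_k}\mathrm{Term}_2$, whereas you expand around $\hat{\lambda}_k$ alone; the monotonicity step for $g_0$, the Lipschitz triangle-inequality splitting of $h_0$ around $(\pmb{\theta}_0^*,y_0(\pmb{\theta}_0^*))$, the weighted Young inequalities with ratio $\eta_k/\alpha_k$, and the use of Lemma~\ref{lemma:approximation-gap-of-h} for every $f-f_0$ gap are identical to what you describe.
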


\begin{proof}
According to the definition in (\ref{eq:residual}), we have %$\tilde{Q}_{n+1}$ expressed as
\begin{align*}
    \hat{\lambda}_{k+1}&=\lambda_{k+1}-y_0(\pmb{\theta}_{k+1})\nonumber\displaybreak[0]\\
    &=\hat{{\lambda}}_k+\eta_kg(\pmb{\theta}_k)+y_0(\pmb{\theta}_k)-y_0(\pmb{\theta}_{k+1}),
\end{align*}
which leads to
\begin{align}
    \Big\|\hat{{\lambda}}_{k+1}\Big\|^2\!\!&=\Big\|\hat{{\lambda}}_k+\eta_kg(\pmb{\theta}_k)+y_0(\pmb{\theta}_k)-y_0(\pmb{\theta}_{k+1})\Big\|^2\nonumber\displaybreak[0]\\
    &=\underset{\text{Term}_1}{\underbrace{\Big\|\hat{{\lambda}}_k+\eta_kg(\pmb{\theta}_k)\Big\|^2}}\!+\!\underset{\text{Term}_2}{\underbrace{\Big\|y_0(\pmb{\theta}_k)-y_0(\pmb{\theta}_{k+1})\Big\|^2}}\nonumber\displaybreak[1]\\
    &\qquad+\underset{\text{Term}_3}{\underbrace{2\left(\hat{{\lambda}}_k+\eta_kg(\pmb{\theta}_k)\right)\Big(y_0(\pmb{\theta}_k)-y_0(\pmb{\theta}_{k+1})\Big)}}.
\end{align}
The second equality is due to the fact that $\|\vx+\vy\|^2=\|\vx\|^2+\|\vy\|^2+2\vx^\intercal\vy$. We next analyze the conditional expectation of each term in $\Big\|\hat{{\lambda}}_{k+1}\Big\|^2$ on $\mathcal{F}_{k-\tau}$. We first focus on Term$_1$.

\begin{align*}
    \mathbb{E}&\Big[\text{Term}_1|\mathcal{F}_{k-\tau}\Big]\\
    &=\mathbb{E}\Big[\Big\|\hat{{\lambda}}_k\Big\|^2+2\eta_k\hat{{\lambda}}_k g(\pmb{\theta}_k)+\Big\|\eta_kg(\pmb{\theta}_k)\Big\|^2|\mathcal{F}_{k-\tau}\Big]\nonumber\displaybreak[1]\\
    &=\mathbb{E}\Big[\Big\|\hat{{\lambda}}_k\Big\|^2+2\eta_k\hat{{\lambda}}_k g_0(\pmb{\theta}_k)+2\eta_k\hat{{\lambda}}_k (g(\pmb{\theta}_k)-g_0(\pmb{\theta}_k))+\eta_k^2\Big\|g(\pmb{\theta}_k)-g_0(\pmb{\theta}_k)+g_0(\pmb{\theta}_k)\Big\|^2|\mathcal{F}_{k-\tau}\Big]\nonumber\displaybreak[1]\\
    &\leq\mathbb{E}\Big[\Big\|\hat{{\lambda}}_k\Big\|^2|\mathcal{F}_{k-\tau}\Big]+2\eta_k\mathbb{E}\Big[\hat{{\lambda}}_k g_0(\pmb{\theta}_k)|\mathcal{F}_{k-\tau}\Big]+2\eta_k\mathbb{E}\Big[\hat{{\lambda}}_k (g(\pmb{\theta}_k)-g_0(\pmb{\theta}_k))|\mathcal{F}_{k-\tau}\Big]\allowdisplaybreaks\\
    &\qquad\qquad+2\eta_k^2\mathbb{E}\Big[\Big\|g(\pmb{\theta}_k)-g_0(\pmb{\theta}_k)\Big\|^2|\mathcal{F}_{k-\tau}\Big]+2\eta_k^2\mathbb{E}\Big[\Big\|g_0(\pmb{\theta}_k)\Big\|^2|\mathcal{F}_{k-\tau}\Big]\nonumber\displaybreak[1]\\
    &\overset{(d_1)}{=}\mathbb{E}\Big[\Big\|\hat{{\lambda}}_k\Big\|^2|\mathcal{F}_{k-\tau}\Big]+2\eta_k\mathbb{E}\Big[\hat{{\lambda}}_k g_0(\pmb{\theta}_k)|\mathcal{F}_{k-\tau}\Big]+2\eta_k\mathbb{E}\Big[\hat{{\lambda}}_k (g(\pmb{\theta}_k)-g_0(\pmb{\theta}_k))|\mathcal{F}_{k-\tau}\Big]\allowdisplaybreaks\\
    &\qquad\qquad+2\eta_k^2\mathbb{E}\Big[\Big\|g(\pmb{\theta}_k)-g_0(\pmb{\theta}_k)\Big\|^2|\mathcal{F}_{k-\tau}\Big]+2\eta_k^2\mathbb{E}\Big[\Big\|g_0(\pmb{\theta}_k)-g_0(\pmb{\theta}_0^*)\Big\|^2|\mathcal{F}_{k-\tau}\Big]\nonumber\displaybreak[1]\\
    &\overset{(d_2)}{=}\mathbb{E}\Big[\Big\|\hat{{\lambda}}_k\Big\|^2|\mathcal{F}_{k-\tau}\Big]-2\eta_k\mu_2\mathbb{E}\Big[\Big\|\hat{{\lambda}}_k\Big\|^2|\mathcal{F}_{k-\tau}\Big]+8\eta_k^2\mathbb{E}\Big[\Big\|\hat{\pmb{\theta}}_k\Big\|^2|\mathcal{F}_{k-\tau}\Big]\nonumber\displaybreak[3]\\
    &\qquad\qquad+2\eta_k\mathbb{E}\Big[\hat{{\lambda}}_k (g(\pmb{\theta}_k)-g_0(\pmb{\theta}_k))|\mathcal{F}_{k-\tau}\Big]+2\eta_k^2\mathbb{E}\Big[\Big\|g(\pmb{\theta}_k)-g_0(\pmb{\theta}_k)\Big\|^2|\mathcal{F}_{k-\tau}\Big]\allowdisplaybreaks\\
    &\overset{(d_3)}{\leq} \mathbb{E}\Big[\Big\|\hat{{\lambda}}_k\Big\|^2|\mathcal{F}_{k-\tau}\Big]-2\eta_k\mu_2\mathbb{E}\Big[\Big\|\hat{{\lambda}}_k\Big\|^2|\mathcal{F}_{k-\tau}\Big]+8\eta_k^2\mathbb{E}\Big[\Big\|\hat{\pmb{\theta}}_k\Big\|^2|\mathcal{F}_{k-\tau}\Big]\nonumber\\
    &\qquad\qquad+\alpha_k\eta_k\mathbb{E}\Big[\Big\|\hat{{\lambda}}_k\Big\|^2|\mathcal{F}_{k-\tau}\Big]+(4\eta_k/\alpha_k+8\eta_k^2)\mathcal{O}\Big(c_1^3(\|\pmb{\theta}_0\|+|\lambda_0|+1)^3\cdot m^{-1/2}\Big),
\end{align*}
where $(d_1)$ follows from $g_0(\pmb{\theta}_0^*)=0$, $(d_2)$ holds due to Lemma \ref{lem:gh_monotone} and the Lipschitz continuity of $y_0$ in Lemma \ref{lem:y_lipschitz}, and $(d_3)$ comes from Lemma \ref{lemma:approximation-gap-of-h}.  For Term$_2$, we have

\begin{align}
    \mathbb{E}\Big[\text{Term}_2|\mathcal{F}_{k-\tau}\Big]&=\mathbb{E}\left[\Big\|y_0(\pmb{\theta}_k)-y_0(\pmb{\theta}_{k+1})\Big\|^2|\mathcal{F}_{k-\tau}\right]\nonumber\displaybreak[0]\\
    &=4\mathbb{E}\left[\Big\|\pmb{\theta}_k-\pmb{\theta}_{k+1}\Big\|^2|\mathcal{F}_{k-\tau}\right]\nonumber\displaybreak[1]\\
    &=4\alpha_k^2\mathbb{E}\Big[\Big\|h(X_k,\pmb{\theta}_k,{\lambda}_k)\Big\|^2|\mathcal{F}_{k-\tau}\Big]\nonumber\displaybreak[2]\\
    &=4\alpha_k^2\mathbb{E}\Big[\Big\|h(X_k,\pmb{\theta}_k,{\lambda}_k)-h_0(X_k,\pmb{\theta}_k,{\lambda}_k)+h_0(X_k,\pmb{\theta}_k,{\lambda}_k)\Big\|^2|\mathcal{F}_{k-\tau}\Big]\nonumber\\
    &=8\alpha_k^2\mathbb{E}\Big[\Big\|h_0(X_k,\pmb{\theta}_k,{\lambda}_k)\Big\|^2|\mathcal{F}_{k-\tau}\Big]+8\alpha_k^2\mathbb{E}\Big[\Big\|h(X_k,\pmb{\theta}_k,{\lambda}_k)-h_0(X_k,\pmb{\theta}_k,{\lambda}_k)\Big\|^2|\mathcal{F}_{k-\tau}\Big]\nonumber\\
    &\overset{(d_4)}{=}8\alpha_k^2\mathbb{E}\Big[\Big\|h_0(X_k,\pmb{\theta}_k,{\lambda}_k)-h_0(X_k,\pmb{\theta}_k,y_0(\pmb{\theta}_k))+h_0(X_k,\pmb{\theta}_k,y_0(\pmb{\theta}_k))\nonumber\\
    &\qquad\qquad-h_0(X_k,\pmb{\theta}_0^*,y_0(\pmb{\theta}_0^*))+h_0(X_k,\pmb{\theta}_0^*,y_0(\pmb{\theta}_0^*))-H_0(\pmb{\theta}_0^*,y_0(\pmb{\theta}_0^*))\Big\|^2|\mathcal{F}_{k-\tau}\Big]\nonumber\\
    &\qquad\qquad+8\alpha_k^2\mathbb{E}\Big[\Big\|h(X_k,\pmb{\theta}_k,{\lambda}_k)-h_0(X_k,\pmb{\theta}_k,{\lambda}_k)\Big\|^2|\mathcal{F}_{k-\tau}\Big]\nonumber\\
    &\overset{(d_5)}{\leq} 24\alpha_k^2\mathbb{E}\Big[\Big\|h_0(X_k,\pmb{\theta}_k,{\lambda}_k)-h_0(X_k,\pmb{\theta}_k,y_0(\pmb{\theta}_k))\Big\|^2|\mathcal{F}_{k-\tau}\Big]\nonumber\\
    &\qquad\qquad+24\alpha_k^2\mathbb{E}\Big[\Big\|h_0(X_k,\pmb{\theta}_k,y_0(\pmb{\theta}_k))-h_0(X_k,\pmb{\theta}_0^*,y_0(\pmb{\theta}_0^*))\Big\|^2|\mathcal{F}_{k-\tau}\Big]\nonumber\\
    &\qquad\qquad+24\alpha_k^2\mathbb{E}\Big[\Big\|h_0(X_k,\pmb{\theta}_0^*,y_0(\pmb{\theta}_0^*))-H_0(\pmb{\theta}_0^*,y_0(\pmb{\theta}_0^*))\Big\|^2|\mathcal{F}_{k-\tau}\Big]\nonumber\\
    &\qquad\qquad+8\alpha_k^2\mathbb{E}\Big[\Big\|h(X_k,\pmb{\theta}_k,{\lambda}_k)-h_0(X_k,\pmb{\theta}_k,{\lambda}_k)\Big\|^2|\mathcal{F}_{k-\tau}\Big]\nonumber\\
    &\overset{(d_6)}{\leq}24\alpha_k^2\mathbb{E}\Big[\Big\|\hat{\lambda}_k\Big\|^2|\mathcal{F}_{k-\tau}\Big]+600\alpha_k^2\mathbb{E}\Big[\Big\|\hat{\pmb{\theta}}_k\Big\|^2|\mathcal{F}_{k-\tau}\Big]\nonumber\\
    &\qquad\qquad+8\alpha_k^2\mathbb{E}\Big[\Big\|h(X_k,\pmb{\theta}_k,{\lambda}_k)-h_0(X_k,\pmb{\theta}_k,{\lambda}_k)\Big\|^2|\mathcal{F}_{k-\tau}\Big]\nonumber\allowdisplaybreaks\\
     &\overset{(d_7)}{\leq}24\alpha_k^2\mathbb{E}\Big[\Big\|\hat{\lambda}_k\Big\|^2|\mathcal{F}_{k-\tau}\Big]+600\alpha_k^2\mathbb{E}\Big[\Big\|\hat{\pmb{\theta}}_k\Big\|^2|\mathcal{F}_{k-\tau}\Big]\nonumber\\
    &\qquad\qquad+8\alpha_k^2\mathcal{O}\Big(c_1^3(\|\pmb{\theta}_0\|+|\lambda_0|+1)^3\cdot m^{-1/2}\Big)
\end{align}

where  $(d_4)$ is due to the fact that $H_0(\pmb{\theta}_0^*, y_0(\pmb{\theta}_0^*))=0$,
$(d_5)$ holds according to $\|\vx+\vy+\vz\|^2\leq 3\|\vx\|^2+3\|\vy\|^2+3\|\vz\|^2$ since  $g(X_k, f(\pmb{\lambda}^*), \pmb{\lambda}^*)=\pmb{0},$ 
$(d_6)$ holds because of the Lipschitz continuity of $h_0$ and $y_0$ in Lemma \ref{lem:h_lipschitz} and Lemma \ref{lem:y_lipschitz}, and $(d_7)$ comes from Lemma \ref{lemma:approximation-gap-of-h}.
Next, we have the conditional expectation of Term$_3$ as
\begin{align*}
\mathbb{E}\Big[\text{Term}_3|\mathcal{F}_{k-\tau}\Big]&=2\mathbb{E}\Big[\left\|\hat{{\lambda}}_k+\eta_kg(\pmb{\theta}_k)\right\|\cdot\Big\|y_0(\pmb{\theta}_k)-y_0(\pmb{\theta}_{k+1})\Big\||\mathcal{F}_{k-\tau}\Big]\allowdisplaybreaks\nonumber\\
    &\overset{(d_8)}{\leq} \frac{\eta_k}{\alpha_k}\text{Term}_1+\frac{\alpha_k}{\eta_k}\text{Term}_2 \allowdisplaybreaks\\
    &=\frac{\eta_k}{\alpha_k}\mathbb{E}\Big[\Big\|\hat{{\lambda}}_k\Big\|^2|\mathcal{F}_{k-\tau}\Big]-\frac{2\eta_k^2\mu_2}{\alpha_k}\mathbb{E}\Big[\Big\|\hat{{\lambda}}_k\Big\|^2|\mathcal{F}_{k-\tau}\Big]+\frac{8\eta_k^3}{\alpha_k}\mathbb{E}\Big[\Big\|\hat{\pmb{\theta}}_k\Big\|^2|\mathcal{F}_{k-\tau}\Big]\nonumber\\
    &\qquad\qquad+{\eta_k^2}\mathbb{E}\Big[\Big\|\hat{{\lambda}}_k\Big\|^2|\mathcal{F}_{k-\tau}\Big]+\frac{\eta_k}{\alpha_k}(4\eta_k/\alpha_k+8\eta_k^2)\mathcal{O}\Big(c_1^3(\|\pmb{\theta}_0\|+|\lambda_0|+1)^3\cdot m^{-1/2}\Big)\allowdisplaybreaks\\
    &+\frac{24\alpha_k^3}{\eta_k}\mathbb{E}\Big[\Big\|\hat{\lambda}_k\Big\|^2|\mathcal{F}_{k-\tau}\Big]+\frac{600\alpha_k^3}{\eta_k}\mathbb{E}\Big[\Big\|\hat{\pmb{\theta}}_k\Big\|^2|\mathcal{F}_{k-\tau}\Big]\nonumber\\
    &\qquad\qquad+\frac{8\alpha_k^3}{\eta_k}\mathcal{O}\Big(c_1^3(\|\pmb{\theta}_0\|+|\lambda_0|+1)^3\cdot m^{-1/2}\Big),
\end{align*}
where $(d_8)$ holds because $2\bold{x}^T\bold{y}\leq 1/\beta\|\bold{x}\|^2+\beta\|\bold{y}\|^2$, $\forall \beta>0$.
%This completes the proof. 
Summing $\text{Term}_1$, $\text{Term}_2$, and $\text{Term}_3$ and neglecting higher order infinitesimal yield the desired result.
\end{proof}

Now we are ready to prove the results in Theorem \ref{thm:QW_convergence}.
Providing Lemma~\ref{lemma2} and Lemma~\ref{lemma3},  if $\frac{\eta_{k}}{\alpha_k}$ is non-increasing, we have the following inequality
\begin{align}\label{eq:M_hat}
    \mathbb{E}\left[\hat{M}(\pmb{\theta}_{k+1}, \lambda_{k+1})\Big|\mathcal{F}_{k-\tau}\right]&=\mathbb{E}\left[\frac{\eta_k}{\alpha_k}\Big\|\hat{\pmb{\theta}}_{k+1}\Big\|^2+\Big\|\hat{{\lambda}}_{k+1}\Big\|^2\Big|\mathcal{F}_{k-\tau}\right]\nonumber\allowdisplaybreaks\\
    &\leq\frac{\eta_k}{\alpha_k}(1-2\alpha_k\mu_1)\mathbb{E}\Big[\left\|\hat{\pmb{\theta}}_{k}\right\|^2|\mathcal{F}_{k-\tau}\Big]+\frac{600\alpha_k^3}{\eta_k}\mathbb{E}\Big[\left\|\hat{\pmb{\theta}}_{k}\right\|^2|\mathcal{F}_{k-\tau}\Big]\nonumber\allowdisplaybreaks\\
    &\qquad\qquad+\frac{8\alpha_k^3}{\eta_k}\mathcal{O}\Big(c_1^3(\|\pmb{\theta}_0\|+|\lambda_0|+1)^3\cdot m^{-1/2}\Big)\nonumber\allowdisplaybreaks\\ 
    &\qquad\qquad+(1-2\eta_k\mu_2)\mathbb{E}\Big[\Big\|\hat{{\lambda}}_k\Big\|^2|\mathcal{F}_{k-\tau}\Big]\nonumber\allowdisplaybreaks\\
    &\qquad\qquad+\frac{600\alpha_k^3}{\eta_k}\mathbb{E}\Big[\left\|\hat{{\lambda}}_{k}\right\|^2|\mathcal{F}_{k-\tau}\Big].
\end{align}

Since $(k+1)^2\cdot\frac{\alpha_k^3}{\eta}=\frac{\alpha_0^3}{\eta_0}(k+1)^{1/3}$,
multiplying both sides of (\ref{eq:M_hat}) with $(k+1)^2$, we have 
\begin{align}\nonumber
    &(k+1)^2\mathbb{E}\Big[\hat{M}(\pmb{\theta}_{k+1},\lambda_{k+1})\Big|\mathcal{F}_{k-\tau}\Big]\\ \nonumber
    &\qquad\qquad{\leq} k^2\mathbb{E}\Big[\hat{M}(\pmb{\theta}_{k},\lambda_{k})\Big|\mathcal{F}_{k-\tau}\Big]+\frac{600\alpha_0^3}{\eta_0}(k+1)^{1/3}\left(\Big\|\hat{\pmb{\theta}}_k\Big\|^2+\Big\|\hat{{\lambda}}_k\Big\|^2\right)\nonumber\allowdisplaybreaks\\
    &\qquad\qquad\qquad+\frac{8\alpha_0^3}{\eta_0}(k+1)^{1/3}\mathcal{O}\Big(c_1^3(\|\pmb{\theta}_0\|+|\lambda_0|+1)^3\cdot m^{-1/2}\Big).   
   \label{eq: recursion}
\end{align}
Summing (\ref{eq: recursion}) from time step $\tau$ to time step $k$, we have 
\begin{align}\nonumber
   (k+1)^2\mathbb{E}\Big[\hat{M}(\pmb{\theta}_{k+1},\lambda_{k+1})\Big|\mathcal{F}_k\Big]
     &\leq \tau^2\mathbb{E}\Big[\hat{M}(\pmb{\theta}_\tau,\lambda_\tau)\Big]+\frac{600\alpha_0^3}{\eta_0}(k+1)^{4/3}\left(\Big\|\hat{\pmb{\theta}}_\tau\Big\|^2+\Big\|\hat{{\lambda}}_\tau\Big\|^2\right)\\
     &+\frac{8\alpha_0^3}{\eta_0}(k+1)^{4/3}\mathcal{O}\Big(c_1^3(\|\pmb{\theta}_0\|+|\lambda_0|+1)^3\cdot m^{-1/2}\Big)\nonumber\\
    &\leq \tau^2\mathbb{E}\Big[\hat{M}(\pmb{\theta}_\tau,\lambda_\tau)\Big]+\frac{600\alpha_0^3}{\eta_0}\frac{(C_1+\|\hat{\pmb{\theta}}_0\|)^2+(2C_1+\|\hat{\lambda}_0\|)^2}{(k+1)^{-4/3}}\nonumber\\
    &+\frac{8\alpha_0^3}{\eta_0}\frac{\mathcal{O}\Big(c_1^3(\|\pmb{\theta}_0\|+|\lambda_0|+1)^3\cdot m^{-1/2}\Big)}{(k+1)^{-4/3}},
\end{align}
where the second inequality holds due to Lemma \ref{lem:bounded_parameter}.
Finally, dividing both sides by $(k+1)^2$ and moving the constant term into $\mathcal{O}(\cdot)$ yields the results in Theorem \ref{thm:QW_convergence}.

\section{Auxiliary Lemmas}
In this part, we present several key lemmas which are needed for the major proofs.
We first show the parameters update in (\ref{eq:ge_lambda_2TSA}) is bounded in the following lemma. 
\begin{lemma}\label{lem:bounded_parameter}
The update of ~ $\pmb{\theta}_k$ and $\lambda_k$ in (\ref{eq:ge_lambda_2TSA}) is bounded with respect to the initial $\pmb{\theta}_0$ and $\lambda_0$, i.e., $$\|\pmb{\theta}_k-\pmb{\theta}_0\|+|\lambda_k-\lambda_0|\leq c_1(\|\pmb{\theta}_0\|+|\lambda_0|+1),$$
with $c_1$ be the constant, i.e., $c_1:=\frac{1}{2}+\frac{3}{2}(L_h^\prime\alpha_\tau+L_g^\prime\eta_\tau)(L_h^\prime\alpha_{\tau}+L_g^\prime\eta_{\tau}+1)$. 
\end{lemma}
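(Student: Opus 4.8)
\textbf{Proof proposal for Lemma~\ref{lem:bounded_parameter}.}
The plan is to prove the bound by a bootstrap/induction argument on $k$, controlling the growth of $\|\pmb{\theta}_k-\pmb{\theta}_0\|+|\lambda_k-\lambda_0|$ over one mixing-time window at a time. First I would fix $k$ and write the telescoping sums
$\pmb{\theta}_k-\pmb{\theta}_0=\sum_{j=0}^{k-1}\alpha_j h(X_j,\pmb{\theta}_j,\lambda_j)$ and $\lambda_k-\lambda_0=\sum_{j=0}^{k-1}\eta_j g(X_j,\pmb{\theta}_j,\lambda_j)$, and then bound the increments $\|h(X_j,\pmb{\theta}_j,\lambda_j)\|$ and $|g(X_j,\pmb{\theta}_j,\lambda_j)|$ in terms of $\|\pmb{\theta}_j-\pmb{\theta}_0\|$, $|\lambda_j-\lambda_0|$ and a constant depending on $\|\pmb{\theta}_0\|,|\lambda_0|$. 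Here I would use the Lipschitz-type properties of the operators (Lemmas~\ref{lem:h_lipschitz}--\ref{lem:y_lipschitz}) together with the fact that $\|\nabla_{\pmb{\theta}}f_0\|\le 1$ (and the analogous boundedness of $\nabla_{\pmb{\theta}}f$ at initialization, plus boundedness of rewards) to obtain affine-in-$(\pmb{\theta}_j-\pmb{\theta}_0,\lambda_j-\lambda_0)$ bounds of the form $\|h(X_j,\pmb{\theta}_j,\lambda_j)\|\le L_h'(\|\pmb{\theta}_j-\pmb{\theta}_0\|+|\lambda_j-\lambda_0|)+L_h'(\|\pmb{\theta}_0\|+|\lambda_0|+1)$, and similarly for $g$ with a constant $L_g'$.

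Next I would set $u_k:=\|\pmb{\theta}_k-\pmb{\theta}_0\|+|\lambda_k-\lambda_0|$ and derive the recursion
$u_{k+1}\le u_k+(L_h'\alpha_k+L_g'\eta_k)\,u_k+(L_h'\alpha_k+L_g'\eta_k)(\|\pmb{\theta}_0\|+|\lambda_0|+1)$.
Because $\alpha_k=\alpha_0/(k+1)$ and $\eta_k=\eta_0/(k+1)^{4/3}$ are summable-in-the-right-sense only over a window, I would restrict attention to $k\le\tau$ (or one window of length $\tau$ at a time) where $\sum_{j<\tau}(L_h'\alpha_j+L_g'\eta_j)$ is dominated by $(L_h'\alpha_\tau+L_g'\eta_\tau)\cdot$(something), since the step sizes are largest at the start of the window; more carefully, for $j$ in a window $[\tau,2\tau]$ etc. one uses that $\alpha_j\le\alpha_\tau$ and $\eta_j\le\eta_\tau$. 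Iterating the recursion over such a window and using $1+x\le e^x$ gives
$u_k\le e^{\,\tfrac32(L_h'\alpha_\tau+L_g'\eta_\tau)}\big(u_\text{start}+(L_h'\alpha_\tau+L_g'\eta_\tau)(\|\pmb{\theta}_0\|+|\lambda_0|+1)\big)$,
and choosing $\tau$ large enough (equivalently the effective step sizes small enough) that $\tfrac32(L_h'\alpha_\tau+L_g'\eta_\tau)\le\log(3/2)$ collapses the exponential into the explicit constant $c_1=\tfrac12+\tfrac32(L_h'\alpha_\tau+L_g'\eta_\tau)(L_h'\alpha_\tau+L_g'\eta_\tau+1)$ claimed in the statement, starting from $u_0=0$.

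The main obstacle I anticipate is obtaining the affine increment bound with the correct constants without circularity: the operator $h$ involves $\max_a f(\pmb{\theta};\pmb{\phi}(s',a))$ and the relative-offset term $I(\pmb{\theta})$, and at a general $\pmb{\theta}_k$ (not the linearization $f_0$) the gradient norm bound and the Lipschitz estimates must be established for the \emph{true} network $f$ near initialization, which is where the over-parameterization and the fixed top layer $b_r\in\{-1,1\}$ enter. I would handle this by noting $\|\nabla_{\pmb{\theta}}f(\pmb{\theta};\pmb{\phi}(s,a))\|\le 1$ holds for \emph{all} $\pmb{\theta}$ (not just at initialization) because of the ReLU structure and $\|\pmb{\phi}\|\le 1$, so the Lipschitz constants $L_{h,1}=3,L_{h,2}=1$ (Lemma~\ref{lem:h_lipschitz}-type) and $L_g=2$ transfer verbatim, and then the "constant part" of the bound is controlled by $\|h(X,\pmb{\theta}_0,\lambda_0)\|\le L(\ldots)$ as already used in Lemma~\ref{lemma:mixing_time}. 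A secondary subtlety is that the recursion must be run not just up to $\tau$ but the bound is invoked at $k=\tau$ in the proof of Theorem~\ref{thm:QW_convergence} via $\|\hat{\pmb{\theta}}_\tau\|\le C_1+\|\hat{\pmb{\theta}}_0\|$ and $\|\hat{\lambda}_\tau\|\le 2C_1+\|\hat{\lambda}_0\|$; I would verify that the triangle inequality $\|\pmb{\theta}_\tau-\pmb{\theta}_0^*\|\le\|\pmb{\theta}_\tau-\pmb{\theta}_0\|+\|\pmb{\theta}_0-\pmb{\theta}_0^*\|$ together with a bound $\|\pmb{\theta}_0-\pmb{\theta}_0^*\|=\mathcal{O}(1)$ and the lemma's conclusion yields exactly these forms with $C_1=c_1(\|\pmb{\theta}_0\|+\|\lambda_0\|+1)$.
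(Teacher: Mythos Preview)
Your proposal is essentially the same argument as the paper's: both derive an affine one-step bound of the form $\|h(X,\pmb{\theta},\lambda)\|\le L_h'(\|\pmb{\theta}\|+|\lambda|+1)$ (and analogously for $g$) from the Lipschitz lemmas, feed it into a Gronwall-type recursion, iterate using $1+x\le e^x$, and close the argument with a smallness condition of the type $(L_h'\alpha_0+L_g'\eta_0)\tau\le 1/4$. The paper tracks $\|\pmb{\theta}_k\|+|\lambda_k|+1$ rather than your $u_k=\|\pmb{\theta}_k-\pmb{\theta}_0\|+|\lambda_k-\lambda_0|$ and splits the time axis once into $[0,\tau]$ and $[\tau,k]$ rather than repeated windows, but these are cosmetic reorganizations of the same computation; your observation that $\|\nabla_{\pmb{\theta}} f(\pmb{\theta};\pmb{\phi})\|\le 1$ for \emph{all} $\pmb{\theta}$ (so the affine bound transfers from $h_0$ to the true $h$) is exactly the implicit step the paper uses when it applies the $h_0$-bound~(\ref{eq:hg_lip}) to $h$.
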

\begin{proof}

Without loss of generality, we assume that $$L_h^\prime\geq\max(3,\max_{X\in\cX}\|h_0(X,0,0)\|),~ L_g^\prime\geq\max(2,\max_{X\in\cX}\|g_0(X,0,0)\|).$$  Then based on triangular inequality and Lemmas \ref{lem:h_lipschitz}-\ref{lem:g_lipschitz},  we have
\begin{align}\label{eq:hg_lip}
    \|h_0(X,\pmb{\theta},{\lambda})\|\leq L_h^\prime(\|\pmb{\theta}\|+|{\lambda}|+1), ~\|g_0(X,\pmb{\theta},{\lambda})\|\leq L_g^\prime(\|\pmb{\theta}\|+|{\lambda}|+1), \forall \pmb{\theta}, {\lambda}, X\in\cX.  
\end{align}

Since we have $\pmb{\theta}_{k+1}=\pmb{\theta}_{k}+\alpha_kh(X_k,\pmb{\theta}_k, \lambda_k)$, we have the following inequality due to Lipschitz continuity of $h$ in (\ref{eq:hg_lip})
\begin{align}\label{eq:theta}
    \|\pmb{\theta}_{k+1}-\pmb{\theta}_{k}\|=\alpha_k\|h(X_k,\pmb{\theta}_k, \lambda_k)\|\leq \alpha_kL_h^\prime(\|\pmb{\theta}_k\|+|\lambda_k|+1).
\end{align}
Similarly, we have
\begin{align}\label{eq:lambda}
    |\lambda_{k+1}-\lambda_k|=\eta_k|g(X_k,\pmb{\theta}_k, \lambda_k)|\leq \eta_kL_g^\prime(\|\pmb{\theta}_k\|+|\lambda_k|+1).
\end{align}
Due to triangular inequality, adding (\ref{eq:theta}) and (\ref{eq:lambda}) leads to
\begin{align}\nonumber
    \|\pmb{\theta}_{k+1}\|+|\lambda_{k+1}|+1&\leq (L_h^\prime\alpha_k+L_g^\prime\eta_k+1)(\|\pmb{\theta}_k\|+|\lambda_k|+1)\allowdisplaybreaks\\
    &\leq(L_h^\prime\alpha_0+L_g^\prime\eta_0+1)(\|\pmb{\theta}_k\|+|\lambda_k|+1),\label{eq:theta+lambda}
\end{align}
where the second inequality holds due to the non-increasing learning rates $\{\alpha_k, \eta_k\}$. Rewriting the above inequality in (\ref{eq:theta+lambda}) in a recursive manner yields
\begin{align}
  \|\pmb{\theta}_{k}\|+\lambda_{k}+1 \leq  (L_h^\prime\alpha_0+L_g^\prime\eta_0+1)^{k-\tau}(\|\pmb{\theta}_\tau\|+|\lambda_\tau|+1).
\end{align}
Hence, we have
\begin{align*}
    \|\pmb{\theta}_k-\pmb{\theta}_{k-\tau}\|+|\lambda_k-\lambda_{k-\tau}|&\leq \sum_{t=k-\tau}^{k-1}\|\pmb{\theta}_{t+1}-\pmb{\theta}_{t}\|+|\lambda_{t+1}-\lambda_t|\allowdisplaybreaks\\
    &\leq (L_h^\prime\alpha_0+L_g^\prime\eta_0)\sum_{t=k-\tau}^{k-1}(\|\pmb{\theta}_t\|+|\lambda_t|+1)\allowdisplaybreaks\\
    &\leq (L_h^\prime\alpha_0+L_g^\prime\eta_0)(\|\pmb{\theta}_{k-\tau}\|+|\lambda_{k-\tau}|+1)\sum_{t=k-\tau}^{k-1}(L_h^\prime\alpha_0+L_g^\prime\eta_0+1)^{t-\tau}\allowdisplaybreaks\\
    &=[(L_h^\prime\alpha_0+L_g^\prime\eta_0+1)^{\tau}-1](\|\pmb{\theta}_{k-\tau}\|+|\lambda_{k-\tau}|+1)\allowdisplaybreaks\\
    &\leq (e^{(L_h^\prime\alpha_0+L_g^\prime\eta_0)\tau}-1)(\|\pmb{\theta}_{k-\tau}\|+|\lambda_{k-\tau}|+1)\allowdisplaybreaks\\
    &\leq 2(L_h^\prime\alpha_0+L_g^\prime\eta_0)\tau(\|\pmb{\theta}_{k-\tau}\|+|\lambda_{k-\tau}|+1),
\end{align*}
where the last inequality holds when 
$(L_h^\prime\alpha_0+L_g^\prime\eta_0)\tau\leq 1/4$. This implies when $k=\tau$, we have
\begin{align}
  \|\pmb{\theta}_\tau-\pmb{\theta}_{0}\|+|\lambda_\tau-\lambda_{0}|\leq 2(L_h^\prime\alpha_0+L_g^\prime\eta_0)\tau (\|\pmb{\theta}_{0}\|+|\lambda_{0}|+1).
\end{align}

Similarly, we also have
\begin{align}
    \|\pmb{\theta}_k-\pmb{\theta}_{\tau}\|+\lambda_k-\lambda_\tau&\leq \sum_{t=\tau}^{k-1}\|\pmb{\theta}_{t+1}-\pmb{\theta}_{t}\|+\lambda_{t+1}-\lambda_t\nonumber\allowdisplaybreaks\\
    &\leq \sum_{t=\tau}^{k-1}(L_h^\prime\alpha_t+L_g^\prime\eta_t)(\|\pmb{\theta}_t\|+\lambda_t+1)\nonumber\allowdisplaybreaks\\
    &\leq (\|\pmb{\theta}_\tau\|+\lambda_\tau+1)\sum_{t=\tau}^{k-1}(L_h^\prime\alpha_t+L_g^\prime\eta_t)\prod_{i=0}^{t-\tau}(L_h^\prime\alpha_{\tau+i}+L_g^\prime\eta_{\tau+i}+1).
\end{align}

% \begin{align}
%     \|\pmb{\theta}_k-\pmb{\theta}_{\tau}\|+|\lambda_k-\lambda_\tau|\leq 2(L_h^\prime\alpha_0+L_g^\prime\eta_0)(k-\tau)(\|\pmb{\theta}_\tau\|+|\lambda_\tau|+1)
% \end{align}
Therefore, the following inequality holds
\begin{align*}
    &\|\pmb{\theta}_k-\pmb{\theta}_0\|+|\lambda_k-\lambda_0|\nonumber\allowdisplaybreaks\\
    &\leq \|\pmb{\theta}_k-\pmb{\theta}_{\tau}\|+|\lambda_k-\lambda_\tau|+ \|\pmb{\theta}_\tau-\pmb{\theta}_{0}\|+|\lambda_\tau-\lambda_{0}|\nonumber\allowdisplaybreaks\\
    &\leq 2(L_h^\prime\alpha_0+L_g^\prime\eta_0)\tau (\|\pmb{\theta}_{0}\|+|\lambda_{0}|+1)\nonumber\allowdisplaybreaks\\
    &\qquad+(\|\pmb{\theta}_\tau\|+\lambda_\tau+1)\sum_{t=\tau}^{k-1}(L_h^\prime\alpha_t+L_g^\prime\eta_t)\prod_{i=0}^{t-\tau}(L_h^\prime\alpha_{\tau+i}+L_g^\prime\eta_{\tau+i}+1)\nonumber\allowdisplaybreaks\\
    &\leq 2(L_h^\prime\alpha_0+L_g^\prime\eta_0)\tau (\|\pmb{\theta}_{0}\|+|\lambda_{0}|+1)\nonumber\allowdisplaybreaks\\
    &\qquad+(2(L_h^\prime\alpha_0+L_g^\prime\eta_0)\tau +1)\sum_{t=\tau}^{k-1}(L_h^\prime\alpha_t+L_g^\prime\eta_t)\prod_{i=0}^{t-\tau}(L_h^\prime\alpha_{\tau+i}+L_g^\prime\eta_{\tau+i}+1)(\|\pmb{\theta}_{0}\|+|\lambda_{0}|+1)\nonumber\allowdisplaybreaks\\
    &\leq \left(\frac{1}{2}+\frac{3}{2}\sum_{t=\tau}^{k-1}(L_h^\prime\alpha_t+L_g^\prime\eta_t)\prod_{i=0}^{t-\tau}(L_h^\prime\alpha_{\tau+i}+L_g^\prime\eta_{\tau+i}+1)\right)(\|\pmb{\theta}_{0}\|+|\lambda_{0}|+1),
\end{align*}
with the last equality holds when $(L_h^\prime\alpha_0+L_g^\prime\eta_0)\tau\leq 1/4$. When $\sum_{t=\tau}^{k-1}(L_h^\prime\alpha_t+L_g^\prime\eta_t)\prod_{i=0}^{t-\tau}(L_h^\prime\alpha_{\tau+i}+L_g^\prime\eta_{\tau+i}+1)$ is non-increasing with $k$, then we can set $c_1$ as 
$c_1:=\frac{1}{2}+\frac{3}{2}.(L_h^\prime\alpha_\tau+L_g^\prime\eta_\tau)(L_h^\prime\alpha_{\tau}+L_g^\prime\eta_{\tau}+1)$.
This completes the proof. 
\end{proof}

Provided Lemma \ref{lem:bounded_parameter}, we have the following lemma related with local linearization of Q functions and the original Q functions. 
\begin{lemma}[Lemma 5.2 in \cite{cai2023neural}]\label{lemma:approximation-gap-of-h}
There exists a constant $c_1$such that 
\begin{align*}
    \mathbb{E}\Big[\|h(X_k,\pmb{\theta}_k,\lambda_k)-h_0(X_k,\pmb{\theta}_k,\lambda_k)\|^2|\mathcal{F}_{k-\tau}\Big]\leq \mathcal{O}\Big(c_1^3(\|\pmb{\theta}_0\|+|\lambda_0|+1)^3\cdot m^{-1/2}\Big).
\end{align*}
\end{lemma}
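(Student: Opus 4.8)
The plan is to derive this as a consequence of the local-linearization error bound \cite[Lemma 5.2]{cai2023neural}, with the radius of the ball around the initialization supplied by Lemma~\ref{lem:bounded_parameter} rather than by an a priori projection. First I would use Lemma~\ref{lem:bounded_parameter} to fix, on $\mathcal{F}_{k-\tau}$, the deterministic radius $R:=c_1(\|\pmb{\theta}_0\|+|\lambda_0|+1)$ with $\|\pmb{\theta}_k-\pmb{\theta}_0\|+|\lambda_k-\lambda_0|\le R$, so that $\|\pmb{\theta}_k\|\le\|\pmb{\theta}_0\|+R$ and $|\lambda_k|\le|\lambda_0|+R$; combining this with $\|\pmb{\phi}(s,a)\|\le 1$, the finiteness of the rewards, and the elementary bounds $|f(\pmb{\theta};\pmb{\phi}(s,a))|\le\|\pmb{\theta}\|$, $|f_0(\pmb{\theta};\pmb{\phi}(s,a))|\le\|\pmb{\theta}\|$ yields a deterministic estimate $|\Delta_k|\vee|\Delta_{k,0}|\le P(R)$ for an affine $P$ with $P(R)=\mathcal{O}(\|\pmb{\theta}_0\|+|\lambda_0|+1)$. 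Then I would split
\[
h(X_k,\pmb{\theta}_k,\lambda_k)-h_0(X_k,\pmb{\theta}_k,\lambda_k)
=\big(\nabla_{\pmb{\theta}}f(\pmb{\theta}_k;\pmb{\phi}(S_k,A_k))-\nabla_{\pmb{\theta}}f_0(\pmb{\theta}_k;\pmb{\phi}(S_k,A_k))\big)\Delta_k
+\nabla_{\pmb{\theta}}f_0(\pmb{\theta}_k;\pmb{\phi}(S_k,A_k))\big(\Delta_k-\Delta_{k,0}\big)
\]
and bound $\|h-h_0\|^2\le 2P(R)^2\|\nabla_{\pmb{\theta}}f-\nabla_{\pmb{\theta}}f_0\|^2+2\|\Delta_k-\Delta_{k,0}\|^2$, using $\|\nabla_{\pmb{\theta}}f_0\|\le 1$ as established inside the proof of Lemma~\ref{lem:h_lipschitz}.

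For the gradient term, the $r$-th block of $\nabla_{\pmb{\theta}}f-\nabla_{\pmb{\theta}}f_0$ is $\tfrac{1}{\sqrt m}b_r\big(\mathds{1}\{\mathbf{w}_{r,k}^\intercal\pmb{\phi}>0\}-\mathds{1}\{\mathbf{w}_{r,0}^\intercal\pmb{\phi}>0\}\big)\pmb{\phi}$, which is nonzero only if neuron $r$ switches its ReLU pattern, i.e. only if $|\mathbf{w}_{r,0}^\intercal\pmb{\phi}|\le\|\mathbf{w}_{r,k}-\mathbf{w}_{r,0}\|$; hence $\|\nabla_{\pmb{\theta}}f-\nabla_{\pmb{\theta}}f_0\|^2\le m^{-1}N_{\mathrm{flip}}$, where $N_{\mathrm{flip}}$ counts such neurons. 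Since $\mathbf{w}_{r,0}\sim\mathcal{N}(\pmb{0},\mathbf{I}_d/d)$, the scalar $\mathbf{w}_{r,0}^\intercal\pmb{\phi}$ is Gaussian of variance at most $1/d$, so Gaussian anti-concentration gives $\mathbb{P}(|\mathbf{w}_{r,0}^\intercal\pmb{\phi}|\le t)\le c\sqrt d\,t$; summing over $r$ and using $\sum_r\|\mathbf{w}_{r,k}-\mathbf{w}_{r,0}\|\le\sqrt m\,R$ (Cauchy--Schwarz applied to $\|\pmb{\theta}_k-\pmb{\theta}_0\|\le R$) gives $\mathbb{E}[N_{\mathrm{flip}}\mid\mathcal{F}_{k-\tau}]\le c\sqrt d\,\sqrt m\,R$, hence $\mathbb{E}[\|\nabla_{\pmb{\theta}}f-\nabla_{\pmb{\theta}}f_0\|^2\mid\mathcal{F}_{k-\tau}]=\mathcal{O}(\sqrt d\,R\,m^{-1/2})$. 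The same bound holds uniformly over the ball of radius $R$ (this is precisely what \cite{cai2023neural} establishes via a covering argument), so the data-dependence of $\pmb{\theta}_k$ is harmless.

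For the TD-error term, $\Delta_k-\Delta_{k,0}=(I_0(\pmb{\theta}_k)-I(\pmb{\theta}_k))+(\max_a f(\pmb{\theta}_k;\pmb{\phi}(S_{k+1},a))-\max_a f_0(\pmb{\theta}_k;\pmb{\phi}(S_{k+1},a)))-(f(\pmb{\theta}_k;\pmb{\phi}(S_k,A_k))-f_0(\pmb{\theta}_k;\pmb{\phi}(S_k,A_k)))$, and each of the three summands is at most $\max_{s,a}|f(\pmb{\theta}_k;\pmb{\phi}(s,a))-f_0(\pmb{\theta}_k;\pmb{\phi}(s,a))|$ in absolute value, since $|\max_a u_a-\max_a v_a|\le\max_a|u_a-v_a|$ and $I,I_0$ are averages, so $|\Delta_k-\Delta_{k,0}|\le 3\max_{s,a}|f-f_0|$. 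On the non-switching neurons $f$ and $f_0$ coincide, whence $|f-f_0|\le\tfrac{2}{\sqrt m}\sum_{r:\,\mathrm{flip}}\|\mathbf{w}_{r,k}-\mathbf{w}_{r,0}\|\le\tfrac{2R}{\sqrt m}\sqrt{N_{\mathrm{flip}}}$, and therefore $\mathbb{E}[\max_{s,a}|f-f_0|^2\mid\mathcal{F}_{k-\tau}]\le SA\cdot\tfrac{4R^2}{m}\mathbb{E}[N_{\mathrm{flip}}\mid\mathcal{F}_{k-\tau}]=\mathcal{O}(\sqrt d\,R^3 m^{-1/2})$. Assembling the two terms, inserting $R=c_1(\|\pmb{\theta}_0\|+|\lambda_0|+1)$, absorbing $S$, $A$, $\sqrt d$ and numerical constants into $\mathcal{O}(\cdot)$, and using $c_1\ge\tfrac12$ and $\|\pmb{\theta}_0\|+|\lambda_0|+1\ge 1$ produces the stated $\mathcal{O}\big(c_1^3(\|\pmb{\theta}_0\|+|\lambda_0|+1)^3 m^{-1/2}\big)$ bound (the expectation over the random initialization being implicit, as in \cite{cai2023neural}).

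The main obstacle is the activation-switch count: one must bound, uniformly over the $R$-ball around initialization, the number of neurons whose ReLU activation pattern differs from its value at $\pmb{\theta}_0$, which forces the combination of Gaussian anti-concentration for $\mathbf{w}_{r,0}^\intercal\pmb{\phi}$ with the Cauchy--Schwarz step that converts the $\ell_1$-norm of the weight perturbation into a $\sqrt m$ factor --- this is where the $m^{-1/2}$ rate originates. Once \cite[Lemma 5.2]{cai2023neural} is invoked the remaining work is bookkeeping; the only algorithm-specific care is (i) using the data-dependent radius from Lemma~\ref{lem:bounded_parameter} in place of a projection radius, and (ii) carrying the extra relative-reward term $I(\pmb{\theta}_k)$ and subsidy term $(1-A_k)\lambda_k$ through the decomposition, both of which only generate contributions already dominated by $\max_{s,a}|f-f_0|$ together with the boundedness of $\lambda_k$.
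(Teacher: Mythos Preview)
Your proposal is correct and, in fact, goes well beyond what the paper itself does. The paper does not prove this lemma at all: it simply records it as ``Lemma 5.2 in \cite{cai2023neural}'' and writes ``For interested readers, please refer to \cite{cai2023neural} for detailed proofs of this lemma,'' with the only adaptation being that the ball radius is supplied by Lemma~\ref{lem:bounded_parameter} rather than by an explicit projection. Your sketch reconstructs the actual content of the cited result --- the decomposition $h-h_0=(\nabla f-\nabla f_0)\Delta_k+\nabla f_0(\Delta_k-\Delta_{k,0})$, the activation-switch count controlled by Gaussian anti-concentration of $\mathbf{w}_{r,0}^\intercal\pmb{\phi}$, and the Cauchy--Schwarz step that converts the $\ell_2$ perturbation of $\pmb{\theta}$ into the $\sqrt m$ factor producing the $m^{-1/2}$ rate --- and correctly flags that the data-dependence of $\pmb{\theta}_k$ forces the anti-concentration bound to hold uniformly over the $R$-ball, which is exactly the role of the covering/uniform argument in \cite{cai2023neural}. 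The algorithm-specific points you list (the relative-reward term $I(\pmb{\theta}_k)$ and the subsidy $(1-A_k)\lambda_k$) are indeed the only places where this instance differs from the vanilla neural-Q setting, and your handling of them via $\max_{s,a}|f-f_0|$ and the boundedness of $\lambda_k$ is sound. In short: same approach as the paper (invoke \cite{cai2023neural} with the radius from Lemma~\ref{lem:bounded_parameter}), but you have actually unpacked what is being invoked.
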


Lemma \ref{lemma:approximation-gap-of-h} indicates that if the updated parameter is always bounded in a ball with the initialized one as the center and a fixed radius, the local linearized function $f_0(\cdot)$ in (\ref{eq:neural_Q_linear}) and the original neural network approximated function $f(\cdot)$ in (\ref{eq:neural_Q}) have bounded gap, which tends to be zero as the width of hidden layer $m$ grow large. For interested readers, please refer to \cite{cai2023neural} for detailed proofs of this lemma.

\end{document}